\def\eqref#1{equation~\ref{#1}}
\def\1{\bm{1}}
\def\eps{{\epsilon}}
\DeclareMathAlphabet{\mathsfit}{\encodingdefault}{\sfdefault}{m}{sl}
\SetMathAlphabet{\mathsfit}{bold}{\encodingdefault}{\sfdefault}{bx}{n}
\def\gD{{\mathcal{D}}}
\def\gF{{\mathcal{F}}}
\def\gI{{\mathcal{I}}}
\def\gN{{\mathcal{N}}}
\def\gV{{\mathcal{V}}}
\def\gX{{\mathcal{X}}}
\def\gY{{\mathcal{Y}}}
\def\gZ{{\mathcal{Z}}}
\newcommand{\R}{\mathbb{R}}
\newcommand{\softmax}{\mathsf{softmax}}
\DeclareMathOperator{\sign}{sign}
\newtheorem{theorem}{Theorem}[section]
\newtheorem{proposition}[theorem]{Proposition}
\newtheorem{corollary}[theorem]{Corollary}
\newtheorem{lemma}[theorem]{Lemma}
\newtheorem{definition}[theorem]{Definition}
\newtheorem{assumption}[theorem]{Assumption}
\renewcommand{\hat}{\widehat}
\newcommand{\RR}{\mathbb{R}}
\renewcommand{\epsilon}{\varepsilon}
\newcommand{\pa}[1]{\left(#1\right)}
\newcommand{\bra}[1]{\left[#1\right]}
\newcommand{\norm}[1]{\left\|#1\right\|}
\newcommand{\CLS}{\textup{\texttt{[CLS]}}}
\newcommand{\tfHead}{\mathsf{tf{\text-}head}}
\newcommand{\tfLayer}{\mathsf{tf{\text-}layer}}
\newcommand{\tfScalar}{\mathsf{tf{\text-}scalar}}
\newcommand{\mlp}{\mathsf{mlp}}
\newcommand{\tfmlp}{\mathsf{tf+mlp}}
\newcommand{\tfMulti}{{\mathsf{tf}\text{-}\mathsf{heads}}}
\newcommand{\tfHeadUnbounded}{{\mathsf{tf}\text{-}\mathsf{head}\text{-}\mathsf{unbounded}}}
\title{Inductive Biases and Variable Creation\\
in Self-Attention Mechanisms}
\author{Benjamin L. Edelman$^1$ \qquad Surbhi Goel$^2$\qquad Sham Kakade$^{1,2}$\qquad Cyril Zhang$^2$\\
\vspace{-2mm} \\
  \normalsize{$^1$Harvard University\quad $^2$Microsoft Research NYC}\\
\normalsize{ \texttt{bedelman@g.harvard.edu, \{goel.surbhi, sham.kakade, cyrilzhang\}@microsoft.com} } }
\date{}
\begin{document}

\maketitle
\begin{abstract}
Self-attention, an architectural motif designed to model long-range interactions in sequential data, has driven numerous recent breakthroughs in natural language processing and beyond. This work provides a theoretical analysis of the inductive biases of self-attention modules. Our focus is to rigorously establish which functions and long-range dependencies self-attention blocks prefer to represent. Our main result shows that bounded-norm Transformer networks ``create sparse variables'': a single self-attention head can represent a sparse function of the input sequence, with sample complexity scaling only logarithmically with the context length. To support our analysis, we present synthetic experiments to probe the sample complexity of learning sparse Boolean functions with Transformers.
\end{abstract}
\section{Introduction}

Self-attention mechanisms have comprised an era-defining cornerstone of deep learning in recent years, appearing ubiquitously in
empirical breakthroughs in generative sequence modeling and unsupervised
representation learning. 
Starting with natural language
\citep{vaswani2017attention}, self-attention has enjoyed
surprising empirical successes in numerous and diverse modalities of data. 
In many of these settings, self-attention has supplanted
traditional recurrent and convolutional architectures, which are
understood to incorporate inductive biases about temporal and
translational invariances in the data. Self-attention models discard
these functional forms, in favor of directly and globally modeling
long-range interactions within the input context.\looseness=-1

The proliferation of self-attention raises a fundamental question about its inductive biases:
\emph{which functions do self-attention networks prefer to represent?}
Various intuitions and empirics inform the design of these architectures, but formal statistical abstractions and analyses are missing in this space.
To this end, this work initiates an analysis of the statistical
foundations of self-attention.

We identify an inductive bias for self-attention, for which we coin the term \emph{sparse variable creation}: a bounded-norm self-attention head learns a sparse function (which only depends on a small subset of input coordinates, such as a constant-fan-in gate in a Boolean circuit) of a length-$T$ context, with sample complexity scaling as $\log(T)$.
The main technical novelty in this work is a covering number-based capacity bound for attention mechanisms (including Transformer heads, as well as related and future architectures), implying norm-based generalization bounds. This is accompanied by a matching representational result, showing that bounded-norm self-attention heads are indeed capable of representing $s$-sparse functions with weight norms $2^{O(s)}$
(or $\mathrm{poly}(s)$, for symmetric sparse functions). This provides a theoretical account for why attention models can learn long-range dependencies without overfitting.

Finally, we conduct synthetic experiments to probe the sample efficiency of learning sparse interactions with self-attention. We train Transformer models to identify sparse Boolean functions with randomly chosen indices, and corroborate the sample complexity scaling law predicted by the theory. A variant of this experiment (with i.i.d. samples) reveals a \emph{computational} mystery, beyond the scope of our current statistical analysis: we find that Transformers can successfully learn the ``hardest'' (in the sense of SQ-dimension) $s$-sparse functions: the XOR (parity) functions.

\subsection{Related work}

The direct precursors to modern self-attention architectures were
recurrent and convolutional networks augmented with attention
mechanisms \citep{bahdanau2014neural,luong2015effective,xu2015show}.
Landmark work by \citet{vaswani2017attention} demonstrated
significantly improvements in machine translation via a pure self-attention
architecture; autoregressive language models
\citep{liu2018generating,radford2018improving,radford2019language,brown2020language},
and self-supervised representation learning via masked language modeling
\citep{devlin2018bert} followed shortly.

\paragraph{Norm-based capacity bounds for neural nets.} There is a vast body of
literature dedicated to establishing statistical guarantees for neural
networks, including VC-dimension and shattering bounds (dating back to \cite{anthony1999neural}). In recent years, classical norm-based generalization bounds have been established for various architectures \citep{bartlett2017spectrally, neyshabur2015norm, neyshabur2017pac, golowich2018size, long2019generalization,chen2019generalization} using covering-based arguments. \citet{jiang2019fantastic} provide an extensive empirical study of how well these bounds predict generalization in practice. Our work complements these results by establishing the first norm-based capacity analysis for attention models. Our main results rely on a novel reduction to the $\ell_\infty$ covering number bound for linear function classes given by \cite{zhang2002covering}.

\paragraph{Other theoretical lenses on attention.} Our work complements various
existing theoretical perspectives on attention-based models.
\citet{vuckovic2020mathematical} formulate a dynamical system
abstraction of attention layers, arriving at similar Lipschitz
constant calculations to ours (which are coarser-grained, since they
focus on contractivity and stability rather than finite-sample
statistical guarantees).
\citet{zhang2019adaptive,snell2021approximating} study idealizations of the optimization problem of learning self-attention heads.
\citet{wei2021statistically} propose a definition of \emph{statistically meaningful approximation} of function classes that ties statistical learnability with expressivity, and show that Boolean circuits can be \emph{SM-approximated} by Transformers with a sample complexity bound that depends mildly on circuit depth (rather than context size), using a margin amplification procedure. \citet{kim2021lipschitz} show that standard dot-product attention is not Lipschitz for an \emph{unbounded} input domain, whereas our paper shows that norm-based generalization bounds are attainable with a $\norm{\cdot}_{2,\infty}$-bounded input domain.

See Appendix~\ref{appendix-sec:related} for a broader survey of the literature on attention and self-attention networks.
\section{Background and notation}

Throughout this paper, the input $X := [x_1 x_2 \dots x_T]^\top \in \RR^{T \times d}$ to an attention module (a.k.a. the context) will be a length-$T$ sequence of embeddings $x_t \in \RR^d$; $m$ refers to the sample size (i.e. number of length-$T$ sequences in a dataset). $\|\cdot\|_2$ denotes the spectral norm for matrices, and $\|\cdot\|_{p,q}$ denotes the $(p,q)$ matrix norm where the $p$-norm is over columns and $q$-norm over rows. For vectors, $\| \cdot\|_p$ denotes the $\ell_p$ norm; we drop the subscript for the $\ell_2$ norm. $B$ is generally used to quantify bounds on norms of matrices and $L$ for Lipschitz constants. $\Delta^{n-1}$ denotes the simplex in dimension $n$, that is, $\Delta^{n-1} := \{x \in \RR^n: x \ge 0, \|x\|_1 = 1\}$.

\paragraph{Covering numbers.}
Our main technical contribution is a generalization bound arising from carefully counting the number of functions representable by a Transformer. The main technical ingredient is the notion of a covering number.
We will use the following definition of $\infty$-norm covering number adapted from \cite{zhang2002covering}:
\begin{definition}
[Covering number] For a given class of vector-valued functions $\mathcal{F}$, the covering number 
$\mathcal{N}_\infty(\mathcal{F}; \epsilon; \{z^{(i)}\}_{i=1}^m; \| \cdot \|)$ 
is the smallest size of a collection (a cover) $\mathcal{C} \subset \mathcal{F}$ such that $\forall f \in \mathcal{F}, \exists \hat{f} \in \mathcal{C}$ satisfying
\[\max_i \|f(z^{(i)}) - \hat{f}(z^{(i)})\| \leq \epsilon.\] 
Further, define
\begin{align*}
\mathcal{N}_\infty(\mathcal{F}, \epsilon, m, \| \cdot \|) =
\sup_{z^{(1)} \dots z^{(m)}} \mathcal{N}_\infty(\mathcal{F}; \epsilon; z^{(1)}, \dots, z^{(m)}, \| \cdot \|).
\end{align*}
\end{definition}
If $\mathcal{F}$ is real-valued (instead of vector-valued), we drop the norm from the notation. Furthermore for functions parameterized by a set of parameters $\Theta$, we exploit the notation to replace $\mathcal{F}$ by $\Theta$.

Recall that for the class of linear functions, \[\mathcal{F}_{\textrm{lin}}=\{ x \mapsto w\cdot x \ : \ w\in\RR^d, \|w\|_2 \leq B_W\},\]
we have the covering number bound \citep{zhang2002covering} of
\[\mathcal{N}_\infty(\mathcal{F}; \epsilon; \{x^{(i)}\}_{i=1}^m) \leq O\left(\frac{B_X^2B_W^2}{\epsilon^2} \cdot \log\left(\frac{B_XB_W m}{ \epsilon} \right) \right),\]
where $\|x^{(i)}\|\leq B_X$ for $i \in [m]$. Importantly, note that the covering number has a mild dependence on $m$, only logarithmic; this logarithmic dependence on $m$ will be helpful when we turn our analysis to the capacity of attention mechanisms.
\newcommand{\risk}{\mathsf{risk}}

\begin{figure*}
    \centering
    \includegraphics[width=0.95\linewidth]{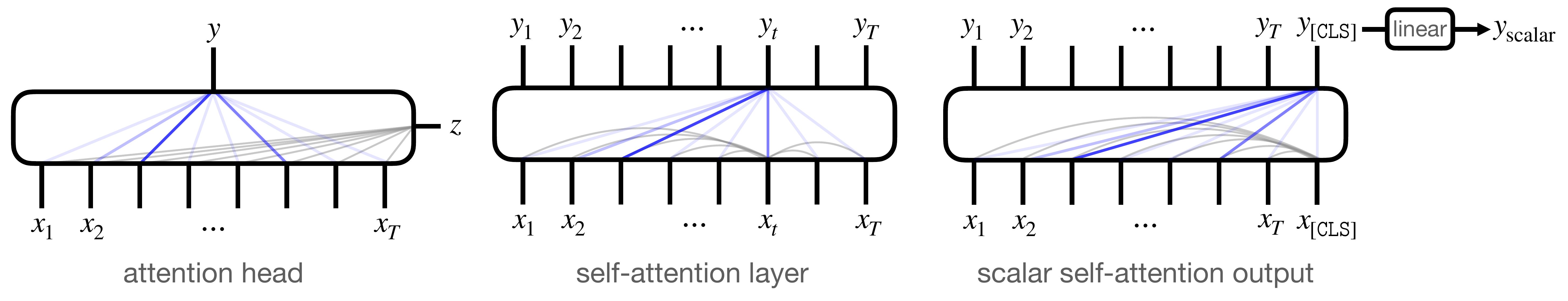}
    \vspace{-2mm}
    \caption{Diagrams of attention modules $f_{\tfHead}, f_{\tfLayer}, f_{\tfScalar}$ described in Section~\ref{subsec:prelims-attn}: alignment scores (grey edges) determine normalized attention weights (blue), which are used to mix the inputs $x_{1:T}$. \emph{Left:} Attention with a general context $z$. \emph{Center:} Self-attention layer, where both the input and the context come from $x_{1:T}$. \emph{Right:} Auxiliary $\CLS$ token to extract a single scalar from a self-attention layer, providing a real-valued function class for classification or regression tasks.}
    \label{fig:attn-diagram}
\end{figure*}
\paragraph{Generalization bounds.} This work focuses on providing log-covering number bounds, which imply uniform generalization bounds via standard arguments. The following lemma relates these quantities; we refer the reader to Appendix~\ref{sec:rad_bounds} for a formal review.
\begin{lemma}[Generalization bound via covering number; informal] \label{lem:cov-gen}
Suppose $\mathcal{F}$ is a class of bounded functions, and $\log \mathcal{N}_\infty(\mathcal{F}; \epsilon; x^{(1)}, \dots, x^{(m)}) \le C_\mathcal{F}/\eps^2$ for all $x^{(1)}, \ldots, x^{(m)} \in \mathcal{X}^m$. Then for any $\delta > 0$, with probability at least $1 - \delta$, simultaneously for all $f \in \mathcal{F}$, the generalization error $\eps_\mathrm{gen}$ satisfies
\begin{align*}
	& \eps_\mathrm{gen}(f) \le  \tilde O\left(\sqrt{\frac{C_\mathcal{F}}{m}} + \sqrt{\frac{\log(1/\delta)}{m}} \right).
\end{align*}
\end{lemma}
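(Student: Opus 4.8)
The plan is to derive Lemma~\ref{lem:cov-gen} by the standard chain: bounded empirical Rademacher complexity via a Dudley-type entropy integral over the $\infty$-norm covering numbers, followed by a uniform-convergence bound linking Rademacher complexity to generalization error. Concretely, first I would fix the $m$ sample points $x^{(1)},\dots,x^{(m)}$ and consider the empirical Rademacher complexity $\hat{\mathfrak{R}}_m(\mathcal{F}) = \frac{1}{m}\,\E_\sigma \sup_{f\in\mathcal{F}} \sum_{i=1}^m \sigma_i f(x^{(i)})$. The Dudley entropy integral gives, up to absolute constants, $\hat{\mathfrak{R}}_m(\mathcal{F}) \lesssim \inf_{\alpha>0}\left(\alpha + \frac{1}{\sqrt{m}}\int_\alpha^{R}\sqrt{\log \mathcal{N}_\infty(\mathcal{F};\epsilon;x^{(1)},\dots,x^{(m)})}\,d\epsilon\right)$, where $R$ bounds $\sup_f \max_i |f(x^{(i)})|$ (finite since $\mathcal{F}$ is a class of bounded functions). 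Note the $\infty$-norm cover is at least as strong as the $\ell_2$-metric cover needed for Dudley on a sample of size $m$, so the hypothesis $\log \mathcal{N}_\infty(\mathcal{F};\epsilon;\cdot) \le C_\mathcal{F}/\epsilon^2$ plugs in directly.

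Next I would evaluate the integral with the given bound: $\int_\alpha^R \sqrt{C_\mathcal{F}}/\epsilon\, d\epsilon = \sqrt{C_\mathcal{F}}\,\log(R/\alpha)$, which is only logarithmically divergent as $\alpha\to 0$, so choosing $\alpha = 1/m$ (or $\alpha = \sqrt{C_\mathcal{F}/m}$) yields $\hat{\mathfrak{R}}_m(\mathcal{F}) \le \tilde O(\sqrt{C_\mathcal{F}/m})$, where the tilde absorbs the $\log m$ factor (and any $\log R$). This is precisely where the logarithmic-in-$m$ behavior of the covering number is used: a heavier dependence would blow up the integral. Then I would invoke the standard symmetrization plus bounded-differences (McDiarmid) argument: with probability at least $1-\delta$, every $f\in\mathcal{F}$ satisfies $\epsilon_\mathrm{gen}(f) \le 2\,\mathfrak{R}_m(\mathcal{F}) + O\!\left(\sqrt{\log(1/\delta)/m}\right)$, and bound the population Rademacher complexity by the empirical one (up to another $O(\sqrt{\log(1/\delta)/m})$ fluctuation, or simply note $\mathfrak{R}_m(\mathcal{F}) = \E[\hat{\mathfrak{R}}_m(\mathcal{F})] \le \tilde O(\sqrt{C_\mathcal{F}/m})$ since the Dudley bound holds pointwise over samples). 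Combining gives the claimed $\epsilon_\mathrm{gen}(f) \le \tilde O\!\left(\sqrt{C_\mathcal{F}/m} + \sqrt{\log(1/\delta)/m}\right)$.

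Since this is stated as ``informal'' and relegated to Appendix~\ref{sec:rad_bounds}, the main role here is bookkeeping rather than novelty; the obstacle is mostly one of careful statement — pinning down the precise meaning of ``generalization error'' (e.g., the gap between population and empirical risk for a fixed bounded, Lipschitz loss composed with $f$, which only changes constants by the contraction principle), tracking exactly which logarithmic factors land inside the $\tilde O$, and ensuring the $\infty$-norm-to-$\ell_2$-metric comparison for the entropy integral is done cleanly on the finite sample. None of these present a genuine mathematical difficulty; they are the standard ingredients behind norm-based generalization bounds (cf.\ \citealp{bartlett2017spectrally, zhang2002covering}), and the full details are deferred to the appendix.
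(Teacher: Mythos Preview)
Your proposal is correct and matches the paper's approach essentially step for step: the appendix restates Dudley's entropy integral (with the $\ell_\infty$ cover dominating the $\ell_2$ one), plugs in $\log \mathcal{N}_\infty \le C_{\mathcal{F}}/\epsilon^2$ to get $\int_\delta^A \sqrt{C_{\mathcal{F}}/m}\,\epsilon^{-1}\,d\epsilon = \sqrt{C_{\mathcal{F}}/m}\,\log(A/\delta)$, optimizes $\delta$ to $\sqrt{C_{\mathcal{F}}/m}$, and then invokes the standard Bartlett--Mendelson Rademacher-to-generalization bound. The only cosmetic difference is that the paper cites \citet{BM:2002} directly rather than spelling out symmetrization plus McDiarmid.
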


\section{Abstractions of (self-)attention}
\label{subsec:prelims-attn}
The precise definition of attention is less straightforward to define than for architectural components such as convolutions and residual connections.
In this section, guided by the manifestations of attention discussed in \citep{luong2015effective}, we present some notation and definitions which generalize attention mechanisms commonly seen in practice, including the Transformer. Intuitively, these definitions encompass neural network layers which induce context-dependent representation bottlenecks. Subsequently, we show how to represent the Transformer (the predominant attention-based architecture) as a special case of this formulation.

\subsection{Attention}

Intuitively, we would like to capture the notion that an output variable selects (``attends to'') a part of the input sequence on which it will depend, based on a learned function of global interactions (see Figure~\ref{fig:attn-diagram}, left). To this end, we define an \emph{attention head} as a function which maps a length-$T$ input sequence (e.g. the tokens in a sentence, pixels in an image, or intermediate activations in a deep Transformer network) and an additional context $z \in \gZ$ to an output $y \in \gY$. In this work, we will exclusively consider $\gX, \gY, \gZ$ to be $\RR^d$. An attention head uses $z$ to select the input coordinates in $X$ to which the output $y$ will ``attend'', formalized below:

\newcommand{\score}{\mathsf{Score}}
\newcommand{\normal}{\mathsf{Norm}}

\begin{definition}[Attention head]
An attention head is a function $f : \gX \rightarrow \gY$, specified by an alignment score function $\score : \gX \times \gZ \rightarrow \RR$ parameterized by $\theta_s \in \Theta_s$, normalization function $\normal : \RR^T \rightarrow \Delta^{T-1}$, and position-wise maps $\phi_\mathrm{in} : \gX \rightarrow \gV, \phi_\mathrm{out} : \gV \rightarrow \gY$ parameterized by $\theta_\mathrm{in} \in \Theta_\mathrm{in}$ and $\theta_\mathrm{out} \in \Theta_\mathrm{out}$. The output of an attention head on input $X \in \gX^T, z \in \gZ$ is\looseness=-1
\begin{align*}
y
&= \phi_\mathrm{out}\Big(\sum_{t=1}^T  \Big[\normal\Big(\score(x_1,z; \theta_s), \ldots,
\score(x_T,z; \theta_s)\Big)\Big]_t \phi_\mathrm{in}(x_t; \theta_\mathrm{in});\ \theta_\mathrm{out}\Big) \\
&= \phi_\mathrm{out}\Big(\phi_\mathrm{in}(X;\theta_\mathrm{in})^\top\normal\Big(\score(x_1,z; \theta_s), \ldots, \score(x_T,z; \theta_s)\Big) ;\ \theta_\mathrm{out}\Big) 
\end{align*}
where $\phi_{\mathrm{in}}(X; \theta) = [\phi_{\mathrm{in}}(x_1; \theta) \ldots \phi_{\mathrm{in}}(x_T; \theta)]^\top$ 
denotes the row-wise application of $\phi_{\mathrm{in}}$.
\end{definition}
The above definition corresponds to the leftmost diagram in Figure~\ref{fig:attn-diagram}. Here, $\gV$ is a vector space of input representations ``mixed'' by the normalized alignment scores; in this work, we will set $\gV = \RR^k$. A function class of attention heads is induced by specifying parameter classes for $\{ \Theta_s, \Theta_\mathrm{in}, \Theta_\mathrm{out} \}$.

\subsection{Self-attention and Transformers}

A \emph{self-attention head} is a special case of an attention head, in which the context $z$ comes from one of the inputs $x_t$ themselves: pairwise interactions among the elements in $X$ are used to select the elements of $X$ on which $f$ depends. In this case, we use ``input'' and ``context'' interchangeably to refer to $X$.
For example, a self-attention head which uses $z := x_t$ is defined by
\[y = \phi_\mathrm{out}\left( \phi_\mathrm{in}(X; \theta_\mathrm{in})^\top \normal(\score(X, x_t; \theta_s));\theta_\mathrm{out}\right).\]

We now define the Transformer self-attention architecture as a special case of the above. Since a Transformer layer has shared parameters between multiple output heads, we will define all $T$ outputs of the layer at once.

\begin{definition}[Transformer layer]
\label{def:transformer}
A Transformer layer is a collection of $T$ attention heads (whose outputs are $y_1, \ldots, y_T$) with the following shared parameters:
\begin{itemize}[leftmargin=0.5cm]
    \item The context for head $\tau$ is $x_\tau$, and the alignment score function is quadratic:
    \[\score(x, x_\tau; \{W_Q, W_K\}) := x_\tau^\top W_Q W_K^\top x,
    \quad W_Q, W_K \in \RR^{d \times k}.\]
    \item $\phi_\mathrm{in}$ is linear:
    \[ \phi_\mathrm{in}(x; W_V) := W_V^\top x, \quad W_V \in \RR^{d \times k}. \]
    \item $\phi_\mathrm{out}$ is a linear function, composed with an $L_\sigma$-Lipschitz activation function $\sigma:\RR \rightarrow \RR$ such that $\sigma(0) = 0$:
    \[\phi_\mathrm{out}(x; W_C) := W_C^\top \sigma(x), \quad W_C \in \RR^{k \times d}.\]
    \item The normalization function is the $T$-dimensional softmax: \[\normal(x) := \mathsf{softmax}\left( x\right) = \frac{\exp\left( x\right)}{\mathbf{1}^\top \exp\left( x\right)}.\]
\end{itemize}

Defining $Y := [y_1 y_2 \dots y_T]^\top \in \mathbb{R}^{T \times d}$ and $\bra{ \mathsf{RowSoftmax}(M)}_{t,:} := \mathsf{softmax}(M_{t,:})$, we have
\[
Y = \sigma\left( \mathsf{RowSoftmax}\left(X W_Q ( X W_K)^\top \right) X W_V \right)W_C.
\]
\end{definition}

Functions from the above class of Transformer layers map $\RR^{T \times d}$ to itself, and can thus be iteratively composed. We discuss remaining discrepancies between Definition~\ref{def:transformer} and real Transformers (positional embeddings, position-wise feedforward networks, layer normalization, parallel heads, residual connections) in Section~\ref{subsec:network-capacity} and the appendix.

\paragraph{Extracting scalar outputs from a Transformer.} Finally, we establish notation for a canonical way to extract a scalar prediction from the final layer of a Transformer. For a context of size $T$, a Transformer layer with $T+1$ inputs is constructed, with a special index \CLS.\footnote{\CLS stands for ``class'', as in ``treat the output at this position as the classifier's prediction''.} The input at this position is a vector $x_{\CLS} \in \RR^d$ (which can be fixed or trainable); the output is a linear function $w^\top y_{\CLS}$, for a trainable parameter $w \in \RR^d$. This defines a class of functions mapping $\RR^{T \times d} \rightarrow \RR$, parameterized by a Transformer layer's parameters and $w$, which we call the class of \emph{scalar-output Transformers}. This is the setup used by the classification modules in BERT \citep{devlin2018bert} and all of its derivatives.
\section{Capacity bounds for attention modules}
\label{sec:capacity}

In this section, we present covering number-based capacity bounds for generic attention heads and Transformers, along with overviews of their proofs. Section~\ref{subsec:single-head-capacity} bounds the capacity of a general attention head. Section \ref{sec:tf-head} instantiates this bound for the case of a single Transformer self-attention head. Section~\ref{subsec:network-capacity} generalizes this bound for full depth-$L$ Transformer networks. The sample complexity guarantees for Transformers scale only \textit{logarithmically} in the context length $T$, providing rigorous grounding for the intuition that the architecture's inductive bias selects sparse functions of the context.

\paragraph{Note:} Throughout this section, assume that $\|x_t\|_2 \le B_X$ for all $t \in [T]$ (i.e. $\|X\|_{2,\infty} \leq B_X$). Note that this allows for the Frobenius norm $\|X\|_F$ to scale with $\sqrt{T}$. The key challenge throughout our analysis is to avoid incurring factors of norms which take a sum over the $t$ dimension, by constructing covers appropriately.

\newcommand{\Head}{\mathsf{head}}
\newcommand{\Scalar}{\mathsf{scalar}}

\subsection{Capacity of a general attention head}\label{subsec:single-head-capacity}
Recall that the attention head architecture can be represented as a function $f_\Head: \RR^{T \times d} \times \RR^d \rightarrow \RR^d$ parameterized by $\theta_s, \theta_\mathrm{in}, \theta_\mathrm{out}$ as 
\begin{align*}
f_\Head(X, z; \theta_s, \theta_\mathrm{in}, \theta_\mathrm{out}) = \phi_\mathrm{out}\left(\phi_\mathrm{in}(X; \theta_\mathrm{in})^\top \normal(\score(X, z; \theta_s));\theta_\mathrm{out}\right).
\end{align*}
Denote the corresponding function class by
\begin{align*}
\mathcal{F}_\Head:= \{(X, z) \mapsto f_\Head(X, z; \theta_s, \theta_\mathrm{in}, \theta_\mathrm{out}) :
\theta_s \in \Theta_s, \theta_\mathrm{in} \in \Theta_\mathrm{in}, \theta_\mathrm{out} \in \Theta_\mathrm{out} \}
\end{align*}

To convert the vector-valued function class to a scalar output function class, we define $\mathcal{F}_\Scalar:= \{(X, z) \mapsto w^\top f(X, z) : f \in \mathcal{F}_\Head, w \in \RR^d, \norm{w} \le B_w\}$.

For simplicity of presentation, we will focus only on the attention head, and assume that $\phi_\mathrm{out}$ and $w$ are fixed. We handle the general case of trainable downstream layers in the analysis of multi-layer Transformers in Appendix \ref{subsec:deeptf}.

\begin{assumption}\label{ass:main}
We make the following assumptions:
\begin{enumerate}[noitemsep,leftmargin=*]
    \vspace{-2ex}\item $\phi_\mathrm{out}$ is $L_\mathrm{out}$-Lipschitz in the $\ell_2$-norm, that is,
    \[\forall a, b \in \R^k: \|\phi_\mathrm{out}(a) - \phi_\mathrm{out}(b)\| \le L_\mathrm{out} \|a - b\|.\]
    \item $\phi_\mathrm{in}$ is $B_\mathrm{in}$-bounded in $\ell_2$-norm, that is,
    \[
    \forall a \in \RR^d, \theta_\mathrm{in} \in \Theta_\mathrm{in}: \| \phi_\mathrm{in}(a;\theta_\mathrm{in})\| \le B_\mathrm{in} \|a\|.
    \]
    \item $\normal$ is continuously differentiable and its Jacobian satisfies \[\forall \theta\in \RR^{T}, \norm{J~\normal(\theta)}_{1,1} \le C_\normal.\]
\end{enumerate}
\end{assumption}
Note that $\softmax$ (the most commonly used $\normal$ function) satisfies the Jacobian assumption with $C_\softmax = 2$ (see Corollary \ref{lem:softmax}).

We prove the following bound on the covering number of $\mathcal{F}_\Head$ for $m$ samples,
\begin{theorem}[Attention head capacity]\label{lem:covering_head}  Under Assumptions \ref{ass:main}, $\forall \alpha \in [0,1]$ the covering number of $\mathcal{F}_\Head$ satisfies
\begin{align*}
& \!\!\!\! \log \mathcal{N}_\infty \left(\mathcal{F}_\Head; \epsilon; \left\{(X^{(i)}, z^{(i)})\right\}_{i=1}^m; \| \cdot\|_2\right) \leq \\
& \log \mathcal{N}_\infty\left(\mathcal{F}_\score; \frac{\alpha\eps}{C_{\normal} L_{\mathrm{out}} B_\mathrm{in} B_X}; \{(x_t^{(i)}, z^{(i)})\}^{i \in [m]}_{t \in [T]}\right)
 + \log \mathcal{N}_\infty\left(\mathcal{F}_\mathrm{in}; \frac{(1- \alpha)\eps}{L_{\mathrm{out}}}; \{x_t^{(i)}\}^{i \in [m]}_{t \in [T]}; \|\cdot \|_2\right),
\end{align*}
where $\mathcal{F}_\score = \{ (x,z) \mapsto \score(x, z; \theta_s): \theta_s \in \Theta_s \}$, and $\mathcal{F}_\mathrm{in} = \{ x \mapsto \phi_\mathrm{in}(x; \theta_\mathrm{in}): \theta_\mathrm{in} \in \Theta_\mathrm{in} \}$. 
\end{theorem}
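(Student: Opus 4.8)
The plan is to build a cover of $\mathcal{F}_\Head$ as a \emph{product} of a cover of the score class $\mathcal{F}_\score$ and a cover of the value class $\mathcal{F}_\mathrm{in}$, but evaluated not at the $m$ sample points $(X^{(i)},z^{(i)})$ — instead at the $mT$ ``unrolled'' points $\{(x_t^{(i)},z^{(i)})\}_{t\in[T]}^{i\in[m]}$ and $\{x_t^{(i)}\}_{t\in[T]}^{i\in[m]}$ respectively. Concretely, fix $\alpha\in[0,1]$ and set $\epsilon_1 := \tfrac{\alpha\epsilon}{C_\normal L_\mathrm{out} B_\mathrm{in} B_X}$, $\epsilon_2 := \tfrac{(1-\alpha)\epsilon}{L_\mathrm{out}}$. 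Take a cover $\mathcal{C}_\score \subseteq \mathcal{F}_\score$ of size at most $\mathcal{N}_\infty(\mathcal{F}_\score;\epsilon_1;\{(x_t^{(i)},z^{(i)})\})$ on those $mT$ points, and a cover $\mathcal{C}_\mathrm{in}\subseteq\mathcal{F}_\mathrm{in}$ in $\|\cdot\|_2$ of size at most $\mathcal{N}_\infty(\mathcal{F}_\mathrm{in};\epsilon_2;\{x_t^{(i)}\};\|\cdot\|_2)$. Given $f\in\mathcal{F}_\Head$ with parameters $(\theta_s,\theta_\mathrm{in})$ (recall $\phi_\mathrm{out}$ is fixed), choose $\hat s\in\mathcal{C}_\score$ and $\hat\phi_\mathrm{in}\in\mathcal{C}_\mathrm{in}$ approximating $\score(\cdot;\theta_s)$ and $\phi_\mathrm{in}(\cdot;\theta_\mathrm{in})$ on the unrolled points; since $\mathcal{C}_\score\subseteq\mathcal{F}_\score$ and $\mathcal{C}_\mathrm{in}\subseteq\mathcal{F}_\mathrm{in}$ these are realized by some $\hat\theta_s,\hat\theta_\mathrm{in}$, so the candidate $\hat f := f_\Head(\cdot;\hat\theta_s,\hat\theta_\mathrm{in},\theta_\mathrm{out})$ genuinely lies in $\mathcal{F}_\Head$. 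The resulting collection has size $|\mathcal{C}_\score|\cdot|\mathcal{C}_\mathrm{in}|$, so it remains only to verify it is an $\epsilon$-cover.

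To do so, fix $i$ and abbreviate $X=X^{(i)}$, $z=z^{(i)}$, $p := \normal(\score(X,z;\theta_s))$, $\hat p := \normal(\hat s(X,z))$ (each a point of $\Delta^{T-1}$). By $L_\mathrm{out}$-Lipschitzness of $\phi_\mathrm{out}$,
\[
\|f(X,z)-\hat f(X,z)\| \le L_\mathrm{out}\,\big\|\phi_\mathrm{in}(X;\theta_\mathrm{in})^\top p - \hat\phi_\mathrm{in}(X)^\top \hat p\big\|,
\]
and I split the inner vector as $\phi_\mathrm{in}(X;\theta_\mathrm{in})^\top(p-\hat p) + (\phi_\mathrm{in}(X;\theta_\mathrm{in})-\hat\phi_\mathrm{in}(X))^\top\hat p$. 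The first piece equals $\sum_t (p-\hat p)_t\,\phi_\mathrm{in}(x_t;\theta_\mathrm{in})$, so using $\|\phi_\mathrm{in}(x_t;\theta_\mathrm{in})\|\le B_\mathrm{in}B_X$ it has norm at most $B_\mathrm{in}B_X\|p-\hat p\|_1$; integrating the Jacobian bound along the segment between $\score(X,z;\theta_s)$ and $\hat s(X,z)$ gives $\|p-\hat p\|_1 \le C_\normal\|\score(X,z;\theta_s)-\hat s(X,z)\|_\infty \le C_\normal\epsilon_1$, where the last step is exactly where it matters that $\mathcal{C}_\score$ covers at all $mT$ unrolled points. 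The second piece equals $\sum_t \hat p_t(\phi_\mathrm{in}(x_t;\theta_\mathrm{in})-\hat\phi_\mathrm{in}(x_t))$, and since $\hat p\in\Delta^{T-1}$ this convex combination has norm at most $\max_t\|\phi_\mathrm{in}(x_t;\theta_\mathrm{in})-\hat\phi_\mathrm{in}(x_t)\| \le \epsilon_2$. Combining, $\|f(X,z)-\hat f(X,z)\| \le L_\mathrm{out}(B_\mathrm{in}B_X C_\normal\epsilon_1 + \epsilon_2) = \epsilon$ by our choice of $\epsilon_1,\epsilon_2$, uniformly in $i$; taking logs yields the stated additive bound $\log|\mathcal{C}_\score| + \log|\mathcal{C}_\mathrm{in}|$.

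The only genuine obstacle is the one flagged in the section preamble: a careless decomposition forces factors of norms summed over the length-$T$ axis (a $\sqrt T$ from an $\ell_2$ bound on $p-\hat p$, or a full entanglement of the $T$ score coordinates if one applied a generic linear covering bound to $X\mapsto\score(X,z)$). The two maneuvers that avoid this are (i) measuring the normalization-output perturbation in $\ell_1$ via the $\|\cdot\|_{1,1}$ Jacobian assumption, which pairs with an $\ell_\infty$ (i.e., $\max_t$) score-cover error, and (ii) exploiting $\hat p\in\Delta^{T-1}$ so the value-cover error enters only through a $\max_t$. Both push all $T$-dependence into the \emph{number of points} at which we cover $\mathcal{F}_\score$ and $\mathcal{F}_\mathrm{in}$, namely $mT$ — and since such covering numbers (cf.\ the $\mathcal{F}_{\textrm{lin}}$ bound quoted earlier) depend on the point count only logarithmically, this is precisely what will later produce the $\log T$ sample complexity. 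Everything else is bookkeeping; I would also record that softmax satisfies the Jacobian hypothesis with $C_\softmax=2$, and that fixing $\phi_\mathrm{out}$ (and $w$) is what keeps $\hat f$ inside $\mathcal{F}_\Head$, with the trainable-downstream-layer case handled separately in the multi-layer analysis.
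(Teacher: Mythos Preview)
Your proposal is correct and follows essentially the same approach as the paper: the paper first isolates the perturbation estimate as a separate Lipschitzness lemma (Lemma~\ref{lem:lipschitz_general}), derived via the same $\ell_1/\ell_\infty$ pairing through the Jacobian assumption and the simplex property of $\normal$, and then constructs the product cover on the $mT$ unrolled points exactly as you do. The only cosmetic difference is that you inline the Lipschitz computation rather than stating it as a standalone lemma.
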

Note that the bound is in terms of the $\mathcal{N}_\infty$ covering number of functions that dependent on dimensions $d$ or $k$ and not $T$. The effect of $T$ only shows up in the number of samples to cover. Crucially, for some function classes (e.g. linear functions \citep{zhang2002covering}), $\mathcal{N}_\infty$ scales only logarithmically with the number of samples. This is exactly what allows us to obtain our $\log T$ capacity bounds.

Since $w$ is fixed, an $\eps$-covering of $\mathcal{F}_\Head$ directly gives us an $\eps B_w$-covering for $\mathcal{F}_\Scalar$, implying
\begin{align*}
&\log \mathcal{N}_\infty\left(\mathcal{F}_\Scalar; \epsilon; \left\{(X^{(i)}, z^{(i)})\right\}_{i=1}^m\right)
 \le  \log \mathcal{N}_\infty\left(\mathcal{F}_\Head; \epsilon/B_w; \left\{(X^{(i)}, z^{(i)})\right\}_{i=1}^m, \| \cdot \|_2\right).
\end{align*}
\paragraph{Proof overview.}
In order to prove the bound, we first show a Lipschitzness property of $f_\Head$. This property allows us to construct a cover by using covers for $\mathcal{F}_\score$ and $\mathcal{F}_\mathrm{in}$.
\begin{lemma}[$\ell_\infty$-Lipschitzness of $f_\Head$] \label{lem:lipschitz_general}
For any $\theta_s, \hat{\theta}_s \in \Theta_s, \theta_\mathrm{in}, \hat{\theta}_\mathrm{in} \in \Theta_\mathrm{in}$; for all $X \in \mathbb{R}^{T \times d}$, such that $\norm{X^\top}_{2, \infty} \le B_X$,
\begin{align*}
& \!\!\!\! \norm{f_\Head(X, z; \theta_s, \theta_\mathrm{in}, w) - f_\Head(X, z; \hat{\theta}_s, \hat{\theta}_\mathrm{in}, w)} \leq \\
&C_{\normal} L_{\mathrm{out}} B_\mathrm{in} B_X \norm{\score(X, z; \theta_s) - \score(X, z; \hat{\theta}_s)}_\infty
+L_{\mathrm{out}} \norm{\phi_\mathrm{in}(X; \theta_\mathrm{in}) - \phi_\mathrm{in}(X; \hat{\theta}_\mathrm{in})}_{2, \infty}.
\end{align*}
\end{lemma}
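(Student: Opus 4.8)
The plan is to peel off the outer map $\phi_\mathrm{out}$, then split the remaining difference into an ``attention-weights changed'' term and a ``values changed'' term, taking care throughout to bound everything by row-wise quantities ($\|\cdot\|_{2,\infty}$, $\|\cdot\|_\infty$) rather than Frobenius quantities so that no factor of $T$ or $\sqrt T$ appears. Write $V := \phi_\mathrm{in}(X;\theta_\mathrm{in})$, $\hat V := \phi_\mathrm{in}(X;\hat\theta_\mathrm{in}) \in \RR^{T\times k}$ (rows $\phi_\mathrm{in}(x_t;\cdot)^\top$), $s := \score(X,z;\theta_s)$, $\hat s := \score(X,z;\hat\theta_s) \in \RR^T$, and $p := \normal(s)$, $\hat p := \normal(\hat s) \in \Delta^{T-1}$, so that $f_\Head(X,z;\theta_s,\theta_\mathrm{in},w) = \phi_\mathrm{out}(V^\top p)$ and likewise for the hatted parameters, with $\phi_\mathrm{out}$ (and $w$) held fixed as stated above. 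Since $\phi_\mathrm{out}$ is $L_\mathrm{out}$-Lipschitz in $\ell_2$ (Assumption~\ref{ass:main}(1)), it suffices to bound $\|V^\top p - \hat V^\top \hat p\|_2$ by the bracketed quantity in the lemma divided by $L_\mathrm{out}$.

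First I would use the decomposition $V^\top p - \hat V^\top \hat p = V^\top (p - \hat p) + (V - \hat V)^\top \hat p$ together with the triangle inequality. For the second term, since $\hat p \in \Delta^{T-1}$ we get $\|(V-\hat V)^\top \hat p\|_2 = \big\|\sum_t \hat p_t (V_{t,:} - \hat V_{t,:})\big\|_2 \le \sum_t \hat p_t \|V_{t,:} - \hat V_{t,:}\|_2 \le \max_t \|\phi_\mathrm{in}(x_t;\theta_\mathrm{in}) - \phi_\mathrm{in}(x_t;\hat\theta_\mathrm{in})\|_2 = \|\phi_\mathrm{in}(X;\theta_\mathrm{in}) - \phi_\mathrm{in}(X;\hat\theta_\mathrm{in})\|_{2,\infty}$, which is exactly the second term of the claimed bound (no norm bound on $V$ is used here). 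For the first term, $\|V^\top(p-\hat p)\|_2 = \big\|\sum_t (p_t - \hat p_t)\,\phi_\mathrm{in}(x_t;\theta_\mathrm{in})\big\|_2 \le \|p - \hat p\|_1 \cdot \max_t \|\phi_\mathrm{in}(x_t;\theta_\mathrm{in})\|_2 \le B_\mathrm{in} B_X\,\|p - \hat p\|_1$, using Assumption~\ref{ass:main}(2) and $\|x_t\|_2 \le B_X$.

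It remains to show $\|\normal(s) - \normal(\hat s)\|_1 \le C_\normal \|s - \hat s\|_\infty$. Since $\normal$ is continuously differentiable (Assumption~\ref{ass:main}(3)), the fundamental theorem of calculus along the segment $u \mapsto \hat s + u(s - \hat s)$, $u\in[0,1]$, gives $\normal(s) - \normal(\hat s) = \int_0^1 J\,\normal(\hat s + u(s-\hat s))\,(s - \hat s)\,du$, hence $\|\normal(s)-\normal(\hat s)\|_1 \le \sup_{\theta \in \RR^T} \|J\,\normal(\theta)\,(s-\hat s)\|_1$. For any matrix $M$ and vector $v$, $\|Mv\|_1 = \sum_i \big|\sum_j M_{ij} v_j\big| \le \|v\|_\infty \sum_{i,j}|M_{ij}| = \|v\|_\infty\,\|M\|_{1,1}$; applying this with $M = J\,\normal(\theta)$ and using $\|J\,\normal(\theta)\|_{1,1} \le C_\normal$ yields $\|p - \hat p\|_1 \le C_\normal\,\|s - \hat s\|_\infty$. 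Substituting into the first term gives $C_\normal B_\mathrm{in} B_X\,\|\score(X,z;\theta_s) - \score(X,z;\hat\theta_s)\|_\infty$, and multiplying the sum of the two terms by $L_\mathrm{out}$ recovers the stated bound.

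I expect the main point of care — rather than any deep difficulty — to be the matrix-norm bookkeeping that keeps $T$ out of the bound: (i) exploiting $\hat p \in \Delta^{T-1}$ to replace a would-be $\|V-\hat V\|_F$ by $\|V-\hat V\|_{2,\infty}$; (ii) bounding $\max_t\|\phi_\mathrm{in}(x_t;\theta_\mathrm{in})\|_2$ rather than $\|\phi_\mathrm{in}(X;\theta_\mathrm{in})\|_F$; and (iii) recognizing that the relevant operator norm of $J\,\normal$ is the $\ell_\infty\!\to\!\ell_1$ norm, which is controlled by the entrywise $\|\cdot\|_{1,1}$ in Assumption~\ref{ass:main}(3) (and equals $2$ for softmax, matching Corollary~\ref{lem:softmax}). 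One could equally split as $V^\top p - \hat V^\top\hat p = (V-\hat V)^\top p + \hat V^\top(p-\hat p)$, moving the $B_\mathrm{in}B_X$ factor onto the hatted side; either choice is fine, so I would pick the one matching however the downstream covering-number construction (Theorem~\ref{lem:covering_head}) consumes the two terms.
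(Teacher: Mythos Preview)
Your proposal is correct and follows essentially the same route as the paper: peel off $\phi_\mathrm{out}$ via its Lipschitz constant, split $V^\top p - \hat V^\top \hat p$ by the triangle inequality into an ``attention-weights changed'' term and a ``values changed'' term, bound each via $\|P v\|_2 \le \|P\|_{2,\infty}\|v\|_1$ (using $\hat p \in \Delta^{T-1}$ for the second), and handle $\|\normal(s)-\normal(\hat s)\|_1$ by the line-integral/Jacobian argument you spell out inline (the paper packages this as a separate Lemma~\ref{lem:jacobian}). Your commentary on the $T$-avoiding bookkeeping and the alternative split matches the paper's emphasis as well.
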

The most crucial aspect of this proof is to avoid a spurious $T$ dependence when accounting for the attention mechanism. The key observation here is that the attention part of the network is computed using $\normal$, whose Jacobian norm is bounded. This allows us to use the mean-value theorem to move to the maximum ($\ell_\infty$) error over $T$ tokens instead of sum ($\ell_1$), which could potentially incur a $T$ factor. Furthermore, this allows us to combine all samples and tokens and construct an $\ell_\infty$-cover directly for $mT$ samples.

\subsection{Capacity of a Transformer head} \label{sec:tf-head}
Let us now look at the case of a Transformer self-attention head and instantiate the covering bound. For ease of presentation and to focus on the self-attention part, we collapse $W_QW_K^\top$ to a single matrix, set $k=d$ and remove the linear layer $W_C$\footnote{See Appendix \ref{subsec:deeptf} for an analysis of general deep Transformer models.}. Then the Transformer self-attention head (for any fixed $\tau \in [T]$) can be described as
\begin{align*}
&f_\tfHead(X; W_V, W_{QK}) :=
\sigma\left(W_V^\top X^\top \mathsf{softmax}\left(XW_{QK}^\top x_\tau\right)\right)
\end{align*}
which is obtained from the general formulation by setting the context to be $x_\tau$, $\score(X,x_\tau;W_{QK}) = XW_{QK}^\top x_\tau $, $\normal = \softmax$ and $\phi_\mathrm{out} = \sigma$.

Because the number of parameters in a Transformer self-attention head is $O(d^2)$, with no dependence on $T$, one might presume by simple parameter counting that the capacity of the class of these heads does not grow as the context length $T$ grows. But capacity is not solely governed by the number of parameters---for example, the class $\{ x \in \R \mapsto \mathrm{sign}(\sin(\alpha x))\}_{\alpha \in \R}$ has a single parameter but infinite VC-dimension. One might still hope to prove, for the special case of Transformer heads, a $T$-independent upper bound on the VC-dimension (or rather, its analog for real-valued functions, the \emph{pseudo-dimension}). We observe that, in fact, the pseudo-dimension of this class does grow with $T$.

\begin{proposition}\label{prop:shattering}
	When the embedding dimension is $d=3$, the class \[ \mathcal{F}_{\tfHeadUnbounded} := \{X \mapsto f_\tfHead(X; W_V, W_{QK}) : W_V, W_{QK} \in \RR^{d \times d}\}\]
	of Transformer self-attention heads with unbounded norm has pseudo-dimension $\geq \lfloor\log T\rfloor$.
\end{proposition}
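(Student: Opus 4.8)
The plan is to exploit the \emph{unbounded norm} to drive the softmax toward a hardmax, and to observe that, with the keys of the $T$ tokens placed in convex position, a single query direction can ``point at'' an arbitrary one of the $T$ rows. With $d=3$ I devote two coordinates of each token to a ``key feature'' lying on the unit circle and the third coordinate to a scalar ``payload''; an (approximate) hardmax head then copies out the payload of the pointed-at row, and by choosing payloads to encode subsets I realize all $2^n$ labelings of $n := \lfloor \log T \rfloor$ carefully designed inputs.

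Concretely, fix $\tau = 1$ and set $a_t := (\cos(2\pi t/T), \sin(2\pi t/T)) \in \RR^2$ for $t \in [T]$. Since $\sigma$ is Lipschitz, $\sigma(0)=0$, and $\sigma \not\equiv 0$, choose $z_0$ with $\sigma(z_0) \ne 0$ and order $\{q_+, q_-\} = \{z_0, 0\}$ so that $\sigma(q_+) > \sigma(q_-)$; let $r := \tfrac12(\sigma(q_+)+\sigma(q_-))$. Fix a bijection $\rho$ from the subsets of $[n]$ onto $\{1,\dots,2^n\} \subseteq [T]$, with $\rho(\emptyset)=1$ if $q_-=0$ and $\rho([n])=1$ if $q_+=0$. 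Define $X^{(i)} \in \RR^{T \times 3}$ (for $i \in [n]$) to have $t$-th row $x_t^{(i)} := (a_t,\, p_t^{(i)})$, where $p_t^{(i)} := q_+$ if $i \in \rho^{-1}(t)$ and $p_t^{(i)} := q_-$ otherwise (rows $t > 2^n$ may take $p_t^{(i)} := q_-$). By the choice of $\rho$, $x_1^{(i)}$ equals the fixed nonzero vector $x_\tau := (a_1, 0)$ for every $i$.

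To realize the labeling $S \subseteq [n]$: take $W_V$ with $W_V^\top$ having first row $(0,0,1)$ and remaining rows $0$, so that (using $\sigma(0)=0$ on the other coordinates) the first coordinate of the head output is $\sigma\big(\sum_t [\softmax(\cdot)]_t\, p_t^{(i)}\big)$; and take $W_{QK}$ of rank one with $W_{QK}^\top x_\tau = R(a_{\rho(S),1}, a_{\rho(S),2}, 0)$ for a large scale $R$. Then for every $i$ the $t$-th attention score is $(x_t^{(i)})^\top W_{QK}^\top x_\tau = R\langle a_t, a_{\rho(S)}\rangle = R\cos(2\pi(t-\rho(S))/T)$, which is independent of $i$ and of the payloads, and is uniquely maximized at $t=\rho(S)$ with margin $1-\cos(2\pi/T)>0$. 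Hence the softmax weights are identical across all inputs and concentrate on row $\rho(S)$ as $R \to \infty$, so the first output coordinate at $X^{(i)}$ converges to $\sigma(p_{\rho(S)}^{(i)})$, which is $\sigma(q_+)$ if $i \in S$ and $\sigma(q_-)$ if $i \notin S$. Because there are only finitely many subsets $S$ and indices $i$, and the score margin $1-\cos(2\pi/T)$ does not depend on $S$ while each limit is separated from $r$ by $\tfrac12|\sigma(q_+)-\sigma(q_-)|>0$, a single sufficiently large $R$ makes the first output coordinate land on the correct side of $r$ for all $i$ simultaneously. Setting $r_i := r$, the inputs $X^{(1)},\dots,X^{(n)}$ are pseudo-shattered (reading off the first coordinate of the vector-valued head; any fixed coordinate functional works), giving pseudo-dimension $\ge n = \lfloor \log T\rfloor$.

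The two substantive ingredients are (i) the convex-position observation, which is exactly what lets a single query direction select an arbitrary one of the $T$ rows (absent this one could only ever select an extreme row), and (ii) the limiting argument that turns the inherently smeared softmax into an effective hardmax while keeping all inequalities strict; with these in hand, the payload bookkeeping encoding $2^n$ subsets into $\lfloor \log T \rfloor$ inputs is routine. The one wrinkle accounted for above is that $\sigma$ might annihilate a naive payload value (e.g.\ ReLU on a negative payload), which is why the payload alphabet is $\{z_0,0\}$ with $\sigma(z_0)\ne 0$ rather than $\{\pm 1\}$.
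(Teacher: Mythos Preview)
Your proof is correct and follows essentially the same approach as the paper: place the $T$ keys on the unit circle in the first two coordinates, carry a binary payload in the third, scale $W_{QK}$ so the softmax approximates a hardmax selecting an arbitrary position, and use the payloads at the $2^n$ positions to realize every labeling of $n=\lfloor\log T\rfloor$ inputs. Your version is slightly more careful than the paper's in two respects---you explicitly arrange $x_\tau = x_1^{(i)}$ to be identical across all inputs (the paper simply declares $x_\tau$ fixed), and you accommodate an arbitrary Lipschitz $\sigma$ with $\sigma(0)=0$ via the $\{z_0,0\}$ payload alphabet rather than imposing an ad hoc monotonicity condition on $\sigma$ around $1/2$.
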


The proofs for this subsection can be found in Appendix \ref{sec:proofs}.

Let us now define the function class of self-attention heads with bounded weight norms:
\begin{align*}
&\mathcal{F}_\tfHead:= \{X \mapsto f_\tfHead(X; W_V, W_{QK}) :
\quad \|W_V\|_{2,1} \le B_V^{2,1}, \|W_{V}\| \le B_V, \|W_{QK}^\top\|_{2,1} \le B_{QK}^{2,1} \}.
\end{align*}
Since $W_V, W_{QK}$ have dimensions dependent on $d$ and $k$, bounding their norms does not hide a $T$ dependence. As before, to convert this vector-valued function class to a scalar output function class, we define \begin{align*}&\mathcal{F}_\tfScalar:=
\quad \{X \mapsto w^\top f(X) : f \in \mathcal{F}_\tfHead, w \in \RR^d, \norm{w} \le B_w\}.
\end{align*}


We obtain the following bound on the covering number of $\mathcal{F}_\tfHead$ as a corollary of Theorem~\ref{lem:covering_head}:
\begin{corollary}\label{lem:covering} For any $\epsilon>0$ and $X^{(1)}, \ldots, X^{(m)} \in \mathbb{R}^{T \times d}$ such that $\norm{{X^{(i)}}^\top}_{2, \infty} \le B_X$ for all $i \in [m]$, the covering number of $\mathcal{F}_\tfHead$ satisfies
\begin{align*}
\log \mathcal{N}_\infty(\mathcal{F}_\tfHead; \epsilon; X^{(1)}, \dots, X^{(m)}, \| \cdot \|_2)
\lesssim (L_\sigma B_X)^2 \cdot \frac{\left( (B_V^{2,1} )^\frac{2}{3} + (B_{QK}^{2,1} B_V)^\frac{2}{3} \right)^3}{\eps^2}\cdot\log(mT)
\end{align*}
Here $\lesssim$ hides logarithmic dependencies on quantities besides $m$ and $T$.
\end{corollary}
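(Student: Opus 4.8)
The plan is to apply Theorem~\ref{lem:covering_head} to the class $\mathcal{F}_\tfHead$, after identifying how its ingredients specialize: the context is $x_\tau$, the alignment score is $\score(x,x_\tau;W_{QK}) = x^\top W_{QK}^\top x_\tau$, the normalization is $\normal = \softmax$ (so $C_\normal = C_\softmax = 2$ by Corollary~\ref{lem:softmax}), $\phi_\mathrm{in}(x;W_V) = W_V^\top x$ is linear with $\|\phi_\mathrm{in}(x;W_V)\| \le \|W_V\|_2\|x\| \le B_V\|x\|$ (so Assumption~\ref{ass:main} holds with $B_\mathrm{in} = B_V$), and $\phi_\mathrm{out} = \sigma$ is $L_\mathrm{out} = L_\sigma$-Lipschitz in $\ell_2$ since it acts coordinatewise. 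With these substitutions Theorem~\ref{lem:covering_head} reduces the problem, for each $\alpha \in [0,1]$, to bounding the covering numbers of the two linear classes $\mathcal{F}_\score$ and $\mathcal{F}_\mathrm{in}$ --- at accuracies $\tfrac{\alpha\eps}{2L_\sigma B_V B_X}$ and $\tfrac{(1-\alpha)\eps}{L_\sigma}$ respectively, over the pooled family of $mT$ token--sample pairs $\{(x_t^{(i)}, x_\tau^{(i)})\}_{i\in[m],\,t\in[T]}$ --- then adding them and optimizing over $\alpha$.

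Both covering numbers are handled via the $\ell_\infty$ covering bound for linear functionals of \cite{zhang2002covering}. For $\mathcal{F}_\mathrm{in}$, the vector-valued map $x \mapsto W_V^\top x$ is a tuple of $k$ linear functionals whose weight vectors are the columns of $W_V$, with $\ell_2$-norms summing to $\|W_V\|_{2,1} \le B_V^{2,1}$; covering output coordinate $j$ to accuracy $\epsilon_j$ with $\sum_j \epsilon_j^2 \le \delta^2$ yields, via a product construction, a $\delta$-cover in $\|\cdot\|_2$, and allocating the budget as $\epsilon_j^2 \propto \|(W_V)_{:,j}\|_2$ (Cauchy--Schwarz / Lagrange) collapses the sum of the per-coordinate bounds to $O\big(\tfrac{(B_V^{2,1})^2 B_X^2}{\delta^2}\log(mT)\big)$, with no leftover dependence on $k$ or $T$. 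For $\mathcal{F}_\score$, rewriting $\score(x_t, x_\tau; W_{QK}) = \langle W_{QK},\, x_\tau x_t^\top\rangle$ exhibits it as a single linear functional of the rank-one data matrix $x_\tau x_t^\top$ with weight $W_{QK}$; since $\|W_{QK}\|_F \le \|W_{QK}^\top\|_{2,1} \le B_{QK}^{2,1}$ and the data matrices have Frobenius norm controlled by $B_X$, \cite{zhang2002covering} bounds $\log \mathcal{N}_\infty(\mathcal{F}_\score; \delta; \cdot)$ by $O\big(\tfrac{(B_{QK}^{2,1})^2}{\delta^2}\log(mT)\big)$ up to the $B_X$-dependent factor. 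The structural point shared by both estimates is that the $T$ token positions are merged with the $m$ samples into one collection of $mT$ evaluation points, and Zhang's linear covering number depends on that count only through a $\log(mT)$ --- this is exactly what replaces a potential $\mathrm{poly}(T)$ blow-up by the $\log(mT)$ scaling.

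Substituting the two prescribed accuracies and collecting the $L_\sigma$, $B_V$, $B_X$ factors that enter through the rescalings of Theorem~\ref{lem:covering_head}, the $\mathcal{F}_\score$ piece contributes a term of order $\tfrac{(L_\sigma B_X)^2 (B_{QK}^{2,1} B_V)^2}{\alpha^2 \eps^2}$ and the $\mathcal{F}_\mathrm{in}$ piece a term of order $\tfrac{(L_\sigma B_X)^2 (B_V^{2,1})^2}{(1-\alpha)^2 \eps^2}$, both up to the $\log(mT)$ and the logarithmic-in-other-norms factors hidden by $\lesssim$. Adding these and minimizing over $\alpha\in(0,1)$ via the elementary identity $\min_{\alpha\in(0,1)}\big(A/\alpha^2 + B/(1-\alpha)^2\big) = \big(A^{1/3} + B^{1/3}\big)^3$ --- applied with $A = (B_V^{2,1})^2$ and $B = (B_{QK}^{2,1} B_V)^2$ after pulling the common factor $(L_\sigma B_X)^2/\eps^2$ outside --- produces exactly the stated bound. \textbf{The main obstacle} is the second step: proving covering-number bounds for the two linear subclasses that are only logarithmic in the number of evaluation points and carry no hidden dependence on $T$ (or on the width $k$), which is what makes pooling the $mT$ token--sample pairs harmless; by contrast the optimization over $\alpha$ and the propagation of the Lipschitz and norm constants through Theorem~\ref{lem:covering_head} are routine bookkeeping.
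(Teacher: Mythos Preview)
Your overall architecture matches the paper's proof exactly: instantiate Theorem~\ref{lem:covering_head} with $C_\normal=2$, $L_\mathrm{out}=L_\sigma$, $B_\mathrm{in}=B_V$, reduce to covering the two linear classes over the pooled $mT$ tokens, and optimize the split via the $(A^{1/3}+B^{1/3})^3$ identity (the paper packages this as Lemma~\ref{lem:optimization}). Your treatment of $\mathcal{F}_\score$ via the vectorization $\score(x_t,x_\tau;W_{QK})=\langle W_{QK},\,x_\tau x_t^\top\rangle$ and a single application of Zhang's bound is a clean variant; the paper instead covers the vector-valued map $z\mapsto W_{QK}z$ via Lemma~\ref{lem:2-inf-cover} and then contracts with $x_\tau$. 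Both routes land on the same $\log(mT)$ scaling.

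There is, however, a genuine gap in your $\mathcal{F}_\mathrm{in}$ argument. You propose to cover coordinate $j$ of $x\mapsto W_V^\top x$ at resolution $\epsilon_j$ with $\epsilon_j^2\propto\|(W_V)_{:,j}\|_2$, so that the summed log-cover sizes collapse to $(B_V^{2,1})^2B_X^2/\delta^2$. But a cover must be fixed \emph{before} the target $W_V$ is revealed; your allocation $\epsilon_j$ depends on the very $W_V$ you are trying to approximate, so the product construction is not a valid cover of the whole class. If you instead fix the per-coordinate resolutions uniformly (say $\epsilon_j=\delta/\sqrt{k}$) and use the only a~priori coordinate bound $\|(W_V)_{:,j}\|\le B_V^{2,1}$, the sum picks up a factor of $k^2$, which defeats the purpose. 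The paper closes this gap in Lemma~\ref{lem:2-inf-cover} by a two-stage construction: first cover the \emph{norm profile} vector $(\|(W_V)_{:,1}\|,\ldots,\|(W_V)_{:,k}\|)$, which lives in the $\ell_1$-ball of radius $B_V^{2,1}$ and so admits a small $\ell_2$-cover via Maurey sparsification; then, for each profile $v$ in that cover, build your product cover with $\epsilon_j^2\propto v_j$ and weight bound $v_j$ on coordinate $j$. This extra layer is exactly what legitimizes the adaptive allocation you wrote down, and it is the nontrivial content behind the ``main obstacle'' you correctly flagged.
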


\paragraph{Proof overview.}
The above result follows from bounding the covering numbers of \[\mathcal{F}_{QK}:= \{z \mapsto x_\tau^\top W_{QK}z: \|W_{QK}^\top\|_{2,1} \le B_{QK}^{2,1}\}, \text{ and}\]
\[\mathcal{F}_{V}:= \{z \rightarrow W_V^\top z: \|W_{V}\|_{2,1} \le B_V^{2,1}, \|W_{V}\| \leq B_V\}.\]

Note that $|x_\tau^\top W_{QK} z - x_\tau^\top W_{QK} \hat{z}|
\leq \|W_{QK} z - W_{QK} \hat{z}\|$ since $\|x_\tau\| \leq 1$, so the covering number of $\mathcal{F}_{QK}$ is at most the covering number of the class of functions of the form $z \mapsto W_{QK} z$. Therefore, a covering number bound for the vector-valued linear function class suffices to handle both covering numbers:
\begin{lemma}\label{lem:2-inf-cover}
Let $\mathcal{W}: \{W \in \RR^{d_1 \times d_2}: \|W^\top\|_{2,1} \le B_W\}$, and consider the function class $\mathcal{F}: \{ x \mapsto Wx: W \in \mathcal{W} \}$. For any $\eps > 0$ and $x^{(1)}, \dots, x^{(N)} \in \R^{d_1}$ satisfying $\forall i \in [N], \norm{x^{(i)}} \leq B_X$,
\begin{align*}
\log \mathcal{N}_\infty(\mathcal{F}; \eps; x^{(1)}, \dots, x^{(N)}; \|\cdot\|_2)
\lesssim \frac{(B_X B_W)^2}{\eps^2} \log (d_1N).
\end{align*}
\end{lemma}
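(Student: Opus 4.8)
The plan is to deduce this from the scalar $\ell_\infty$ covering bound of \citet{zhang2002covering} by writing $\mathcal{F}$ as a convex hull of one-dimensional linear classes. Let $w_1,\dots,w_{d_1}$ be the rows of $W$, so that $\norm{W^\top}_{2,1}\le B_W$ reads $\sum_i\norm{w_i}_2\le B_W$ and $Wx=\sum_i(w_i\cdot x)e_i=\sum_i\frac{\norm{w_i}}{B_W}\big(\frac{B_W}{\norm{w_i}}w_i\cdot x\big)e_i$ exhibits every $f\in\mathcal{F}$ as a convex combination (weights summing to at most $1$) of elements of the $d_1$ ``active-coordinate'' classes $\mathcal{F}^{(i)}:=\{x\mapsto(u\cdot x)e_i:\norm{u}_2\le B_W\}$. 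On a one-hot output the $\ell_2$ norm is just the magnitude of the active coordinate, so each $\mathcal{F}^{(i)}$ is really a scalar linear class and Zhang's bound gives $\log\mathcal{N}_\infty(\mathcal{F}^{(i)};\delta;\{x^{(j)}\};\norm{\cdot}_2)\lesssim\frac{(B_XB_W)^2}{\delta^2}\log\frac{B_XB_W N}{\delta}$, uniformly in $i$; hence $\mathcal{G}:=\bigcup_i\mathcal{F}^{(i)}$ has $\log\mathcal{N}_\infty(\mathcal{G};\delta;\cdot;\norm{\cdot}_2)\lesssim\log d_1+\frac{(B_XB_W)^2}{\delta^2}\log\frac{B_XB_W N}{\delta}$, and $\mathcal{F}\subseteq\mathrm{conv}(\mathcal{G}\cup\{0\})$.

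It then remains to bound the covering number of $\mathrm{conv}(\mathcal{G})$ in terms of that of $\mathcal{G}$. Every $g\in\mathcal{G}$ has norm at most $B_XB_W$ on the data in the relevant mixed metric ($\ell_\infty$ over the $N$ samples, $\ell_2$ over the $d_1$ coordinates), so a Maurey-type empirical-mean argument — approximating a convex combination by an average of $k$ independently sampled atoms, with the deviation on each sample controlled by a vector Bernstein inequality and a union bound over the $N$ samples — shows that $k=O\big((B_XB_W/\eps)^2\log N\big)$ atoms suffice, each resulting cover element being a $1/k$-quantized, at-most-$k$-sparse combination of atoms of $\mathcal{G}$.

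The crux — and the step I expect to be the main obstacle — is finitizing the atom set without its entropy compounding with $k$. Covering $\mathcal{G}$ at the single scale $\eps$ already costs $\log\mathcal{N}(\mathcal{G};\eps)\sim(B_XB_W/\eps)^2$, and multiplying this by $k\sim(B_XB_W/\eps)^2$ would yield a spurious $(B_XB_W/\eps)^4$ rate. To avoid this I would cover $\mathrm{conv}(\mathcal{G})$ by a chaining argument over a dyadic hierarchy of scales $\eps,2\eps,4\eps,\dots$, using covers of $\mathcal{G}$ at each scale so that the per-scale overcounts telescope rather than compound — equivalently, invoking the sharp bound on the entropy of convex hulls of classes whose entropy grows like $\delta^{-2}$, which gives entropy $\lesssim(B_XB_W/\eps)^2$ up to logarithmic factors; together with the $\log d_1$ incurred in recording which coordinate is active at each of the $k$ atoms, this produces the stated $\frac{(B_XB_W)^2}{\eps^2}\log(d_1N)$. (A more direct reduction — treating each (coordinate, sample) pair as one of $d_1N$ data points and applying scalar Zhang to the flattened weight matrix — is easier to set up but only yields $\ell_\infty$-closeness across coordinates; upgrading to the required $\ell_2$-closeness via the fact that each $Wx^{(j)}$ lies in the radius-$B_XB_W$ $\ell_1$ ball of $\RR^{d_1}$ again costs a factor $(B_XB_W/\eps)^2$, so the chaining route seems to be the one that hits the claimed rate.)
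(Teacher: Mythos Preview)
Your convex-hull decomposition $\mathcal{F}=\mathrm{conv}\big(\bigcup_i\mathcal{F}^{(i)}\big)$ is correct, and you have correctly isolated the obstacle: naive Maurey-on-a-net gives a $(B_XB_W/\eps)^4$ rate. However, the step you flag as the crux is also where the proposal does not go through as written. The dyadic chaining you sketch does not avoid the compounding: at scale $2^j\eps$ the Maurey sparsity is $k_j\sim(B_XB_W/(2^j\eps))^2\log N$ and the per-atom log-net-size from Zhang is also $\sim(B_XB_W/(2^j\eps))^2$, so the product is $\sim(B_XB_W/\eps)^4\,2^{-4j}$ and summing over $j$ still lands at $(B_XB_W/\eps)^4$. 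The ``sharp convex-hull entropy'' bounds you allude to do exist (Carl--Kyrezi--Pajor type results for the critical exponent), but they are delicate in the $\alpha=2$ regime, require a type-2 ambient space (whereas $\ell_\infty^N(\ell_2^{d_1})$ has type-2 constant $\sqrt{\log N}$), and in any case are a much heavier hammer than the problem requires. Invoking them as a black box is the gap.

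The paper's proof is more elementary and exploits structure your convex-hull reduction discards: the atoms in different $\mathcal{F}^{(i)}$ are $\ell_2$-orthogonal across output coordinates. Concretely, the paper runs a \emph{two-stage} cover. First it covers the row-norm profile $v=(\|w_1\|,\dots,\|w_{d_1}\|)$, which lives in the $\ell_1$-ball of radius $B_W$, in $\ell_2$ at scale $\eps_1$ via Maurey sparsification; this costs $\lesssim (B_W/\eps_1)^2\log d_1$. Then, with $v$ fixed, it covers each row $w_i$ separately via Zhang, but with the per-row error budget set to $\eps_2\sqrt{v_i/\|v\|_1}$ so that the $\ell_2$-aggregate error is exactly $\eps_2$. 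The point of this allocation is that the per-row log-covering numbers then sum to
\[
\sum_i \frac{v_i^2\,B_X^2}{\eps_2^2\,(v_i/\|v\|_1)}\;=\;\frac{\|v\|_1^2\,B_X^2}{\eps_2^2}\;\le\;\frac{(B_WB_X)^2}{\eps_2^2},
\]
i.e.\ the costs add linearly in $v_i$ rather than compounding. Balancing $\eps_1B_X+\eps_2=\eps$ gives the stated $(B_XB_W/\eps)^2\log(d_1N)$. In your language, this is Maurey applied to the \emph{weights} $\lambda_i=\|w_i\|/B_W$ of the convex combination first, and only then covering the atoms conditional on those weights --- which is exactly what breaks the compounding.
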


Note that this bound only depends logarithmically on the context length, as desired. The proof can be found in Appendix \ref{sec:proofs}. 

Finally, our analysis is compatible with the following additional components:

\paragraph{Positional embeddings.} In practice, the permutation-invariant symmetry of a Transformer network is broken by adding a \emph{positional embedding} matrix $P \in \RR^{T \times d}$ to the input $X$ at the first layer. In practice, the embedding matrix is often fixed (not trainable). Our results extend to this setting in a straightforward way; see Appendix \ref{sec:positional}. If these matrices are to be trained from a sufficiently large class (say, $\norm{P}_{2,\infty} \leq 1$), the dependence of the log-covering number on $T$ could become linear.\looseness=-1

\paragraph{Residual connections.} Including residual connections (e.g. redefining $f_\tfHead(X)$ as $x_t + f_\tfHead(X)$ for some index $t \in [T]$) simply increases the Lipschitz constant of each layer (w.r.t. the input) by at most $1$. As long as $B_V = \Omega(1)$, this only changes our covering number bounds by a constant factor. \looseness=-1

\paragraph{Multi-head self-attention.} In almost all applications of Transformers, multiple parallel self-attention heads are used, and their outputs aggregated, to allow for a richer representation. Our analysis directly extends to this setting; see Appendix \ref{sec:multihead} for details. When a single attention head is replaced with the sum of $H$ parallel heads, the log-covering number scales up by a factor of $\mathrm{poly}(H)$.\looseness=-1

\paragraph{Layer normalization.} State-of-the-art Transformer networks are trained with layer normalization modules \citep{ba2016layer}, which is generally understood to aid optimization. We keep a variant of layer normalization in the covering number analysis-- it proves to be useful in the analysis of full attention blocks (see Appendix~\ref{subsec:deeptf}), as it keeps the norm of the embedding of each token bounded. Removing these layers would lead to a worse dependence on the spectral norm of the matrices.

\subsection{Capacity bounds for multi-layer Transformers}
\label{subsec:network-capacity}
\newcommand{\tfBlock}{\mathsf{tf{\text-}block}}
In this section, we will extend our results for $L$-layer Transformer blocks. Denote the weights of layer $i$ by $W^{(i)} := \left\{W_Q^{(i)}, {W_K^{(i)}}, W_V^{(i)}, W_C^{(i)}\right\}$. 
Further denote the set of weights up to layer $i$ by $W^{1:i} = (W^{(1)}, \ldots, W^{i-1})$. Denote the input representation of layer $i$ by $g_\tfBlock^{(i)}(X; W^{1:i})$. We inductively define $g_\tfBlock^{(i)}: \R^{T \times d} \to \R^{T \times d}$ starting with $g_\tfBlock^{(1)}(X; W^{1:1}) = X$ (the input):
\begin{align*}
&g_\tfBlock^{(i+1)}\left(X; W^{1:i+1}\right) := \Pi_\mathsf{norm}\left(\sigma\left(\Pi_\mathsf{norm}\left(f\left(g_\tfBlock^{(i)}\left(X; W^{1:i}\right); W^{(i)}\right)\right)\right)W_C^{(i)}\right), \\
&\qquad \text{with}\ f\left(Z; \{W_Q, W_K, W_V, \cdot\}\right)  := \mathsf{RowSoftmax}\left(Z W_Q \left( Z W_K\right)^\top \right) Z W_V,
\end{align*}
where $\Pi_\mathsf{norm}$ denotes layer normalization\footnote{Layer normalization allows for the norms of the outputs of each token in each layer to remain bounded by $1$. Note that the norm of the entire input can still have a dependence on $T$. Our results would go through with a worse dependence on the spectral norms if we were to remove layer norm.} applied to each row. We use a slightly modified version of LayerNorm where instead of normalizing to norm 1, we project it to the unit ball. Let the class of depth-$L$ transformer blocks be 
\begin{align*}
\mathcal{F}^{(L)}_\tfBlock :=
&\left\{ X \rightarrow g_\tfBlock^{(L+1)}(X; W^{1:L+1}) : \forall~i \in[L], \right.\\
&\quad \norm{W_V^{(i)}}_2, \norm{W_K^{(i)}{W_Q^{(i)}}^\top}_2, \norm{W_C^{(i)}}_2 \le C_2, \\
&\quad \left.\norm{W_V^{(i)}}_{2,1}, \norm{{W_K^{(i)}}^\top{W_Q^{(i)}}}_{2,1}, \norm{W_C^{(i)}}_{2,1} \le C_{2,1} \right\}.
\end{align*}
To obtain a final scalar output, we use a linear function of the $\CLS$ output:
\[g_\tfScalar(X;W^{1:L+1}, w) = w^\top \bra{ g\left(X; W^{1:L+1}\right) }_{\CLS,:}.
\]
Let the scalar output function class be $\mathcal{F}^{(L)}_\tfScalar:=\{X \rightarrow w^\top f(X)_{\CLS} : f \in \mathcal{F}^{(L)}_\tfBlock, w \in \RR^d, \norm{w} \le B_w\}$.

\begin{theorem}[Theorem \ref{thm:deeptf_full} (simplified)] \label{thm:deeptf}
Suppose $\forall i \in [m], \left\|X^{(i)}\right\|_{2,\infty} \leq B_X$, then we have
\begin{align*}
\log \mathcal{N}_\infty&(\mathcal{F}_\tfBlock^{(L)}; \epsilon; X^{(1)}, \dots, X^{(m)})
\lesssim (C_{2}L_\sigma)^{O(L)} \cdot \frac{ B_X^2B^2_w C_{2,1}^2 }{\eps^2} \cdot \log(dmT).
\end{align*}
\end{theorem}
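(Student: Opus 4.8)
The plan is to prove the covering number bound by induction on the depth $L$, composing the single-layer analysis from Corollary~\ref{lem:covering} (and its underlying Lemma~\ref{lem:2-inf-cover}) layer by layer. The key structural fact I would establish first is a two-part invariant that propagates through the layers: (i) for any fixed weights, the map $g_\tfBlock^{(i)}$ has $\|\cdot\|_{2,\infty}$-bounded output — this is exactly what the modified LayerNorm $\Pi_\mathsf{norm}$ (projection onto the unit ball) buys us, so every intermediate representation satisfies $\|g_\tfBlock^{(i)}(X;W^{1:i})^\top\|_{2,\infty} \le 1$ for $i \ge 2$, with the base case bounded by $B_X$; and (ii) $g_\tfBlock^{(i)}$ is Lipschitz in the input $Z$ (in the $\|\cdot\|_{2,\infty}$ norm) with constant growing like $(C_2 L_\sigma)^{O(i)}$. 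Claim (ii) follows by the same mean-value-theorem argument used in Lemma~\ref{lem:lipschitz_general}: the $\mathsf{RowSoftmax}$ has bounded Jacobian ($C_\softmax = 2$), so perturbing $Z$ changes each row of the attention output by a controlled amount without picking up a factor of $T$, and the spectral-norm bounds on $W_V, W_{QK}, W_C$ together with $L_\sigma$-Lipschitzness of $\sigma$ and the $1$-Lipschitzness of projection onto the unit ball multiply through each layer.

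Next I would set up the inductive cover. Suppose we have an $\epsilon_i$-cover $\mathcal{C}_i$ of the class $\mathcal{F}^{(i)}_\tfBlock$ on the sample points $\{X^{(j)}\}_{j=1}^m$ (in $\|\cdot\|_{2,\infty}$ over the $T$ tokens, combined over the $m$ samples — so really an $\ell_\infty$-cover over $mT$ points). To extend to layer $i+1$: for each element $\hat g \in \mathcal{C}_i$, the points $\{\hat g(X^{(j)})\}_{j}$ are $mT$ fixed vectors in the unit ball of $\R^d$, and the layer-$(i+1)$ map applied on top of $\hat g$ is a function of the new weights $W^{(i+1)}$ only — precisely the single-layer Transformer head (with $W_C$) analyzed in Corollary~\ref{lem:covering}, which gives a cover of log-size $\lesssim (C_2 L_\sigma)^{O(1)} C_{2,1}^2/\eta^2 \cdot \log(dmT)$ for accuracy $\eta$. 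We also incur, via claim (ii), an error $(C_2 L_\sigma)^{O(i)} \cdot \epsilon_i$ from replacing the true layer-$i$ output by $\hat g$ before feeding it into the (fixed-weight) layer-$(i+1)$ map — here it is crucial that the layer map is Lipschitz in $\|\cdot\|_{2,\infty}$, not Frobenius, so no $\sqrt{T}$ appears. Union-bounding over $\mathcal{C}_i$ and over the per-layer weight cover, the log-cover sizes add: $\log|\mathcal{C}_{i+1}| \le \log|\mathcal{C}_i| + (C_2 L_\sigma)^{O(1)} C_{2,1}^2/\eta^2 \cdot \log(dmT)$, and the accuracy recursion is $\epsilon_{i+1} \le (C_2 L_\sigma)^{O(i)} \epsilon_i + \eta$. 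Finally, the scalar output is handled exactly as in the single-head case: $\|w\| \le B_w$ turns an $\epsilon$-cover in $\|\cdot\|_{2,\infty}$ at the $\CLS$ token into an $\epsilon B_w$-cover of $\mathcal{F}^{(L)}_\tfScalar$, producing the $B_w^2$ factor.

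Unrolling the two recursions is the remaining bookkeeping: to achieve final accuracy $\epsilon$ I allocate a budget of roughly $\epsilon / (L \cdot (C_2 L_\sigma)^{O(L)})$ to each layer's $\eta$, so that the geometrically-amplified errors still sum to $\le \epsilon$; this inflates each $1/\eta^2$ term to $L^2 (C_2 L_\sigma)^{O(L)}/\epsilon^2$, and summing the $L$ layers keeps the form $(C_2 L_\sigma)^{O(L)} B_X^2 B_w^2 C_{2,1}^2 / \epsilon^2 \cdot \log(dmT)$ as claimed (the $\log$ factors from each layer combine into a single $\log(dmT)$ up to constants, and the polynomial-in-$L$ factors are absorbed into $(C_2 L_\sigma)^{O(L)}$). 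The main obstacle I expect is the careful propagation of the $\|\cdot\|_{2,\infty}$ Lipschitz bound through the full layer — in particular verifying that the composition LayerNorm $\circ$ ($W_C$-multiply) $\circ$ $\sigma$ $\circ$ LayerNorm $\circ$ (attention) never introduces a sum over the $T$ token dimension. This is where the modified LayerNorm (projection to the unit ball, keeping per-token norms $\le 1$) and the bounded-Jacobian softmax argument must be combined delicately; getting the constant $C_2 L_\sigma$ to appear with the right (constant, not $T$-dependent) exponent per layer is the crux, and everything else is standard covering-number composition.
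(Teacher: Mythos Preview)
Your proposal is correct and follows essentially the same route as the paper's proof: layer-by-layer inductive cover construction, using the $\Pi_\mathsf{norm}$ projection to keep per-token norms $\le 1$, establishing single-layer $\|\cdot\|_{2,\infty}$-Lipschitzness in both the weights and the input to propagate cover error without incurring a $T$ factor, and then balancing the per-layer error budgets (the paper does this optimally via Lemma~\ref{lem:optimization}). One small slip: the multiplicative factor in your accuracy recursion should be the single-layer Lipschitz constant $(C_2 L_\sigma)^{O(1)}$, not $(C_2 L_\sigma)^{O(i)}$ --- what is needed there is Lipschitzness of one layer in its input, not your cumulative claim (ii) --- though your unrolled answer $(C_2 L_\sigma)^{O(L)}$ is correct regardless.
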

Note that the dependence on $d$ and $T$ is only logarithmic even for deeper networks. The dependence on $(2,1)$-norms of the weight matrices is quadratic. As long as the spectral norms of the matrices are bounded by $1$ and $\sigma$ is 1-Lipschitz (which holds for sigmoids and ReLUs), the exponential dependence on $L$ can be avoided.

\newcommand{\diag}{\mathrm{diag}}

\section{Attention approximates sparse functions}

\label{sec:sparse-rep}
The results in Section~\ref{sec:capacity} show that function classes bottlenecked by self-attention mechanisms have ``small'' statistical capacity in terms of the context size. In this section, we answer the converse question: \emph{which functions of interest are in these classes?} We show that Transformers are able to represent sparse interactions in the context with bounded weight norms, and can thus learn them sample-efficiently.

Consider the class of Boolean functions $f : \{0,1\}^T \rightarrow \RR$ which are \emph{$s$-sparse}: they only depend on $s \ll T$ of their inputs. We will construct mappings from such functions to parameters of a self-attention head $f_\tfHead$ composed with a feedforward network $f_\mlp$; note that $f_\tfHead \circ f_\mlp$ is the standard Transformer block. Intuitively, $f_\tfHead$ is constructed to ``keep'' the correct $s$-dimensional subset of inputs and ``forget'' the rest, while $f_\mlp$ ``memorizes'' the values of $f$ on these $s$ inputs, using $2^{O(s)}$ parameters.

\paragraph{Setup.} We consider the classes of Boolean functions $f : \{0,1\}^T \rightarrow \RR$ representable by bounded-norm scalar-output Transformer heads $f_\tfScalar : \RR^{T \times d} \rightarrow \RR$. To do this, we must first fix a mapping from $\{0,1\}^T$ to $\RR^{T \times d}$; we discuss several natural choices in Appendix~\ref{subsec:appendix-approx-setup}. The simplest of these uses a sum of token and positional embeddings $X(b)_{t,:} := e_{b_t} + v_t$, for a set of approximately orthogonal unit vectors $\{e_0, e_1\} \cup \{ v_1, \ldots, v_T\}$ of dimension $d = \Theta(\log T)$.
After choosing a mapping $X(b)$, the setup of the representation problem is as follows: given $f(b)$, find Transformer weights $\theta_\tfHead$ and feedforward network weights $\theta_\mlp$ such that
\begin{align*}
    &f_\tfmlp(X(b) ; \theta_\tfHead, \theta_\mlp) :=
    \quad f_\mlp \pa{ f_\tfHead( X(b) ; \theta_\tfHead) ; \theta_\mlp } \approx f(b), \forall b \in \{0,1\}^T.
\end{align*}

\paragraph{Main representational results.}
For any size-$s$ subset of indices $\gI \subseteq [T]$,
we show that Transformer blocks can represent all $\gI$-sparse Boolean functions, whose values only depend on the inputs at the coordinates in $\gI$.
We give informal statements of these approximation results below, and present the precise statements in Appendix~\ref{subsec:appendix-approx-results}.

\begin{proposition}[Sparse variable creation via Transformers; informal]
\label{prop:sparse-variable-creation}
Under any of the input mappings $X(b)$, we have the following guarantees:
\begin{itemize}[itemsep=0.08cm,leftmargin=*]
\vspace{-2ex}\item $f_\tfScalar$ can approximate a particular monotone symmetric $s$-sparse Boolean function, with norms $\norm{W_Q}_F \leq O\pa{\log(Ts)}; \norm{W_K}_F, \norm{W_V}_F, \norm{W_C}_F \leq O(s)$.
\item $f_\tfmlp$ can exactly represent symmetric $s$-sparse functions, with the same Transformer weight norms as above; the feedforward network weights satisfy $\norm{W_1}_F, \norm{W_2}_F, \norm{w}_F \leq O(\mathrm{poly}(s))$.
\item $f_\tfmlp$ can exactly represent general $s$-sparse functions, with the same Transformer weight norms as above; the feedforward network weights satisfy $\norm{W_1}_F, \norm{W_2}_F, \norm{w}_F \leq O(2^s \cdot \mathrm{poly}(s))$.
\end{itemize}
\end{proposition}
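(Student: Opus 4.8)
The plan is to construct the Transformer block in two stages, matching the intuition in the text: first use the self-attention head $f_\tfHead$ to \emph{extract} the $s$ relevant coordinates (indexed by $\gI$) into a bounded-dimensional summary, then use the feedforward network $f_\mlp$ to \emph{memorize} the truth table of $f$ restricted to those coordinates. For the extraction step I would have the $\CLS$ query attend (via $W_Q, W_K$) only to the $s$ positions in $\gI$, by aligning $x_{\CLS}$ with the positional embeddings $v_t$ for $t \in \gI$ and giving those alignment scores a large multiplicative gain $\Theta(\log(Ts))$ so that the softmax puts mass $\approx 1/s$ on each relevant position and negligible mass elsewhere; the near-orthogonality of $\{e_0,e_1\}\cup\{v_t\}$ (and the $d = \Theta(\log T)$ dimension, via a Johnson--Lindenstrauss-type construction) is what makes this possible with $\norm{W_Q}_F \le O(\log(Ts))$ and the other attention matrices of Frobenius norm $O(s)$. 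The value matrix $W_V$ (and $W_C$) is then chosen so that the head outputs a vector whose coordinates encode $\tfrac{1}{s}\sum_{t \in \gI}\mathbf{1}[b_t = 1]$ (for the symmetric case) or, more generally, the multiset $\{(t, b_t) : t \in \gI\}$ written into $s$ disjoint blocks of coordinates (for the general case) — here the $O(s)$ norm bound comes from needing to scale up by $s$ to undo the $1/s$ attention averaging.

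For the memorization step, in the \emph{symmetric} case the head output is (a small perturbation of) the single scalar $k/s$ where $k = \#\{t \in \gI : b_t = 1\} \in \{0,\dots,s\}$, so $f_\mlp$ only needs to interpolate an arbitrary function of one real variable at $s+1$ equally spaced points; a standard two-layer ReLU network with $O(s)$ hidden units and $\mathrm{poly}(s)$ weight norms does this (e.g. a sum of $O(s)$ bump functions, each a difference of shifted ReLUs with slope $O(s)$, plus a small robustness margin to absorb the $o(1/s)$ attention error). In the \emph{general} $s$-sparse case, the head output lies near one of $2^s$ well-separated points in $\R^{O(s)}$ (one per assignment $b|_\gI \in \{0,1\}^s$), and $f_\mlp$ must output the prescribed value $f(b)$ on each; I would implement this with a one-hot-style lookup: a first layer of $2^s$ neurons, neuron $j$ being a thresholded inner product that fires (value $1$) iff the input is within the cluster for assignment $j$ and $0$ otherwise (achievable since the clusters are separated by a constant and the thresholds need only slopes $O(\mathrm{poly}(s))$ to be robust to the attention error), then a linear second layer reading off $f$-values — giving $\norm{W_1}_F, \norm{W_2}_F, \norm{w}_F \le O(2^s \cdot \mathrm{poly}(s))$ as claimed, the $2^s$ coming from the square-root-of-number-of-neurons in the Frobenius norm times the per-neuron norm.

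The main obstacle I expect is controlling the \emph{approximation errors} through the softmax and verifying they are small enough relative to the separation margins used by $f_\mlp$, all while keeping the weight norms at the stated bounds. Concretely: the "forgotten" positions contribute an error of order $T \cdot e^{-\Theta(\log(Ts))} = o(1/s)$ to the attention weights, and this propagates (scaled by the $O(s)$-norm $W_V$) to an $o(1)$ error in the head output; I need the bump/threshold widths in $f_\mlp$ to be $\Omega(1/s)$-separated so this error is harmless, which is exactly why the symmetric case tolerates $\mathrm{poly}(s)$ MLP norms (gaps of size $1/s$, slopes $O(s)$) and why I should choose the gain constant in $W_Q$ with a $\log s$ (not just $\log T$) term. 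A secondary bookkeeping obstacle is that the three bullets share the \emph{same} Transformer weights but differ in what $f_\mlp$ does; I would therefore first prove the general head construction once (Lemma: the head can write $\{(t,b_t):t\in\gI\}$ into $O(s)$ coordinates with the stated norms, up to $o(1/s)$ error), and then treat the three MLP constructions as separate, increasingly powerful lookup tables on top of it — the monotone-symmetric bullet being the special case where even $f_\mlp$ is unnecessary because a single linear read-off of $k/s$ (clamped by the Lipschitz $\sigma$ already inside $f_\tfScalar$) suffices.
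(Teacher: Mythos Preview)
Your proposal is essentially the paper's approach: select $W_Q,W_K$ so the $\CLS$ attention puts (near-)uniform mass on $\gI$, bound the softmax leakage by $T e^{-\Theta(\log(Ts))}$, then feed the head output into a ReLU network that interpolates $s+1$ (symmetric) or $2^s$ (general) well-separated values. The error bookkeeping you describe---$\log(Ts)$ gain to make leakage $o(1/s)$, bump widths $\Omega(1/s)$---is exactly what the paper does.

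There is one genuine gap. You assert that a \emph{single} head can ``write the multiset $\{(t,b_t):t\in\gI\}$ into $s$ disjoint blocks of coordinates.'' Under the additive positional-embedding mappings $X(b)_t = e_{b_t}+v_t$ this fails: after your attention step the head output is (up to $o(1/s)$) a fixed linear map applied to $\tfrac{1}{s}\sum_{t\in\gI}(e_{b_t}+v_t)$. The $v_t$-part is constant in $b$, and the $e_{b_t}$-part depends on $b$ only through the \emph{count} $k=\#\{t\in\gI:b_t=1\}$; no choice of $W_V,W_C$ can recover which particular indices are set, so the $2^s$ clusters you need for the general-case lookup collapse to $s+1$. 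The paper's fix, which you should adopt, is to use $s$ parallel heads in the general case for these two input mappings, with head $h$ attending to the single position $n^{-1}(h)\in\gI$ (so each head outputs $\chi_{n^{-1}(h)}$ in its own coordinate). This costs nothing in the stated norm bounds since each head's $W_K$ now has norm $O(1)$ rather than $O(s)$. Your single-head plan \emph{does} work under the bag-of-vectors mapping $X(b)_t=b_tv_t$, because there $W_V$ can route $v_t\mapsto e_{n(t)}^{(k)}$ and the token/position information is already multiplicatively entangled; the paper exploits exactly this.

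A smaller point: your ``thresholded inner product that fires iff the input is within the cluster'' is not realizable by a single ReLU neuron (a ReLU gives a half-space, not a bounded region). You can salvage a two-layer construction in the special case that the $2^s$ cluster centers are (near) hypercube corners $\{\pm 1\}^s$, using the max-correlation trick $\mathrm{ReLU}(\langle 2c-\mathbf{1},x\rangle - (s-1))$; but the paper instead uses a generic three-layer construction (coordinate-wise bumps, then an AND layer, then the readoff), which is more robust to the exact geometry of the head output and worth stating as a separate lemma.
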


These results and the capacity bounds from Section~\ref{sec:capacity} are simultaneously meaningful in the regime of $s \ll \log T$.
An appealing interpretation for the $s = 2$ case is that a single Transformer head can learn a single logical gate (i.e. $\mathsf{AND}, \mathsf{OR}, \mathsf{NAND}, ...$) in a Boolean circuit, with $d$ and weight norms scaling as $\log T$.

\paragraph{Proof ideas.}
Each construction uses the same basic idea: select $W_Q, W_K$ so that the attention mixture weights approximate the uniform distribution over the relevant positions, then use the ReLU network to memorize all distinct values of $f$. Full proofs are given in Appendix~\ref{subsec:appendix-approx-proofs}.

\paragraph{Other realizable functions.} Since there are $\binom{T}{s}$ $s$-sparse subsets of input indices, the sample complexity of learning a sparse Boolean function must scale at least as $\Omega(s \log T)$, matching the capacity bounds in terms of the $\log T$ dependence. However, sparse functions are not the \emph{only} potentially useful functions realizable by bounded-norm Transformers. For instance, with $W_Q = W_K = 0$, so that all scores are zero, a Transformer head can take an average of $T$ embeddings $X \rightarrow \frac{1}{T} \mathbf{1}^\top X W_V$. More generally, departing from the ``orthogonal context vectors'' embedding of Boolean inputs but using the same constructions as in this section, it is straightforward to conclude that bounded-norm Transformers can compute global averages of tokens whose $XW_K$ embeddings lie in an $s$-dimensional subspaces. This is why our results do not contradict the empirical finding of \citet{clark2019does} that some attention heads in trained Transformer models attend broadly. It is also straightforward to extend some of these results beyond Boolean domains; see Section~\ref{subsec:appendix-beyond-boolean} for a sketch.

\paragraph{Bypassing theoretical limitations.} \citet{hahn2020theoretical} points out that with constant weight norms, a Transformer's ability to express global dependencies degrades with context length: as $T \rightarrow \infty$, the maximum change in output caused by altering a single input token approaches 0, and thus various interesting formal languages cannot be modeled by a Transformer in this particular limit. The constructions in this section show that this can be circumvented by allowing $d$ and the weight norms to scale as $\log(T)$.
\section{Experiments}

Sections~\ref{sec:capacity} and \ref{sec:sparse-rep} show theoretically that Transformers can learn sparse Boolean functions, with sparse regression-like sample complexity (in terms of the $\log T$ dependence). In this section, we present an empirical study which probes the end-to-end sample efficiency of Transformer architectures with standard training and architecture hyperparameters, and how it scales with the context length $T$.

\begin{figure}
    \centering
    \includegraphics[width=0.4\linewidth]{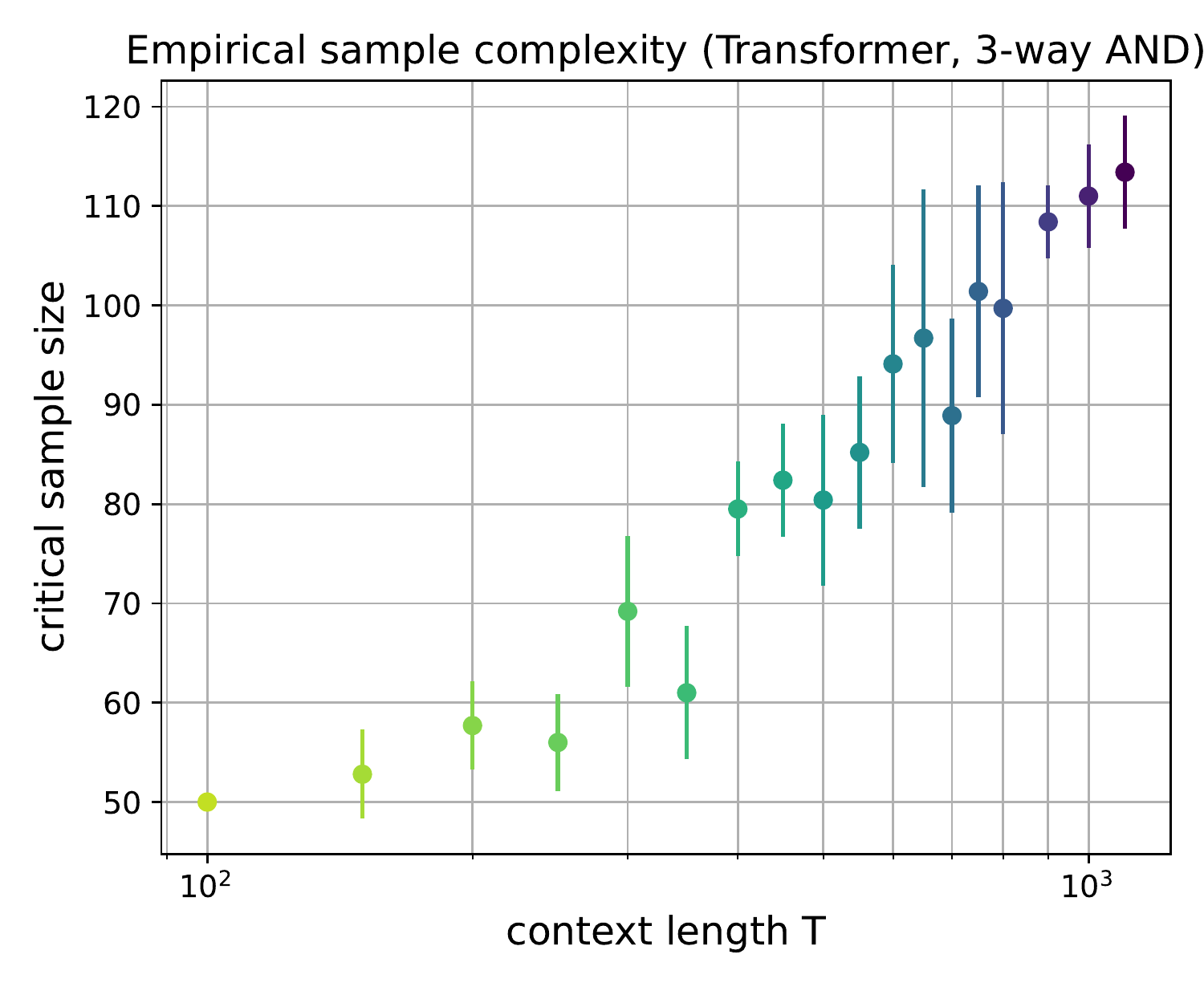}
    \label{fig:sparse-and-scaling-laws}
    \caption{Main experimental finding: the sample complexity of learning a $3$-sparse $\mathsf{AND}$ function of $T$ input bits with Transformers. For each $T$, we measure the smallest sample size $m$ necessary to reach $100\%$ validation accuracy on $\geq 80\%$ of random trials. We find that this threshold scales logarithmically with $T$.}
\end{figure}

\paragraph{Setup.} We introduce a synthetic benchmark to support our analysis, in which we measure the statistical limit for learning sparse Boolean functions with Transformers.
We choose a distribution $\gD$ on $\{0,1\}^T$, and a family of distinct functions $\{ f_i : \{0,1\}^T \rightarrow \{0,1\} \}_{i=\in[N]}$, where $N$ grows with $T$. Then, we choose an $i^* \in [N]$ uniformly at random, and train a Transformer binary classifier on $m$ samples from $\gD$, with labels given by $f_{i^*}$, evaluating generalization error via holdout samples. Then, for any learner to reach $100\%$ accuracy on this sample, $m \geq \Omega(\log N)$ samples are required (one sample reveals at most one bit of information about $i^*$). We can then measure the empirical scaling of the sufficient sample size $m$ to solve this problem, in terms of $N$ (and thus $T$).

\paragraph{Learning sparse conjunctions.} Concretely, we can choose $f_i$ be the set of all $\binom{T}{s}$ conjunctions of $s$ inputs (e.g. $y = x_2 \wedge x_3 \wedge x_{10}$), fixing the input distribution $\gD$ to be i.i.d. Bernoulli (we choose the bias to balance the labels). The model must learn which subset of $s$ features are relevant, out of $\binom{T}{s}$ possibilities; this requires at least $m \geq \Omega(s \log T)$ samples. The theoretical analysis predicts that the sample complexity of learning any function realizable by a bounded-norm Transformer should asymptotically have the same $\log T$ scaling. We choose a fixed sparsity parameter $s=3$, and measure how the empirical sample complexity (i.e. the smallest sample size $m(T)$ at which model training succeeds with non-negligible probability) scales with $T$.

\begin{figure}
    \centering
    \includegraphics[width=0.4\linewidth]{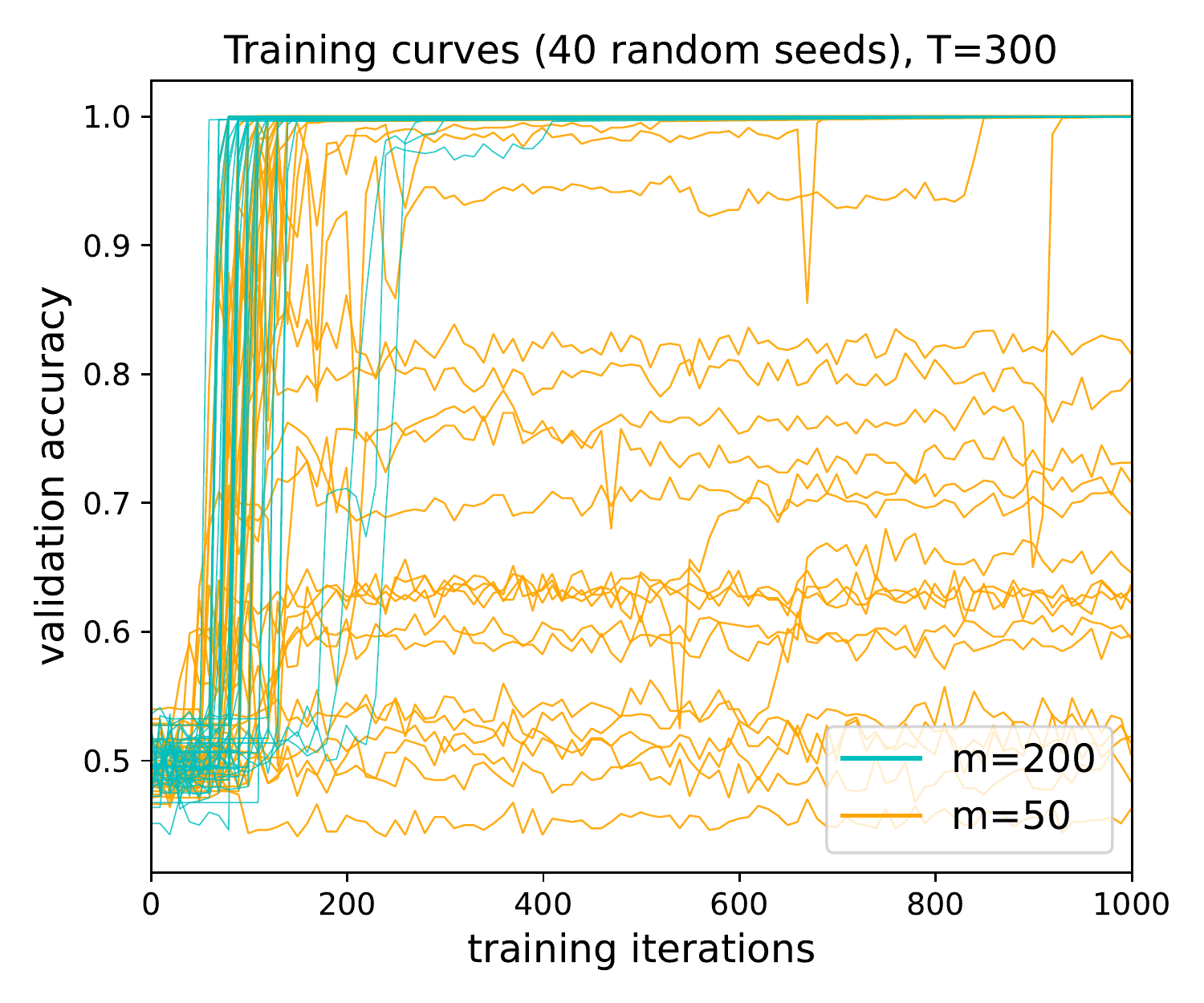}
    \hspace{-4mm}
    \includegraphics[width=0.4\linewidth]{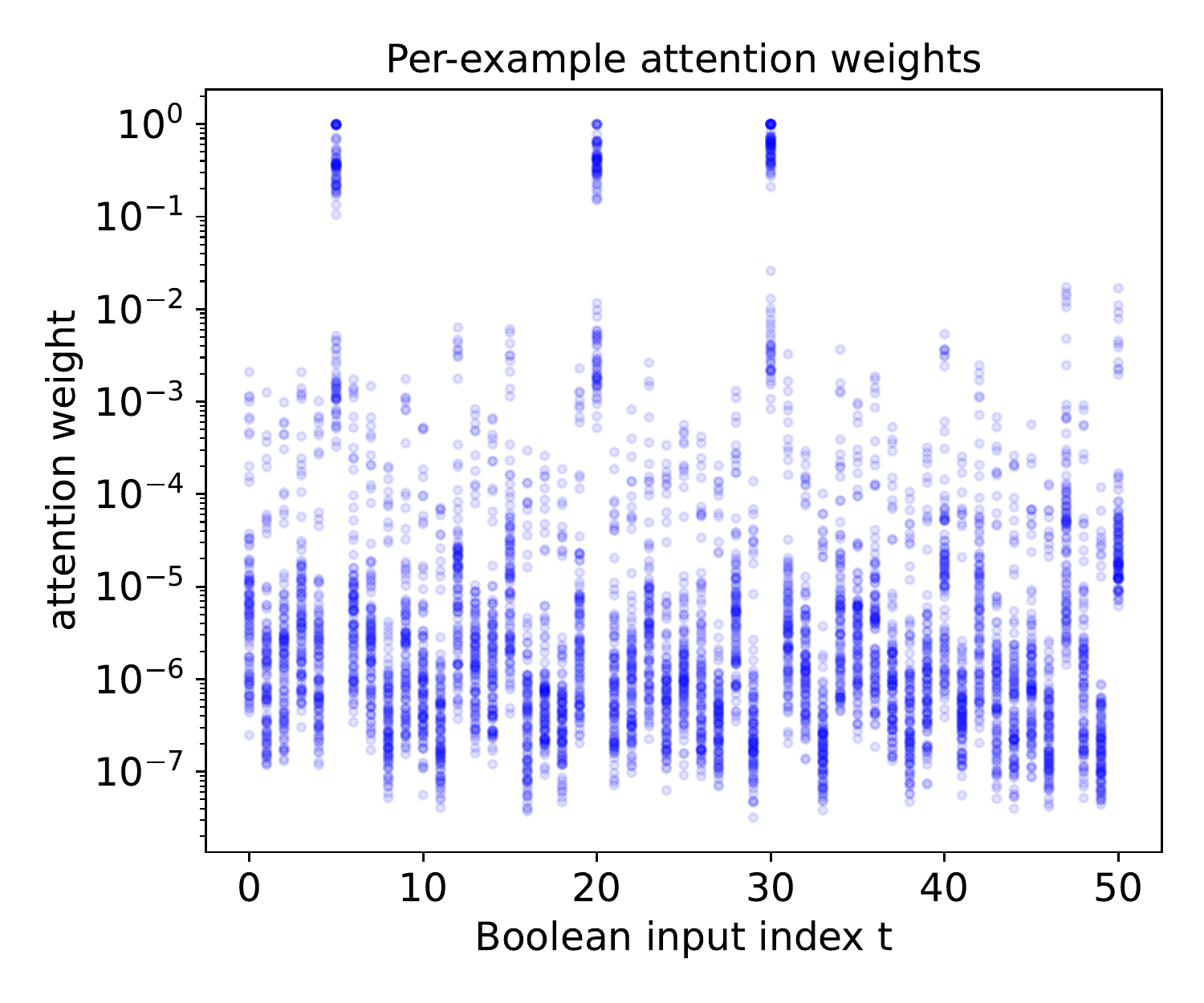}
    \caption{Additional visualizations for the sparse function learning experiments. \emph{Left:} Examples of validation accuracy curves on the same problem instance ($T = 300$), with sample sizes above ($m = 200$) and below ($m = 50$) the threshold ($\approx 70$ from Figure~\ref{fig:sparse-and-scaling-laws}). Training accuracy goes to $100\%$ in both cases, but the Transformer overfits (orange curves) when $m$ is too small. \emph{Right:} Per-example attention weights for a successfully trained model ($T = 50$, $m = 300$, $\gI = \{5, 20, 30\}$). The input-dependent attention weights approximately zero out the irrelevant bits. }
    \label{fig:sparse-and-extra}
\end{figure}

\paragraph{Results.} With architecture and training hyperparameters typical of real Transformer setups (except the number of layers, which we set to $1$), we indeed observe that the empirical sample complexity appears to scale as $\log T$; see Figure~\ref{fig:sparse-and-scaling-laws}. Despite the exponentially large support of input bit strings $x$ and large total parameter count ($\sim 10^5$), the attention weights vanish on the $T-s$ irrelevant coordinates, and the model converges to sparse solutions; this is visualized in Figure~\ref{fig:sparse-and-extra} \emph{(right)}. Details are provided in Appendix~\ref{subsec:appendix-sparse-and}; in particular, model training near the statistical threshold is extremely unstable, and extensive variance reduction (best of $5$ random restarts; $40$ replicates; a total of $\sim 10^4$ training runs across each $T,m$) was necessary to produce these scaling plots.

\paragraph{Beyond our analysis: sparse parities.} When choosing the family of sparse functions $\{f_i\}$, we can replace the $\mathsf{AND}$ operation with $\mathsf{XOR}$: the label is the parity of a randomly chosen subset of i.i.d. uniform input bits. In this setting, unlike the $\mathsf{AND}$ case, there is a computational-statistical gap: $\Theta(s \log T)$ samples suffice to identify, but the fastest known algorithms for learning parities with noise require $T^{\Omega(s)}$ time. In the statistical query model, $\Omega(T^s)$ iterations of noisy batch gradient descent are necessary \citep{kearns1998efficient}.
Figure~\ref{fig:sparse-parity} (with details in Appendix~\ref{subsec:appendix-sparse-parity}) shows that when trained with i.i.d. samples, Transformer models can learn sparse parities.
This raises an intriguing question, which is the \emph{computational} analogue of the current work's \emph{statistical} line of inquiry: how does local search (i.e. gradient-based training) succeed at finding solutions that correspond to sparse discrete functions? The present work merely shows that these solutions exist; we intend to address the computational mystery in future work.

\begin{figure}
    \centering
    \includegraphics[width=0.99\linewidth]{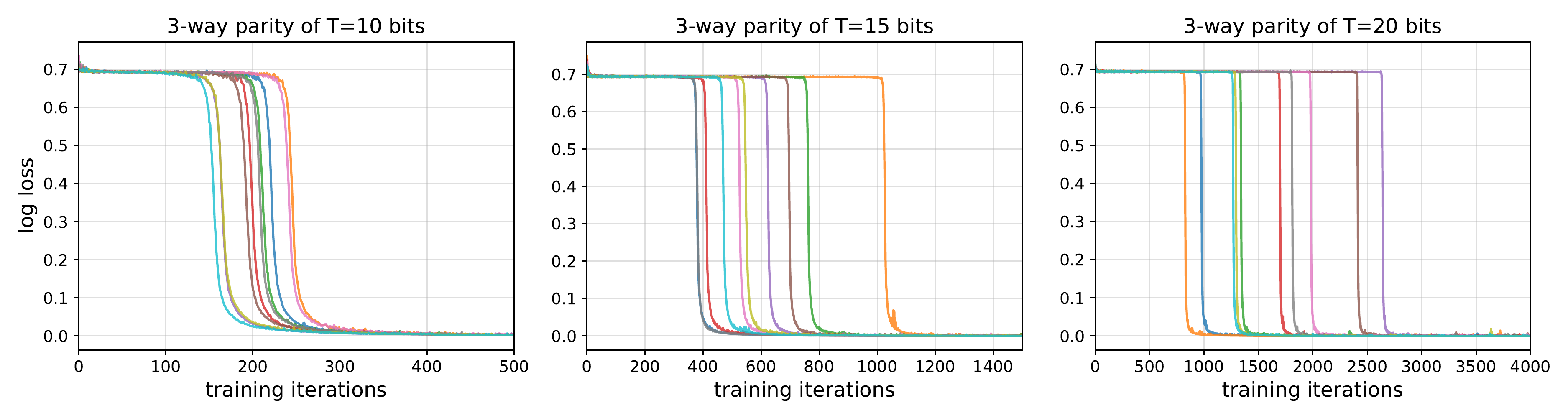}
    \vspace{-3mm}
    \caption{A curious empirical finding: Transformers can learn sparse parities. Loss curves (across 10 random seeds for initialization and SGD samples) are shown for this setup with $s=3, T \in \{10,15\}$, exhibiting phase transitions from random guessing to $100\%$ accuracy. See Appendix~\ref{subsec:appendix-sparse-parity} for details.}
    \label{fig:sparse-parity}
\end{figure}

\section{Conclusion and future work}

This work establishes a statistical analysis of attention and self-attention modules in neural networks. In particular, we identify an inductive bias we call \emph{sparse variable creation}, consisting of (1) covering number-based capacity bounds which scale as $\log T$, and (2) constructions which show that self-attention models with small weight norms can represent sparse functions.
This analysis is supported by an empirical study on learning sparse Boolean functions with Transformers. We hope that these rigorous connections between attention and sparsity, as well as the proposed experimental protocols, will inform the practice of training and regularizing these models, and the design of future attention-based architectures.

We believe that it is possible to refine the covering number bounds (where we have only sought to obtain optimal dependences on $T$) as well as the representation results (where we have not used the structure of the MLP, beyond its capacity for exhaustive memorization). Significant challenges (which are not specific to attention) remain in closing the theory-practice gap: precisely understanding the role of depth, as well as the trajectory of the optimization algorithm.

An exciting line of empirical work has made progress on understanding and interpreting state-of-the-art Transformer language models by examining the activations of their attention mechanisms \citep{clark2019does,tenney2019bert,rogers2020primer}. In some cases, these works have found instances in which Transformers seem to have learned features that are reminiscent of (sparse) hand-crafted features used in natural language processing. Reconciling our theoretical foundations work with this area of \emph{BERTology} is an avenue for future synthesis.

\section*{Acknowledgements}

Sham Kakade acknowledges funding from the Office of Naval Research under award N00014-22-1-2377 and the National Science Foundation Grant under award \#CCF-1703574. We thank Nati Srebro for his questions regarding the removal of the dimension factor in an earlier version of this manuscript.

\bibliography{bib}
\bibliographystyle{plainnat}

\appendix

\newpage
\section{Proofs of capacity bounds} \label{sec:proofs}

In this section we present the full proofs (including the omitted proofs) of our capacity bounds. We also cover relevant background and useful technical lemmas.
\subsection{Rademacher complexity and generalization bounds}\label{sec:rad_bounds}
Here we briefly review Rademacher complexity and its relationship to covering numbers and generalization bounds. We refer the reader to \cite{BM:2002} for a more detailed exposition.

\begin{definition}[Empirical Rademacher complexity]
For a given class of functions $\mathcal{F} = \{f : \gX \rightarrow \RR\}$ and $\{ z^{(i)} \in \gX \}_{i=1}^m$, the empirical Rademacher complexity $\hat{\mathcal{R}}(\mathcal{F}; z^{(1)}, \dots, z^{(m)})$ is defined as
\[
\hat{\mathcal{R}}(\mathcal{F}; z^{(1)}, \dots, z^{(m)}) = \frac{1}{m} \mathbb{E}_{\epsilon}\left[\sup_{f \in \mathcal{F}} \sum_{i=1}^m \epsilon_i f(z^{(i)})\right],
\]
where $\epsilon$ is a vector of $m$ i.i.d. Rademacher random variables ($\Pr[\epsilon_i = 1] = \Pr[\epsilon_i=-1] =1/2$).
\end{definition}

In order to relate the Rademacher complexity and $\ell_\infty$-covering numbers, we use a modified version of Dudley's metric entropy. 
\begin{lemma}[\citet{dudley1967sizes}; modified]\label{lem:Dudley}
Consider a real-valued function class $\mathcal{F}$ such that $|f| \leq A$ for all $f \in \mathcal{F}$. Then
\[
\hat{\mathcal{R}}(\mathcal{F}; z^{(1)}, \dots, z^{(m)})
\le c \cdot \inf_{\delta \geq 0} \left(\delta +
\int_\delta^A \sqrt{\frac{\log \mathcal{N}_\infty(\mathcal{F}; \epsilon; z^{(1)}, \dots, z^{(m)})}{m}} \, d\epsilon\right)
\]
for some constant $c >0$.
\end{lemma}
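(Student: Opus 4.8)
The plan is to run the classical Dudley chaining argument, using minimal $\ell_\infty$-covers of $\mathcal F$ at a geometric sequence of scales. Fix the sample $z^{(1)},\dots,z^{(m)}$ once and for all and identify each $f\in\mathcal F$ with its evaluation vector $(f(z^{(1)}),\dots,f(z^{(m)}))\in\R^m$, equipped with the empirical seminorm $\|g\|_m:=\big(\tfrac1m\sum_{i=1}^m g(z^{(i)})^2\big)^{1/2}$. Two elementary facts drive everything: (i) $\|g\|_m\le\max_i|g(z^{(i)})|$, so any $\ell_\infty$ $\epsilon$-cover in the sense of the $\mathcal N_\infty$ definition is also an $\epsilon$-cover for $\|\cdot\|_m$; and (ii) $\tfrac1m\sum_i|g(z^{(i)})|\le\max_i|g(z^{(i)})|$, which is what will make the truncation term clean. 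Since $|f|\le A$ uniformly, the evaluation set has $\ell_\infty$-diameter at most $2A$, hence $\mathcal N_\infty(\mathcal F;2A;z^{(1)},\dots,z^{(m)})=1$.

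Next I would build the chain. Let $\epsilon_j:=2^{1-j}A$ for $j\ge0$, and let $\mathcal C_j\subseteq\mathcal F$ be a minimal (internal) $\ell_\infty$ $\epsilon_j$-cover, so $\mathcal C_0=\{f_0\}$ is a singleton and $\log|\mathcal C_j|=\log\mathcal N_\infty(\mathcal F;\epsilon_j;z^{(1)},\dots,z^{(m)})$; since covering numbers are non-increasing in the radius, $|\mathcal C_{j-1}|\le|\mathcal C_j|$. Given $\delta\ge0$, let $N$ be minimal with $\epsilon_N\le2\delta$ (send $N\to\infty$ if $\delta=0$). For each $f$ pick $\hat f_j\in\mathcal C_j$ with $\|f-\hat f_j\|_\infty\le\epsilon_j$ and telescope $f=f_0+\sum_{j=1}^N(\hat f_j-\hat f_{j-1})+(f-\hat f_N)$.

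Then I would apply $\E_\epsilon\sup_f\tfrac1m\sum_i\epsilon_i(\cdot)(z^{(i)})$ term by term. The $f_0$ term has zero mean. The tail term is controlled deterministically via fact (ii): $\big|\tfrac1m\sum_i\epsilon_i(f-\hat f_N)(z^{(i)})\big|\le\|f-\hat f_N\|_\infty\le\epsilon_N\le2\delta$. For each link $j$, as $f$ varies the difference $\hat f_j-\hat f_{j-1}$ ranges over a finite set of at most $|\mathcal C_j|^2$ functions, each with $\|\hat f_j-\hat f_{j-1}\|_m\le\epsilon_j+\epsilon_{j-1}=3\epsilon_j$ by the triangle inequality and fact (i); Massart's finite-class lemma (a union bound over this finite set together with Hoeffding's inequality, the $1/\sqrt m$ arising from the empirical $\ell_2$ normalization) then gives $\E_\epsilon\sup_f\tfrac1m\sum_i\epsilon_i(\hat f_j-\hat f_{j-1})(z^{(i)})\le 6\,\epsilon_j\sqrt{\log\mathcal N_\infty(\mathcal F;\epsilon_j;z^{(1)},\dots,z^{(m)})/m}$. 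Summing over $j$ yields $\hat{\mathcal R}(\mathcal F;z^{(1)},\dots,z^{(m)})\le2\delta+6\sum_{j=1}^N\epsilon_j\sqrt{\log\mathcal N_\infty(\mathcal F;\epsilon_j;z^{(1)},\dots,z^{(m)})/m}$.

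Finally I would convert the sum to the integral: since $\epsilon_j-\epsilon_{j+1}=\epsilon_j/2$ and $\epsilon\mapsto\mathcal N_\infty(\mathcal F;\epsilon;\cdot)$ is non-increasing, $\epsilon_j\sqrt{\log\mathcal N_\infty(\epsilon_j)}\le2\int_{\epsilon_{j+1}}^{\epsilon_j}\sqrt{\log\mathcal N_\infty(\epsilon)}\,d\epsilon$, so telescoping over $j=1,\dots,N$ bounds the sum by $2\int_{\epsilon_{N+1}}^{A}\sqrt{\log\mathcal N_\infty(\epsilon)}\,d\epsilon$; minimality of $N$ gives $\epsilon_{N+1}>\delta/2$, and after reparametrizing $\delta$ and folding all numerical factors into a universal constant $c$ one obtains $\hat{\mathcal R}(\mathcal F;z^{(1)},\dots,z^{(m)})\le c\big(\delta+\int_\delta^A\sqrt{\log\mathcal N_\infty(\mathcal F;\epsilon;z^{(1)},\dots,z^{(m)})/m}\,d\epsilon\big)$, and taking $\inf_{\delta\ge0}$ gives the claim. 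This is essentially a textbook chaining argument, so I expect no deep obstacle; the only parts requiring genuine care are invoking Massart's lemma with the correct empirical-$\ell_2$ normalization and the bookkeeping in the last step that reconciles the discrete chain scales $\epsilon_j$ with the continuous parameters $\delta$ and $A$ while keeping all absorbed constants universal.
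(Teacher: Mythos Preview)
Your chaining argument is correct and is precisely the standard route to Dudley's entropy integral. The paper itself does not spell this out: its proof is a two-line sketch that simply invokes the classical $\ell_2$-covering version of Dudley's bound as a black box and then observes that (a) $\mathcal N_2\le\mathcal N_\infty$ on any sample, so the $\ell_\infty$ version follows immediately, and (b) the textbook statement with $\delta=0$ and integration over $[0,\infty)$ can be replaced by $[\delta,A]$ since $|f|\le A$ kills the tail and the truncation at scale $\delta$ costs at most $\delta$. Your proof unwinds exactly this black box, making the same $\ell_\infty\!\to\!\ell_2$ reduction explicit in your fact (i) and then running Massart plus geometric chaining by hand; so the two approaches are really the same argument at different levels of detail, with yours being fully self-contained.
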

\begin{proof}[Proof sketch]
The original statement is for 2-norm covering number, but the $\infty$-norm case reduces to the 2-norm case because $N_2(\cdot) \leq N_\infty(\cdot)$. The original statement also fixes $\delta=0$ rather than taking an infimum. Also, the standard statement has the integral going from 0 to $\infty$, but these are easily replaced with $\delta$ and $A$.
\end{proof}

For our paper, we will instantiate the above lemma for log covering numbers scaling as $1/\eps^2$.
\begin{corollary}[Rademacher complexity via covering number]\label{lem:cov-rad}
Consider a real-valued function class $\mathcal{F}$ such that $|f| \leq A$ for all $f \in \mathcal{F}$. Suppose $\log \mathcal{N}_\infty(\mathcal{F}; \epsilon; z^{(1)}, \dots, z^{(m)}) \le C_\mathcal{F}/\eps^2$, then
\[
\hat{\mathcal{R}}(\mathcal{F}; z^{(1)}, \dots, z^{(m)})
\le c \cdot \sqrt{\frac{C_\mathcal{F}}{m}} \cdot \left(1 + \log \left(A \sqrt{m/C_\mathcal{F}}\right)\right)
\]
for some constant $c > 0$.
\end{corollary}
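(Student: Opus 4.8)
The plan is to apply the modified Dudley entropy bound of Lemma~\ref{lem:Dudley} and then optimize over the free parameter $\delta$. First I would substitute the hypothesis $\log \mathcal{N}_\infty(\mathcal{F}; \epsilon; z^{(1)}, \dots, z^{(m)}) \le C_\mathcal{F}/\epsilon^2$ into the entropy integral, which reduces it to an elementary integral:
\[
\int_\delta^A \sqrt{\frac{\log \mathcal{N}_\infty(\mathcal{F}; \epsilon; z^{(1)}, \dots, z^{(m)})}{m}}\, d\epsilon
\le \sqrt{\frac{C_\mathcal{F}}{m}} \int_\delta^A \frac{d\epsilon}{\epsilon}
= \sqrt{\frac{C_\mathcal{F}}{m}}\, \log\frac{A}{\delta}.
\]
Combined with Lemma~\ref{lem:Dudley}, this yields $\hat{\mathcal{R}}(\mathcal{F}; z^{(1)}, \dots, z^{(m)}) \le c\left(\delta + \sqrt{C_\mathcal{F}/m}\,\log(A/\delta)\right)$ for every $\delta \in [0, A]$.

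Next I would make the closed-form choice $\delta = \sqrt{C_\mathcal{F}/m}$. Under this choice $\log(A/\delta) = \log\big(A\sqrt{m/C_\mathcal{F}}\big)$, so
\[
\hat{\mathcal{R}}(\mathcal{F}; z^{(1)}, \dots, z^{(m)})
\le c\left(\sqrt{\frac{C_\mathcal{F}}{m}} + \sqrt{\frac{C_\mathcal{F}}{m}}\log\left(A\sqrt{m/C_\mathcal{F}}\right)\right)
= c\sqrt{\frac{C_\mathcal{F}}{m}}\left(1 + \log\left(A\sqrt{m/C_\mathcal{F}}\right)\right),
\]
which is exactly the claimed bound after renaming the absolute constant. (One could instead keep the exact infimum over $\delta$, but the explicit choice already gives the stated rate up to the constant $c$.)

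I do not expect any substantial obstacle here; this is a routine consequence of Dudley's entropy integral. The only points to handle carefully are the degenerate regime $C_\mathcal{F}/m \ge A^2$ and the requirement that the chosen $\delta$ lie in $[0,A]$ where Lemma~\ref{lem:Dudley} applies. Both are dispatched by the same case split: when $\sqrt{C_\mathcal{F}/m} \ge A$, the conclusion is immediate from the a priori bound $\hat{\mathcal{R}}(\mathcal{F}; z^{(1)},\dots,z^{(m)}) \le \frac{1}{m}\sum_{i=1}^m |f(z^{(i)})| \le A$; otherwise $\delta = \sqrt{C_\mathcal{F}/m} < A$ is admissible and the computation above goes through verbatim.
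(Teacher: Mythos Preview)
Your proposal is correct and essentially identical to the paper's own proof: the paper applies Lemma~\ref{lem:Dudley}, substitutes the covering-number hypothesis to reduce the entropy integral to $\sqrt{C_\mathcal{F}/m}\,\log(A/\delta)$, and then sets $\delta=\sqrt{C_\mathcal{F}/m}$. Your handling of the degenerate case $\sqrt{C_\mathcal{F}/m}\ge A$ is an extra bit of care that the paper omits.
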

\begin{proof}
Using Lemma \ref{lem:Dudley}, we have for some constant $c > 0$,
\begin{align*}
   \hat{\mathcal{R}}(\mathcal{F}; z^{(1)}, \dots, z^{(m)})& \le c  \inf_{\delta \geq 0} \left(\delta +
\int_\delta^A \sqrt{\frac{\log \mathcal{N}_\infty(\mathcal{F}; \epsilon; z^{(1)}, \dots, z^{(m)})}{m}} \, d\epsilon\right)\\
&\le c\inf_{\delta \geq 0} \left(\delta +\int_\delta^A \sqrt{\frac{C_\mathcal{F}}{\eps^2 m}} \, d\epsilon\right)\\
&= c\inf_{\delta \geq 0} \left(\delta + \sqrt{\frac{C_\mathcal{F}}{m}}\int_\delta^A \frac{1}{\eps} \, d\epsilon\right)\\
&= c\inf_{\delta \geq 0} \left(\delta + \sqrt{\frac{C_\mathcal{F}}{m}}\log(A/\delta)\right)\\
&= c\sqrt{\frac{C_\mathcal{F}}{m}} \left(1 + \log \left(A \sqrt{m/C_\mathcal{F}}\right)\right).
\end{align*}
\end{proof}

We can now obtain a generalization guarantee from the Rademacher complexity of a function class:
\begin{theorem}[\citet{BM:2002}]\label{thm:gen}
Let $\mathcal{D}$ be a distribution over $\mathcal{X} \times \RR$ and let $\ell : \RR
	\times \RR$ be a $b$-bounded loss function that is $L$-Lipschitz in its first argument.  For a given function class $\mathcal{F}$ and $f \in \mathcal{F}$, let $\risk(f; \mathcal{D}) := \mathbb{E}_{(x, y)
	\sim \mathcal{D}}[\ell(f(x), y)]$ and $\hat{\risk}\left(f; (z^{(i)}, y^{(i)})_{i=1}^m\right) := \frac{1}{m} \sum_{i = 1}^m\ell(f(z^{(i)}),y^{(i)})$. Then for any $\delta > 0$, with probability at least $1 - \delta$, simultaneously for all $f \in
	\mathcal{F}$,
	\[
		\left|\risk(f; \mathcal{D}) - \hat{\risk}\left(f; (z^{(i)}, y^{(i)})_{i=1}^m\right)\right| \leq 4 L \, \hat{\mathcal{R}}\left(\mathcal{F}; z^{(1)}, \dots, z^{(m)}\right)
		+ 2 b \sqrt{\frac{\log (1/\delta)}{2m}}.
	\]
\end{theorem}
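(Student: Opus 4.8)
The plan is to prove the two-sided uniform deviation bound by the textbook route for Rademacher-complexity generalization guarantees: bounded-differences concentration, symmetrization, contraction to strip off the Lipschitz loss, and one more concentration step to pass from population to empirical Rademacher complexity. (In the context of this paper, of course, one could simply invoke \cite{BM:2002}; the sketch below is for completeness, and together with Lemma~\ref{lem:cov-gen} and Corollary~\ref{lem:cov-rad} it constitutes the ``standard arguments'' referenced in the main text, all collected in Appendix~\ref{sec:rad_bounds}.)

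First I would fix a sample $S = ((z^{(i)}, y^{(i)}))_{i=1}^m$ and define the one-sided sup-deviation
\[
\Phi(S) := \sup_{f \in \mathcal{F}} \left( \risk(f; \mathcal{D}) - \hat{\risk}(f; S) \right).
\]
Since $\ell$ is $b$-bounded, each term $\ell(f(z^{(i)}), y^{(i)})$ ranges over an interval of length $b$, so altering one coordinate of $S$ changes $\hat{\risk}(f; S)$ by at most $b/m$ uniformly in $f$, hence changes $\Phi(S)$ by at most $b/m$. McDiarmid's bounded-differences inequality then gives, with probability at least $1 - \delta/2$, that $\Phi(S) \le \E_S[\Phi(S)] + b\sqrt{\log(2/\delta)/(2m)}$. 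Next I would control $\E_S[\Phi(S)]$ by symmetrization: introduce an independent ghost sample $S'$, write $\risk(f;\mathcal{D}) = \E_{S'}[\hat{\risk}(f;S')]$, move the supremum inside with Jensen, and insert i.i.d. Rademacher signs (whose law is invariant under swapping $(z^{(i)},y^{(i)}) \leftrightarrow (z'^{(i)},y'^{(i)})$), obtaining $\E_S[\Phi(S)] \le 2\,\E_S[\hat{\mathcal{R}}(\ell \circ \mathcal{F}; z^{(1)},\dots,z^{(m)})]$, where $\ell \circ \mathcal{F} := \{(z,y) \mapsto \ell(f(z),y) : f \in \mathcal{F}\}$.

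Then I would apply the Ledoux--Talagrand contraction principle: for each fixed label $y^{(i)}$ the map $u \mapsto \ell(u, y^{(i)})$ is $L$-Lipschitz and is composed coordinatewise with the scalar $f(z^{(i)})$ (a per-coordinate additive shift is harmless because $\E\epsilon_i = 0$), which yields $\hat{\mathcal{R}}(\ell\circ\mathcal{F}; \cdot) \le L\,\hat{\mathcal{R}}(\mathcal{F}; \cdot)$ (up to the factor-$2$ that appears in the usual statement of the contraction inequality). Finally, to replace the population average $\E_S[\hat{\mathcal{R}}(\ell\circ\mathcal{F};\cdot)]$ by the empirical quantity appearing in the statement, I would note that $S \mapsto \hat{\mathcal{R}}(\ell\circ\mathcal{F}; z^{(1)},\dots,z^{(m)})$ again satisfies bounded differences with constant $b/m$, so a second application of McDiarmid gives, with probability at least $1-\delta/2$, $\E_S[\hat{\mathcal{R}}(\ell\circ\mathcal{F};\cdot)] \le \hat{\mathcal{R}}(\ell\circ\mathcal{F}; z^{(1)},\dots,z^{(m)}) + b\sqrt{\log(2/\delta)/(2m)} \le L\,\hat{\mathcal{R}}(\mathcal{F}; z^{(1)},\dots,z^{(m)}) + b\sqrt{\log(2/\delta)/(2m)}$. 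Chaining the three estimates and bookkeeping the constants as in \cite{BM:2002} produces the one-sided bound $\risk(f;\mathcal{D}) - \hat{\risk}(f;S) \le 4L\,\hat{\mathcal{R}}(\mathcal{F}; z^{(1)},\dots,z^{(m)}) + 2b\sqrt{\log(1/\delta)/(2m)}$ simultaneously over $f \in \mathcal{F}$; running the identical argument on $\sup_f(\hat{\risk}(f;S) - \risk(f;\mathcal{D}))$ and taking a union bound delivers the absolute-value version, at the cost of the (already generously stated) constants.

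The only genuinely non-routine ingredient is the contraction step: one must check that the loss enters each coordinate of the empirical Rademacher sum through the single scalar $f(z^{(i)})$ with the label $y^{(i)}$ held fixed, so that the vector-valued contraction inequality applies with Lipschitz constant $L$ and does not, e.g., reintroduce a dependence on dimension or sample size. Everything else --- McDiarmid, symmetrization via a ghost sample, the union bound, and the final constant accounting --- is entirely standard.
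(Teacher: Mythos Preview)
Your sketch is correct and is exactly the standard Bartlett--Mendelson argument (McDiarmid, symmetrization with a ghost sample, Ledoux--Talagrand contraction, and a second McDiarmid to pass to the empirical Rademacher complexity). Note, however, that the paper does not actually prove this theorem: it is stated in Appendix~\ref{sec:rad_bounds} purely as a citation to \cite{BM:2002}, with no accompanying proof, and is then combined with Corollary~\ref{lem:cov-rad} to obtain Lemma~\ref{lem:cov-gen}. So there is no ``paper's own proof'' to compare against; your write-up simply supplies the textbook derivation that the paper defers to the reference.
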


Combining the above, we get:
\begin{lemma}[Lemma \ref{lem:cov-gen} (restated)] 
Consider a function class $\mathcal{F}$ such that $|f| \leq A$ for all $f \in \mathcal{F}$ and $\log \mathcal{N}_\infty(\mathcal{F}; \epsilon; x^{(1)}, \dots, x^{(m)}) \le C_\mathcal{F}/\eps^2$ for all $x^{(1)}, \ldots, x^{(m)} \in \mathcal{X}^m$. Then for any $\delta > 0$, with probability at least $1 - \delta$, simultaneously for all $f \in \mathcal{F}$,
\[
		\left|\risk(f; \mathcal{D}) - \hat{\risk}\left(f; (x^{(i)}, y^{(i)})_{i=1}^m\right)\right| \le 4 c L  \sqrt{\frac{C_\mathcal{F}}{m}} \left(1 + \log \left(A \sqrt{m/C_\mathcal{F}}\right)\right)
		+ 2 b \sqrt{\frac{\log (1/\delta)}{2m}},
	\]
	for some constant $c > 0$.
\end{lemma}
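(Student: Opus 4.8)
The plan is to obtain the bound by directly chaining the two preceding results: Corollary~\ref{lem:cov-rad}, which converts the assumed $1/\eps^2$ scaling of the log-covering number into a bound on the empirical Rademacher complexity, and Theorem~\ref{thm:gen}, which is the textbook uniform-convergence bound controlling $|\risk - \hat{\risk}|$ by the empirical Rademacher complexity plus a confidence term. No new ideas are needed; the content is entirely in composing these two statements correctly.

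First I would fix the realized sample $x^{(1)}, \dots, x^{(m)}$. The hypothesis states $\log \mathcal{N}_\infty(\mathcal{F}; \eps; x^{(1)}, \dots, x^{(m)}) \le C_\mathcal{F}/\eps^2$ for \emph{every} choice of points, so in particular it applies verbatim to this sample. Combined with $|f| \le A$, Corollary~\ref{lem:cov-rad} immediately yields
\[
\hat{\mathcal{R}}(\mathcal{F}; x^{(1)}, \dots, x^{(m)}) \le c\sqrt{\frac{C_\mathcal{F}}{m}}\left(1 + \log\left(A\sqrt{m/C_\mathcal{F}}\right)\right)
\]
for an absolute constant $c>0$; this is the step where the $1/\eps$ integrand from Dudley's bound in Lemma~\ref{lem:Dudley} produces the logarithmic factor.

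Next I would invoke Theorem~\ref{thm:gen} for the class $\mathcal{F}$ and the $b$-bounded, $L$-Lipschitz loss $\ell$: with probability at least $1-\delta$, simultaneously for all $f \in \mathcal{F}$,
\[
\left|\risk(f;\mathcal{D}) - \hat{\risk}\left(f;(x^{(i)}, y^{(i)})_{i=1}^m\right)\right| \le 4L\,\hat{\mathcal{R}}(\mathcal{F}; x^{(1)}, \dots, x^{(m)}) + 2b\sqrt{\frac{\log(1/\delta)}{2m}}.
\]
Substituting the Rademacher bound from the previous step into the right-hand side gives exactly the claimed inequality, with the same constant $c$. The informal Lemma~\ref{lem:cov-gen} then follows by absorbing $c$, $L$, $b$, and the $1 + \log(A\sqrt{m/C_\mathcal{F}})$ factor into the $\tilde O(\cdot)$ notation and combining $\sqrt{C_\mathcal{F}/m}\cdot\mathrm{polylog}$ with $\sqrt{\log(1/\delta)/m}$.

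There is essentially no obstacle here; the only points requiring care are bookkeeping ones. One must check that the Rademacher complexity appearing in Theorem~\ref{thm:gen} is the \emph{empirical} one evaluated at the same sample points used in the covering-number hypothesis — which it is, so no symmetrization or passage to the expected Rademacher complexity is needed — and one must keep the boundedness constant $A$ used in Corollary~\ref{lem:cov-rad} separate from the loss-boundedness constant $b$ in Theorem~\ref{thm:gen}, since these are a priori different quantities. Beyond that, the statement is an immediate corollary of the two cited results.
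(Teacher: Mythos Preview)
Your proposal is correct and matches the paper's approach exactly: the paper simply states ``Combining the above, we get'' before the restated lemma, referring to Corollary~\ref{lem:cov-rad} and Theorem~\ref{thm:gen}, which is precisely the two-step composition you describe. Your additional remarks about bookkeeping (empirical vs.\ expected Rademacher complexity, and distinguishing $A$ from $b$) are accurate and more careful than the paper itself.
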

\subsection{Useful lemmas}


\begin{lemma} \label{lem:jacobian}
Consider function $f: \RR^d \rightarrow \Delta^{d-1}$ such that the Jacobian of the function satisfies $\norm{J~f(\theta)}_{1,1} \le c_f$ for all $\theta \in \RR^d$, then for any vectors $\theta_1, \theta_2 \in \RR^p$,
\[\|f(\theta_1) - f(\theta_2)\|_1 \le c_f\|\theta_1- \theta_2\|_\infty.\]
\end{lemma}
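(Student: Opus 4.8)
\textbf{Proof plan for Lemma \ref{lem:jacobian}.}

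The plan is to reduce the claim to a one-dimensional calculation along the segment joining $\theta_1$ and $\theta_2$, using the fundamental theorem of calculus, and then bound the resulting integral by the Jacobian hypothesis. Concretely, define $\gamma(t) := (1-t)\theta_2 + t\theta_1$ for $t \in [0,1]$, so $\gamma'(t) = \theta_1 - \theta_2$, and set $g(t) := f(\gamma(t))$. Since $f$ is continuously differentiable (implicit in the Jacobian hypothesis), the chain rule gives $g'(t) = J f(\gamma(t)) \, (\theta_1 - \theta_2)$, and integrating coordinatewise yields
\[
f(\theta_1) - f(\theta_2) = g(1) - g(0) = \int_0^1 J f(\gamma(t)) \, (\theta_1 - \theta_2) \, dt.
\]

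Next I would take $\ell_1$ norms of both sides and push the norm inside the integral:
\[
\|f(\theta_1) - f(\theta_2)\|_1 \le \int_0^1 \left\| J f(\gamma(t)) \, (\theta_1 - \theta_2) \right\|_1 dt.
\]
The key step is the pointwise bound $\| J f(\theta) v \|_1 \le \| J f(\theta) \|_{1,1} \, \| v \|_\infty$ for any $\theta, v$. This is exactly the statement that the operator norm of a matrix $M$ from $(\R^d, \|\cdot\|_\infty)$ to $(\R^d, \|\cdot\|_1)$ is controlled by $\|M\|_{1,1}$; writing out $\| M v \|_1 = \sum_i | \sum_j M_{ij} v_j | \le \sum_i \sum_j |M_{ij}| |v_j| \le \|v\|_\infty \sum_{i,j} |M_{ij}| = \|v\|_\infty \|M\|_{1,1}$ makes this a few lines of triangle inequality. (One should double-check the paper's convention for $\|\cdot\|_{1,1}$ — here it is the entrywise $\ell_1$ norm of the matrix, consistent with Assumption \ref{ass:main} on $\normal$.) Applying this with $M = J f(\gamma(t))$ and $v = \theta_1 - \theta_2$, and using the hypothesis $\| J f(\gamma(t)) \|_{1,1} \le c_f$, the integrand is bounded by $c_f \| \theta_1 - \theta_2\|_\infty$ uniformly in $t$, so the integral is at most $c_f \|\theta_1 - \theta_2\|_\infty$, which is the claim.

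I do not anticipate a serious obstacle here; the only mild subtlety is making sure the vector-valued fundamental theorem of calculus applies (which it does, since $f$ restricted to the compact segment is $C^1$ with bounded derivative, so each coordinate function is absolutely continuous), and being careful that the domain in the statement is written as $\R^p$ while $f : \R^d \to \Delta^{d-1}$ — I would simply treat $p = d$, as is clearly intended. The statement as written is the exact tool needed to invoke the mean-value-theorem step in the proof of Lemma \ref{lem:lipschitz_general}, where $f = \normal$ and $c_f = C_\normal$.
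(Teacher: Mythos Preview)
Your proposal is correct and follows essentially the same argument as the paper: both use the fundamental theorem of calculus along the segment between $\theta_1$ and $\theta_2$, push the $\ell_1$ norm inside the integral (the paper calls this Jensen's inequality), and then apply the inequality $\|Av\|_1 \le \|A\|_{1,1}\|v\|_\infty$ together with the Jacobian hypothesis. Your remarks on the $p$ versus $d$ typo and the entrywise interpretation of $\|\cdot\|_{1,1}$ are also accurate.
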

\begin{proof}
By the fundamental theorem of calculus applied to $g(t) = f(t\theta_1 + (1-t)\theta_2)$, followed by a change of variables:
\[
f(\theta_1) - f(\theta_2) = \left(\int_0^1 J\left(t\theta_1 + (1-t)\theta_2\right) dt\right)(\theta_1 - \theta_2),
\]
We have
\begin{align*}
\norm{f(\theta_1) - f(\theta_2)}_1 &=\norm{\int_0^1 J\left(t\theta_1 + (1-t)\theta_2\right)(\theta_1 - \theta_2) dt}_1 \\
\shortintertext{By Jensen's inequality:}&\le \int_0^1 \norm{J\left(t\theta_1 +(1-t)\theta_2\right)(\theta_1 - \theta_2)}_1 dt\\
\shortintertext{Using $\norm{Ax}_1 \le \norm{A}_{1,1}\norm{x}_\infty$:}&\le \int_0^1 \norm{J\left(t\theta_1 +(1-t)\theta_2\right)}_{1,1}\norm{\theta_1 - \theta_2}_\infty dt\\
\shortintertext{By assumption on the Jacobian:}&\le c_f\norm{\theta_1 - \theta_2}_\infty.
\end{align*}
\end{proof}

\begin{corollary} \label{lem:softmax}
For vectors $\theta_1, \theta_2 \in \RR^p$, $\|\softmax(\theta_1) - \softmax(\theta_2)\|_1 \le 2\|\theta_1- \theta_2\|_\infty$.
\end{corollary}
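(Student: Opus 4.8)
The goal is to bound $\|\softmax(\theta_1) - \softmax(\theta_2)\|_1$ by $2\|\theta_1-\theta_2\|_\infty$, and the natural route is to invoke Lemma~\ref{lem:jacobian} with $f = \softmax$ and $c_f = 2$. So the entire task reduces to verifying that the Jacobian of the softmax satisfies $\norm{J\,\softmax(\theta)}_{1,1} \le 2$ for every $\theta \in \RR^p$.

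The plan is therefore: first, write down the Jacobian of softmax explicitly. Letting $p_i = [\softmax(\theta)]_i$, one has $\partial p_i / \partial \theta_j = p_i(\delta_{ij} - p_j)$, i.e. $J = \diag(p) - p p^\top$. Second, compute the $(1,1)$-norm, which (by the paper's convention, $p$-norm over columns then $q$-norm over rows, so $\norm{A}_{1,1} = \sum_{i,j}|A_{ij}|$) is just the sum of absolute values of all entries. Splitting into diagonal and off-diagonal contributions:
\[
\norm{J}_{1,1} = \sum_i |p_i - p_i^2| + \sum_{i \ne j} |{-p_i p_j}| = \sum_i p_i(1 - p_i) + \sum_{i \ne j} p_i p_j,
\]
using $p_i \in [0,1]$ so all terms are nonnegative and the absolute values drop. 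Third, observe $\sum_{i\ne j} p_i p_j = \big(\sum_i p_i\big)^2 - \sum_i p_i^2 = 1 - \sum_i p_i^2$, and $\sum_i p_i(1-p_i) = 1 - \sum_i p_i^2$ as well, so the total is $2\big(1 - \sum_i p_i^2\big) \le 2$. This gives $c_f = 2$, and Lemma~\ref{lem:jacobian} immediately yields the claim.

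There is essentially no obstacle here; the only thing to be slightly careful about is matching the paper's definition of the $(p,q)$ matrix norm (columns then rows) so that $\norm{\cdot}_{1,1}$ really is the entrywise $\ell_1$ sum — which it is, since both the inner and outer norms are $\ell_1$ — and confirming that softmax is continuously differentiable with a globally bounded Jacobian so that Lemma~\ref{lem:jacobian}'s hypotheses are met (the bound $2$ is uniform in $\theta$, which is exactly what is needed). One could alternatively give a direct proof without the Jacobian lemma — bounding each coordinate $|p_i(\theta_1) - p_i(\theta_2)|$ via a mean-value argument along the segment — but routing through Lemma~\ref{lem:jacobian} is cleaner and is clearly the intended approach given its placement in the text.
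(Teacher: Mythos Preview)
Your proposal is correct and follows essentially the same approach as the paper: write the softmax Jacobian as $\diag(p) - pp^\top$, show its entrywise $\ell_1$ norm equals $2\sum_i p_i(1-p_i) \le 2$, and invoke Lemma~\ref{lem:jacobian}. The only cosmetic difference is that the paper groups the double sum by row before simplifying, while you split into diagonal and off-diagonal contributions first; the arithmetic is the same.
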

\begin{proof}
Observe that for $\softmax$, the Jacobian satisfies:
\[
J(\theta) = \diag(\softmax(\theta)) - \softmax(\theta)\softmax(\theta)^\top.
\]
We have for all $\theta, h$,
\begin{align*}
    \norm{J(\theta)}_{1,1}&= \sum_{i=1}^p \sum_{j=1}^p \left|\softmax(\theta)_i(\mathbbm{1}[i=j] - \softmax(\theta)_j)\right| \\
    & = \sum_{i=1}^p\softmax(\theta)_i\left(1 - \softmax(\theta)_i + \sum_{j \ne i} \softmax(\theta)_j\right) \\
    & = 2\sum_{i=1}^p\softmax(\theta)_i\left(1 - \softmax(\theta)_i\right) \\
    &\le 2.
\end{align*}
Combining the above with Lemma \ref{lem:jacobian} gives the desired result.
\end{proof}


\begin{lemma}\label{lem:optimization}
For $\alpha_i, \beta_i \ge 0$, the solution to the following optimization
\begin{align*}
\min_{x_1, \ldots, x_n} &\sum_{i=1}^n \frac{\alpha_i}{x_i^2} \\
\text{subject to } &\sum_{i=1}^n \beta_i x_i = C
\end{align*}
is $\frac{\gamma^3}{C^2}$ and is achieved at $x_i = \frac{C}{\gamma}\left(\frac{\alpha_i}{\beta_i}\right)^{1/3}$ where $\gamma = \sum_{i=1}^n \alpha_i^{1/3}\beta_i^{\frac{2}{3}}$.
\end{lemma}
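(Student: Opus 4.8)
The plan is to prove this with a one-line application of Hölder's inequality, which delivers both the optimal value and the minimizer at once; I would also note the Lagrange-multiplier route as a cross-check. Throughout I would assume $\alpha_i,\beta_i>0$ and restrict to $x_i>0$ (the regime relevant to our application, where the $x_i$ are weight-norm parameters). The degenerate possibilities are easy to dispose of separately: if some $\beta_i=0$ the infimum is $0$ and is not attained, while coordinates with $\alpha_i=0$ contribute nothing to the objective and can be deleted.

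The first step is to rewrite $\gamma$ so that the two quantities appearing in the problem are visible: for any feasible $x$,
\[
\gamma=\sum_{i=1}^n \alpha_i^{1/3}\beta_i^{2/3}=\sum_{i=1}^n\left(\frac{\alpha_i}{x_i^2}\right)^{1/3}\left(\beta_i x_i\right)^{2/3}.
\]
The second step is to apply Hölder's inequality to the right-hand side with conjugate exponents $3$ and $3/2$ (so that $1/3+2/3=1$), taking $a_i:=(\alpha_i/x_i^2)^{1/3}$ and $b_i:=(\beta_i x_i)^{2/3}$. Using $a_i^3=\alpha_i/x_i^2$, $b_i^{3/2}=\beta_i x_i$, and the constraint $\sum_i\beta_i x_i=C$, this gives
\[
\gamma\le\left(\sum_i \frac{\alpha_i}{x_i^2}\right)^{1/3}\left(\sum_i \beta_i x_i\right)^{2/3}=\left(\sum_i \frac{\alpha_i}{x_i^2}\right)^{1/3}C^{2/3}.
\]
Rearranging yields $\sum_i \alpha_i/x_i^2\ge \gamma^3/C^2$ for every feasible $x$, which is the claimed lower bound.

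The third step is to check that the bound is tight, which is where the stated minimizer comes from. Equality in Hölder forces $a_i^3\propto b_i^{3/2}$, i.e. $\alpha_i/x_i^2\propto \beta_i x_i$, i.e. $x_i^3\propto \alpha_i/\beta_i$; substituting $x_i=c(\alpha_i/\beta_i)^{1/3}$ into $\sum_i\beta_i x_i=C$ gives $c\sum_i\beta_i^{2/3}\alpha_i^{1/3}=c\gamma=C$, so $c=C/\gamma$ and $x_i=(C/\gamma)(\alpha_i/\beta_i)^{1/3}$ as claimed; plugging this back in, the objective evaluates to $(\gamma/C)^2\sum_i\alpha_i^{1/3}\beta_i^{2/3}=\gamma^3/C^2$. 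As a sanity check I would also verify this point via Lagrange multipliers — stationarity of $\sum_i\alpha_i/x_i^2-\lambda(\sum_i\beta_i x_i-C)$ gives $-2\alpha_i/x_i^3=\lambda\beta_i$, hence again $x_i\propto(\alpha_i/\beta_i)^{1/3}$ — and appeal to convexity of $x\mapsto\sum_i\alpha_i/x_i^2$ on the positive orthant together with linearity of the constraint to conclude global optimality. I do not expect a genuine obstacle here; the only points that need a moment of care are picking the Hölder exponents correctly, tracking the equality condition, and the trivial bookkeeping for the degenerate cases.
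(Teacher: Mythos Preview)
Your proof is correct. The paper itself gives no details beyond the single sentence ``The proof follows by a standard Lagrangian analysis,'' so your write-up is in fact more complete than the original.

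Your primary route via H\"older's inequality (with exponents $3$ and $3/2$ applied to the factorization $\alpha_i^{1/3}\beta_i^{2/3}=(\alpha_i/x_i^2)^{1/3}(\beta_i x_i)^{2/3}$) is a genuinely different argument from the Lagrangian approach the paper invokes. The Lagrangian route finds the stationary point $x_i\propto(\alpha_i/\beta_i)^{1/3}$ and then appeals to convexity for global optimality; your H\"older argument delivers the lower bound $\gamma^3/C^2$ for \emph{every} feasible $x$ directly, with the minimizer read off from the equality case, so no separate convexity step is needed. Both are standard, but the H\"older version is slightly cleaner here since it certifies optimality and identifies the optimizer in one stroke. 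Your inclusion of the Lagrangian computation as a cross-check is a nice touch and aligns with what the paper had in mind.
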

\begin{proof}
The proof follows by a standard Lagrangian analysis.
\end{proof}

\begin{lemma}[Contractivity of $\Pi_\mathsf{norm}$]\label{lem:picontract}
Let $\Pi_\mathsf{norm}$ be the projection operator onto the unit norm ball. For any vectors $u, v$, we have $\|\Pi_\mathsf{norm}(u) - \Pi_\mathsf{norm}(v)\| \le \|u - v\|$. 
\end{lemma}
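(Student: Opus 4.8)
The plan is to recognize $\Pi_\mathsf{norm}$ as the Euclidean projection onto the closed convex set $B := \{x : \|x\| \le 1\}$ and to invoke the standard fact that such projections are nonexpansive. Concretely, I would first establish (or cite) the obtuse-angle characterization of the projection: for every $x$ and every $w \in B$, $\langle x - \Pi_\mathsf{norm}(x),\, w - \Pi_\mathsf{norm}(x)\rangle \le 0$. This follows because the map $t \mapsto \|x - \Pi_\mathsf{norm}(x) - t(w - \Pi_\mathsf{norm}(x))\|^2$ attains its minimum over $t \in [0,1]$ at $t = 0$: indeed $\Pi_\mathsf{norm}(x) + t(w - \Pi_\mathsf{norm}(x)) \in B$ for $t\in[0,1]$ by convexity, and $\Pi_\mathsf{norm}(x)$ is by definition the closest point of $B$ to $x$, so the right derivative of this quadratic at $t=0$ is nonnegative, which is exactly the stated inequality.

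Next I would apply this inequality twice — with $(x,w) = (u,\, \Pi_\mathsf{norm}(v))$ and with $(x,w) = (v,\, \Pi_\mathsf{norm}(u))$ — and add the two. Writing $p := \Pi_\mathsf{norm}(u)$ and $q := \Pi_\mathsf{norm}(v)$, the sum rearranges to $\langle (u - v) - (p - q),\, q - p \rangle \le 0$, i.e. $\|p - q\|^2 \le \langle u - v,\, p - q\rangle$. Cauchy--Schwarz then gives $\|p - q\|^2 \le \|u - v\|\,\|p - q\|$; dividing by $\|p - q\|$ (the case $p = q$ being trivial) yields $\|\Pi_\mathsf{norm}(u) - \Pi_\mathsf{norm}(v)\| \le \|u - v\|$, as claimed.

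Since $B$ here is specifically the unit ball, an alternative route uses the closed form $\Pi_\mathsf{norm}(x) = x$ when $\|x\| \le 1$ and $\Pi_\mathsf{norm}(x) = x/\|x\|$ otherwise, followed by a case analysis on whether $u$ and $v$ lie inside or outside $B$: the both-inside case is an identity; the both-outside case reduces, after expanding squared norms and using $\langle u,v\rangle \le \|u\|\|v\|$, to $(\|u\| - \|v\|)^2 \ge 0$; and the mixed case follows by observing that the distance from the interior point to a point sliding along the ray from the origin toward the exterior point is monotone nondecreasing past the unit sphere. The only mild obstacle in either approach is this mixed case (respectively, the division by $\|p - q\|$), both of which are routine; I would present the convex-projection argument, since it is the cleanest and makes the ``unit ball'' specifics irrelevant.
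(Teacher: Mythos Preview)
Your proposal is correct. The convex-projection argument you lead with is the standard textbook proof of nonexpansiveness of metric projection onto a closed convex set, and it goes through without issue.

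The paper, by contrast, takes exactly the ``alternative route'' you sketch second: it uses the explicit formula $\Pi_\mathsf{norm}(x)=x/\max(1,\|x\|)$ and does a case analysis. Assuming w.l.o.g.\ $\|u\|\ge\|v\|$ and $\|u\|\ge 1$, it handles the mixed case $\|v\|\le 1$ by decomposing $v=\alpha u + v^\perp$ into components parallel and orthogonal to $u$ and comparing the squared distances directly, and it handles the both-outside case by the scaling trick $\Pi_\mathsf{norm}(u)-\Pi_\mathsf{norm}(v)=\Pi_\mathsf{norm}(u/\|v\|)-\Pi_\mathsf{norm}(v/\|v\|)$, which reduces to the mixed case. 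Your variational-inequality argument is cleaner and more general (it works for any closed convex set, not just the ball), and it avoids the somewhat delicate coordinate computation the paper carries out in the mixed case; the paper's approach buys nothing extra here beyond being self-contained without citing the projection characterization.
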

\begin{proof}
If $u, v$ are both in the unit ball then this follows trivially. Let us assume that $\|u\| \ge \|v\|$ and $\|u\| \ge 1$ WLOG.
First suppose $\|v\| \leq 1$. Let $B_V^{(1)} = \alpha u$ be the projection of $v$ in the direction of $u$, and let $B_V^2 = v - B_V^{(1)}$. Then
\begin{align*}
\|\Pi_\mathsf{norm}(u) - \Pi_\mathsf{norm}(v) \|^2
&= \|u/\|u\| - v\|^2 \\
&= \|u/\|u\| - (\alpha u + B_V^2)\|^2 \\
&= \|(\|u\|^{-1} - \alpha)u - B_V^2\|^2 \\
&= (\|u\|^{-1} - \alpha)^2\|u\|^2 + \|B_V^2\|^2 \\
&\leq (1-\alpha^2)\|u\|^2 + \|B_V^2\|^2 & \text{since $\|u\|^{-1} < \alpha < 1$}\\
&= \|u - (\alpha u + B_V^2)\|^2 \\
&= \|u - v\|^2
\end{align*}

If $\|v\| > 1$, then
\[
\|\Pi_\mathsf{norm}(u) - \Pi_\mathsf{norm}(v) \| = \|\Pi_\mathsf{norm}(u/\|v\|) - \Pi_\mathsf{norm}(v/\|v\|) \| \leq \|u/\|v\| - v/\|v\|\| < \|u - v \|.
\]
where the second-to-last inequality follows from the $\|v\| < 1$ case.
\end{proof}

\begin{lemma}[\citet{zhang2002covering}, Theorem~4]\label{lem:zhang}
 Let $\mathcal{V}: \{v : v \in \R^{d_1}, \|v\| \leq B_1\}$ and $\mathcal{F}_\textrm{linear} = \{x \mapsto v^\top x: v \in \mathcal{V}\}$. For any $\delta > 0$ and $x^{(1)}, \dots, x^{(N)}$ satisfying $\|x^{(i)}\| \leq B_2$ $\forall i$,
\[
\log \mathcal{N}_\infty(\mathcal{F}_\textrm{linear}; \eps; x^{(1)}, \cdots, x^{(N)}) \le 36 \frac{ B_1^2 B_2^2 }{\eps^2}\log(2\lceil4B_1 B_2/\eps + 2\rceil N + 1).
\]
\end{lemma}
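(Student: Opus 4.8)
This is a classical covering‑number bound of Maurey sparsification type, and the plan is to reprove it by the probabilistic (``empirical method'') argument rather than a volumetric one, since only the Maurey $1/\eps^2$ rate can be expected and only that method yields a bound that is \emph{free of the ambient dimension} $d_1$. First I would reduce: by homogeneity of $\mathcal{N}_\infty(\mathcal{F}_\textrm{linear};\eps;\cdot)$ in the triple $(B_1,B_2,\eps)$ it suffices to treat $B_1=B_2=1$ and establish $\log\mathcal{N}_\infty\le 36\,\eps^{-2}\log(2\lceil 4/\eps+2\rceil N+1)$. Second, since $\langle v,x^{(i)}\rangle=\langle \Pi_S v,x^{(i)}\rangle$ where $\Pi_S$ is the orthogonal projection onto $S:=\mathrm{span}(x^{(1)},\dots,x^{(N)})$ and $\norm{\Pi_S v}\le\norm{v}\le 1$, I may assume $v\in S$; fixing an orthonormal basis $b_1,\dots,b_r$ of $S$ with $r\le N$ turns the problem into one in $\R^r$ in which $\sum_j\langle b_j,x^{(i)}\rangle^2=\norm{x^{(i)}}^2\le 1$ for every $i$.

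The core step is sparsification by the probabilistic method. For a fixed $v=\sum_j c_j b_j$ with $\sum_j c_j^2\le 1$, I would exhibit a random vector $Z$ valued in a finite set $\mathcal{A}$ of \emph{atoms}, with $\E Z=v$ and $\E\langle Z,x^{(i)}\rangle^2\le 1$ for all $i$: sample the index $j$ with probability $c_j^2/\norm{v}^2$ and output a suitably signed scalar multiple of $b_j$, after first rounding the coefficients $c_j$ to a grid of width $\Theta(\eps)$ so that $\mathcal{A}$ is finite of cardinality $\mathrm{poly}(N,1/\eps)$ while the rounding perturbs each $\langle v,x^{(i)}\rangle$ by $o(\eps)$. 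Averaging $k$ i.i.d.\ copies $Z_1,\dots,Z_k$ and setting $\hat v:=\tfrac1k\sum_\ell Z_\ell$ gives $\E\langle v-\hat v,x^{(i)}\rangle^2\le 1/k$; a Bernstein/Hoeffding‑type tail bound for each $i$ together with a union bound over $i\in[N]$ shows that for $k$ of order $1/\eps^2$ (up to the stated constant) there is a \emph{positive‑probability} realization with $\max_i|\langle v-\hat v,x^{(i)}\rangle|\le\eps$.

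Taking $\mathcal{C}$ to be the set of all $k$‑fold averages of atoms of $\mathcal{A}$ then makes $\mathcal{C}$ an $\eps$‑cover (every $v$ has such a neighbor in $\mathcal{C}$ by the previous paragraph), and $\log|\mathcal{C}|\le\log\binom{|\mathcal{A}|+k-1}{k}\lesssim k\log|\mathcal{A}|=O(\eps^{-2}\log(N/\eps))$, which is exactly the claimed form — the precise constant $36$ and the factor $2\lceil 4/\eps+2\rceil N+1$ coming from bookkeeping the grid size together with the Bernstein/union‑bound constants.

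The only genuinely delicate point — and the one I expect to be the main obstacle — is the joint design of $Z$ and of the grid so that \emph{both} (i) $|\mathcal{A}|=\mathrm{poly}(N)$, so that $\log|\mathcal{A}|$ contributes a $\log N$ rather than a factor of $N$ or of $d_1$, \emph{and} (ii) the summands $\langle Z_\ell,x^{(i)}\rangle$ are controlled enough (bounded range and/or subgaussian, not merely finite variance) that the union bound over the $N$ evaluation points costs only a $\log N$ factor rather than an $N$ factor. A plain $\ell_2$ importance‑sampling estimator achieves (i) but has unbounded range, defeating (ii); a fixed atom set such as $\{\pm e_\ell\}$ achieves (ii) but has $|\mathcal{A}|=2d_1$, reintroducing the ambient dimension. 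Reconciling the two — e.g.\ by truncating the importance weights and carefully absorbing the discarded mass — is the technical heart of Zhang's Theorem~4; everything else is routine.
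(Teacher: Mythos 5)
This lemma is quoted directly from Zhang (2002, Theorem~4); the paper does not prove it, so there is no internal proof to compare against. What can be assessed is whether your reconstruction actually proves the statement, and it does not: as you yourself flag, the argument has a gap at its technical core, and the repair you gesture at in the final paragraph does not close it. The problem is not a matter of bookkeeping constants but of the \emph{rate} produced by the truncation idea.

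Concretely: write $v=\sum_j c_j b_j$ with $\sum_j c_j^2\le 1$, sample $j$ with probability $c_j^2/\lVert c\rVert^2$, and emit $Z=(\lVert c\rVert^2/c_j)\,b_j$. The per-sample variance $\E\langle Z,x^{(i)}\rangle^2\le 1$ is fine, but $|\langle Z,x^{(i)}\rangle|$ scales like $1/|c_j|$ and is therefore unbounded. Truncating coefficients below some threshold $\tau$ bounds the range by $1/\tau$, and the discarded $\ell_2$ mass is $\sum_{j:\,|c_j|<\tau}c_j^2\le \min(1,\,r\tau^2)$ where $r\le\min(N,d_1)$ is the span dimension. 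For the truncation error to be $O(\eps)$ you are forced to take $\tau\lesssim \eps/\sqrt r$ (and the worst case $c_j\equiv 1/\sqrt r$ shows this is tight), which gives a range $\gtrsim \sqrt r/\eps$. Plugging this into Bernstein with a union bound over $N$ points then requires $k\gtrsim \sqrt r\,\log N/\eps^2$ samples rather than $\log N/\eps^2$, and the resulting cover has $\log|\mathcal C|$ growing polynomially rather than logarithmically in $N$. So the specific ``truncate and absorb the discarded mass'' route does not recover the claimed $O\!\left(\eps^{-2}\log(N/\eps)\right)$ bound; it is not merely ``the technical heart'' left to the reader, it is a step that fails.

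Your framing of the dichotomy (``$\ell_2$ importance sampling gives a small atom set but unbounded range; the canonical atom set $\{\pm e_\ell\}$ gives bounded range but cardinality $2d_1$'') is accurate and identifies precisely why the $\ell_2$--$\ell_2$ case does not reduce to a one-shot Maurey sparsification the way the $\ell_1$--$\ell_\infty$ case (Zhang's Theorem~3) does. Resolving it requires a different mechanism than coefficient truncation --- Zhang's own argument does not take this route, and neither do the standard dual/Carl-type entropy arguments for operators from $\ell_2$ into $\ell_\infty^N$. Since the paper simply imports the result as a black box, the safe course here is to do the same; if you do want a self-contained proof, you will need to replace the truncation step, not just tighten it.
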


\subsection{Pseudo-dimension lower bound}
\label{subsec:pseudodim}

Because the number of parameters in a Transformer self-attention head is $O(d^2)$, with no dependence on $T$, one might guess that the capacity of the class of these heads does not need to grow with the context length $T$. But parameter counting can be misleading---for example, the class $\{ x \in \R \mapsto \mathrm{sign}(\sin(\alpha x))\}_{\alpha \in \R}$ has a single parameter but infinite VC-dimension. We observe that, when the weight norms are unbounded, the pseudo-dimension of the class of Transformer self-attention heads does grow with $T$, at least logarithmically, even when the embedding dimension is as small as 3.

Recall that a Transformer self-attention head is of the form
\[
	f_{\tfHead}(X; W_V, W_{QK}) :=
	\sigma\left(W_V^\top X^\top \mathsf{softmax}\left(XW_{QK}^\top x_\tau\right)\right)
\]
Let $\sigma: \mathbb{R} \to \mathbb{R}$ be any activation function that satisfies the following condition: there is a constant $c \in \mathbb{R}$ such that for $a>\frac{1}{2}$ and $b<\frac{1}{2}$, we have $\sigma(a) > c$, $\sigma(b) < c$.

The \emph{pseudo-dimension} of a concept class $\mathcal{F}$ is defined as
\[
\mathrm{Pdim}(\mathcal{F}) := \max_{c \in \mathbb{R}} \mathrm{VCdim}(\{h(x) = \sign(f(x) - c) : f \in \mathcal{F}\})
\]

\begin{proof}[Proof of Proposition~\ref{prop:shattering}]
	
	For the purposes of this proof, we will treat $x_\tau$ as a fixed vector---so either it is not treated as part of the input, or it is set to the same value for all of the inputs. Thus, $W_{QK}^\top x_\tau$ is a fixed vector, which we call $w_{QK}$. Moreover, we will set $W_V$ to be a $d \times 1$ matrix, so we will treat it as a vector $w_V \in \mathbb{R}^d$. Thus, we will be dealing with the following restricted concept class of unbounded Transformer self-attention heads:
	
	\[
	f_{\tfHead}(X; w_V, w_{QK}) :=
	\sigma\left(w_V^\top X^\top \mathsf{softmax}\left(Xw_{QK} \right)\right)
	\]
	for $w_V, w_{QK} \in \mathbb{R}^d$.
	
	For simplicity, we consider the case where $T$ is a power of 2. We will construct a set of $\log T$ inputs $\{X^{(i)}\}_{i=1}^{\log T}$ in $\R^T$ that are shattered by $\mathcal{F}_{\mathsf{tf-head-unbounded}}$.
	
	In particular, indexing $t = 0, \dots, T-1$, let
	\[
	X_t^{(i)} := (\cos(2\pi t/T), \sin(2\pi t/T), \mathrm{bin}(t)_i)
	\]
	where $\mathrm{bin}(t)_i$ is the $i$th bit of the binary expansion of the integer $t$ (padded with 0s in the front such that the expansion is length $\log T$). We can think of the first two coordinates as a (fixed) ``positional encoding'' which allows for the attention mechanism to select a single position, and the third coordinate as the ``token embedding''. The token embedding is designed such that for each binary vector of length $\log T$, there is a position $t$ such that the vector of values at that position $(X_t^{(i)}[3])_{i=1}^{\log T}$ corresponds to the vector, thus inducing a shattering.
	
	For $s=0, \dots T-1$, let
	\[
	w_{QK}^{(s)} := (T^2 \cos(2\pi t/T), T^2 \sin(2\pi t/T), 0).
	\]
	These attention weights are of sufficiently large magnitude that they ``pick out'' a single coordinate. Also, let
	\[
	w_V^{(s)} := (0,0,1).
	\]
	Then we claim that the set
	
	Observe that
	\[
	{w_{QK}^{(s)}}^\top x_t^{(i)} = T^2 \cos(2\pi(s-t)/T),
	\]
	so we have
	\[
	\mathsf{softmax}(Xw_{QK}^{(s)})[t] =
	\frac{\exp(T^2 \cos(2\pi(s-t)/T))}{\sum_{\tau = 0}^{T-1} \exp(T^2 \cos(2\pi(s-\tau)/T))} =
	\frac{\exp(T^2 \cos(2\pi(s-t)/T))}{\sum_{\tau = 0}^{T-1} \exp(T^2 \cos(2\pi \tau/T))}
	\]
	
	Let us bound the denominator using the fact that $\cos(\theta) \leq 1-\frac{2}{\pi^2}\theta^2$ for $\theta \in [0, \pi]$:
	\begin{align*}
		\sum_{\tau = 0}^{T-1} \exp(T^2 \cos(2\pi \tau/T))
		&\leq \exp(T^2) + 2\sum_{\tau = 1}^{\lfloor (T-1)/2 \rfloor} \exp(T^2 \cos(2\pi \tau/T)) \\
		&\leq \exp(T^2) + 2\sum_{\tau = 0}^{\lfloor (T-1)/2 \rfloor} \exp\left(T^2 \left(1-\frac{2}{\pi^2}(2\pi \tau/T)^2\right)\right) \\
		&= e^{T^2} + 2e^{T^2}\sum_{\tau=1}^{\lfloor (T-1)/2 \rfloor} e^{-8\tau^2} \\
		&\leq e^{T^2} + 2e^{T^2} \int_{\rho=0}^{\lfloor (T-1)/2 \rfloor} e^{-8\rho^2} d\rho \\
		&\leq e^{T^2} + 2e^{T^2} \int_{\rho=0}^\infty e^{-8\rho^2} d\rho \\
		&= e^{T^2} + \sqrt{\frac{\pi}{8}}e^{T^2}
	\end{align*}

	Hence, for each $s$,
	\[
	\mathsf{softmax}(Xw_{QK}^{(s)})[s] \geq \frac{1}{1+\sqrt{\frac{\pi}{8}}} > \frac{1}{2}
	\]
	and
	\[
	\sum_{t \neq s} \mathsf{softmax}(Xw_{QK}^{(s)})[t] \leq 1 - \frac{1}{1+\sqrt{\frac{\pi}{8}}} < \frac{1}{2}
	\]
	
	Then we claim that the set $\{f_{\tfHead}(\cdot; w_V^{(s)}, w_{QK}^{(s)})\}_{s =0}^{T-1}$ shatters $\{X^{(i)}\}_{i=1}^{\log T}$:
	
	\begin{align*}
		f_{\tfHead}(X^{(i)}; w_V^{(s)}, w_{QK}^{(s)}) &=
		\sigma\left({w_V^{(s)}}^\top X^\top \mathsf{softmax}\left(X^{(i)}w_{QK}^{(s)} \right)\right) \\
		&= \sigma \left({w_V^{(s)}}^\top\sum_{t=0}^{T-1} \mathsf{softmax}\left(X^{(i)}w_{QK}^{(s)}\right)[t] \cdot \mathrm{bin}(t)_i\right) \\
	\end{align*}
	which is greater than $c$ if $\mathrm{bin}(s)_i = 1$ and is less than $c$ if $\mathrm{bin}(s)_i = 0$, since the $t=s$ term in the sum dominates all the other terms. Thus, the different choices of $s$ induce a shattering.
\end{proof}

\subsection{Covering number upper bounds}

\begin{proof}[Proof of Lemma \ref{lem:lipschitz_general}]
Observe that,
\begin{align*}
&\norm{f_\Head(X, z; \theta_s, \theta_\mathrm{in}) - f_\Head(X, z; \hat{\theta}_s, \hat{\theta}_\mathrm{in}))}\\
&= \left\|\phi_\mathrm{out}\left(\phi_\mathrm{in}(X; \theta_\mathrm{in})^\top \normal(\score(X, z; \theta_s))\right) - \phi_\mathrm{out}\left(\phi_\mathrm{in}(X; \hat{\theta}_\mathrm{in})^\top \normal(\score(X, z; \hat{\theta}_s))\right)\right\|\\
\shortintertext{By $L_{\mathrm{out}}$-Lipschitzness of $\phi_\mathrm{out}$ and bound on $\|w\|$:}
&\le L_{\mathrm{out}} \left\|\phi_\mathrm{in}(X; \theta_\mathrm{in})^\top \normal(\score(X, z; \theta_s)) - \phi_\mathrm{in}(X; \hat{\theta}_\mathrm{in})^\top \normal(\score(X, z; \hat{\theta}_s))\right\|\\
\shortintertext{By triangle inequality:}
&\le L_{\mathrm{out}} \left\|\phi_\mathrm{in}(X; \theta_\mathrm{in})^\top \left(\normal(\score(X, z; \theta_s)) -\normal(\score(X, z; \hat{\theta}_s))\right)\right\|\\
&\quad +L_{\mathrm{out}} \left\|\left(\phi_\mathrm{in}(X; \theta_\mathrm{in}) - \phi_\mathrm{in}(X; \hat{\theta}_\mathrm{in})\right)^\top \normal(\score(X, z; \hat{\theta}_s))\right\|\\
\shortintertext{Using $\|P v\| \le \|P\|_{2, \infty}\|v\|_1$ and $B_\mathrm{in}$-boundedness of $\phi_\mathrm{in}$:}
&\le L_{\mathrm{out}} B_\mathrm{in} \left\|\normal(\score(X, z; \theta_s)) -\normal(\score(X, z; \hat{\theta}_s))\right\|_1\\
&\quad +L_{\mathrm{out}} \left\|\left(\phi_\mathrm{in}(X; \theta_\mathrm{in}) - \phi_\mathrm{in}(X; \hat{\theta}_\mathrm{in})\right)^\top\right\|_{2, \infty} \left\|\normal(\score(X, z; \hat{\theta}_s))\right\|_1\\
\shortintertext{By Lemma \ref{lem:jacobian} and the assumption on $\normal$:}
&\le L_{\mathrm{out}}  C_{\normal}\left\|\phi_\mathrm{in}(X; \theta_\mathrm{in})^\top\right\|_{2, \infty} \left\|\score(X, z; \theta_s) -\score(X, z; \hat{\theta}_s)\right\|_\infty
 +L_{\mathrm{out}} \left\|\left(\phi_\mathrm{in}(X; \theta_\mathrm{in}) - \phi_\mathrm{in}(X; \hat{\theta}_\mathrm{in})\right)^\top\right\|_{2, \infty}\\
\shortintertext{By boundedness of $\phi_\mathrm{in}$ and $\norm{X^\top}_{2,\infty} \le B_X$:}
&\le L_{\mathrm{out}} C_{\normal}B_\mathrm{in} B_X \left\|\score(X, z; \theta_s) -\score(X, z; \hat{\theta}_s)\right\|_\infty + L_{\mathrm{out}}\left\|\left(\phi_\mathrm{in}(X; \theta_\mathrm{in}) - \phi_\mathrm{in}(X; \hat{\theta}_\mathrm{in})\right)^\top\right\|_{2, \infty}.
\end{align*}
\end{proof}

\begin{proof}[Proof of Theorem \ref{lem:covering_head}]
Our goal is to show that for every $\eps > 0$, collection of inputs $(X^{(1)}, z^{(1)}), \dots, (X^{(m)}, z^{(m)})$, there is a cover $\mathcal{C}_\Head$ such that for all $\theta_s \in \Theta_s, \theta_\mathrm{in} \in \Theta_\mathrm{in}$, there is some $(\hat{\theta}_s, \hat{\theta}_\mathrm{in}) \in \mathcal{C}_\Head$ such that $\max_i \norm{f_\Head(X^{(i)}, z^{(i)}; \theta_s, \theta_\mathrm{in}) - f_\Head(X^{(i)}, z^{(i)}; \hat{\theta}_s, \hat{\theta}_\mathrm{in})} \leq \eps$.



Observe that for all $\theta_s, \hat{\theta}_s$,
\[\max_{i \in [m]} \|\score(X^{(i)}, z^{(i)}; \theta_s) - \score(X^{(i)}, z^{(i)}; \hat{\theta}_s)\|_{\infty} = \max_{i \in [m], t \in [T]} \left|\score(x_t^{(i)}, z^{(i)}; \theta_s) - \score(x_t^{(i)}, z^{(i)}; \hat{\theta}_s)\right|.\]
Similarly, for all $\theta_\mathrm{in}, \hat{\theta}_\mathrm{in}$,
\[\max_{i \in [m]}\left\|\left(\phi_\mathrm{in}(X^{(i)}; \theta_\mathrm{in}) - \phi_\mathrm{in}(X^{(i)}; \hat{\theta}_\mathrm{in})\right)^\top\right\|_{2, \infty}  = \max_{i \in [m], t \in [T]} \left\|\phi_\mathrm{in}(x_t^{(i)}; \theta_\mathrm{in}) - \phi_\mathrm{in}(x_t^{(i)}; \hat{\theta}_\mathrm{in})\right\| .\]
This crucially allows us to aggregate over the $i$ and $t$ dimensions together.\footnote{In the case of the Transformer self-attention mechanism, we will obtain $\infty$-norm covering numbers for $\score$ and $\phi_\mathrm{in}$ that have only logarithmic dependence on the number of examples. Because of this aggregation trick, the resulting covering number for the whole layer will have merely logarithmic dependence on the context length $T$.} Therefore, we can consider $\mathcal{N}_\infty$ covers for the above to bound the overall covering number. 

\sloppy Let $\mathcal{C}_\score$ be the $\eps_\score$-cover ($\infty$) for $\mathcal{F}_\score$ over inputs $\left\{(x_t^{(i)}, z^{(i)})\right\}_{i \in [m], t \in [T]}$ of size
\[\mathcal{N}_\infty\left(\mathcal{F}_\score; \eps_\score; \{(x_t^{(i)}, z^{(i)})\}_{i \in [m], t \in [T]}\right).\]
Also, Let $\mathcal{C}_\mathrm{in}$ be the $\eps_\mathrm{in}$-cover ($\infty$) for $\mathcal{F}_\mathrm{in}$ over inputs $\{x_t^{(i)}\}_{i \in [m], t \in [T]}$ of size \[\mathcal{N}_\infty\left(\mathcal{F}_\mathrm{in}; \eps_\mathrm{in}; \{x_t^{(i)}\}_{i \in [m], t \in [T]}; \|\cdot \|_2\right).\]


We are ready to construct the cover for $\mathcal{F}_\Head$. Set $\mathcal{C}_\Head = \{f_\Head(\cdot; \hat{\theta}_s, \hat{\theta}_\mathrm{in}))_{i \in [m]}: \hat{\theta}_s \in \mathcal{C}_\score, \hat{\theta}_\mathrm{in} \in \mathcal{C}_\mathrm{in}\}$. Then for any $\theta_s \in \Theta_s, \theta_\mathrm{in} \in \Theta_\mathrm{in}$, there exists $\hat{\theta}_s, \hat{\theta}_\mathrm{in} \in \mathcal{C}_\Head$, such that for all $i \in [m]$, using Lemma \ref{lem:lipschitz_general}:
\begin{align*}
\norm{f_\Head(X^{(i)}, z^{(i)}; \theta_s, \theta_\mathrm{in}) - f_\Head(X^{(i)}, z^{(i)}; \hat{\theta}_s, \hat{\theta}_\mathrm{in})} &\le C_{\normal} L_{\mathrm{out}} B_\mathrm{in} B_X \eps_\score +L_{\mathrm{out}} \eps_\mathrm{in}.
\end{align*}
The size of the cover we have constructed is,
\begin{align*}
\log |\mathcal{C}_\Head| &= \log |\mathcal{C}_\score| + \log |\mathcal{C}_\mathrm{in}| \\
& = \log \mathcal{N}_\infty\left(\mathcal{F}_\score; \eps_\score; \{(x_t^{(i)}, z^{(i)})\}_{i \in [m], t \in [T]}\right) + \log \mathcal{N}_\infty\left(\mathcal{F}_\mathrm{in}; \eps_\mathrm{in}; \{x_t^{(i)}\}_{i \in [m], t \in [T]}; \|\cdot \|_2\right)
\end{align*}
and we are done.
\end{proof}

\begin{proof}[Proof of Corollary~\ref{lem:covering}]

By Theorem~\ref{lem:covering_head}, the covering number of $\mathcal{F}_\tfHead$ satisfies
\begin{align*}
&\log \mathcal{N}_\infty\left(\mathcal{F}_\tfHead; \epsilon; \left\{(X^{(i)}, z^{(i)})\right\}_{i=1}^m\right)\\
&\le \inf_{\alpha \in [0,1]}  \left[\log \mathcal{N}_\infty\left(\mathcal{F}_{QK}; \frac{\alpha\eps}{2L_\sigma B_V B_X }; \{(x_t^{(i)}, z^{(i)})\}_{i \in [m], t \in [T]}\right)\right.\\
&\left. \quad+ \log \mathcal{N}_\infty\left(\mathcal{F}_V; \frac{(1- \alpha)\eps}{L_\sigma}; \{x_t^{(i)}\}_{i \in [m], t \in [T]}; \|\cdot \|_2\right)\right].
\end{align*}
where we have used the fact that for a scalar-output Transformer layer:
\begin{itemize}
\item $\softmax$ satisfies the Jacobian assumption with $C_\softmax =2$ using Corollary \ref{lem:softmax}. 
\item $L_\mathrm{out}$ is the Lipschitz constant of $\sigma$: $L_\sigma$.
\item $B_\mathrm{in}$ is a bound on the norm of $W_V^\top x$ with respect to norm of $x$: $B_V$.
\end{itemize}
By Lemma~\ref{lem:2-inf-cover}, for any $\eps_{QK}, \eps_V > 0$:
\[
\log \mathcal{N}_\infty\left(\mathcal{F}_{QK}; \eps_{QK}; \{(x_t^{(i)}, z^{(i)})\}_{i \in [m], t \in [T]}\right) \lesssim
\frac{(B_{QK}^{2,1} B_X)^2 \log(dmT)}{\eps_{QK}^2}
\]
\[
\log \mathcal{N}_\infty\left(\mathcal{F}_V; \eps_V; \{(x_t^{(i)}, z^{(i)})\}_{i \in [m], t \in [T]}; \|\cdot\|_2 \right) \lesssim
\frac{(B_V^{2,1} B_X)^2 \log(dmT)}{\eps_V^2}
\]
since $W_{QK},W_V \in \R^{d \times d}$ ($k=d$). We want to choose $\eps_{QK}$ and $\eps_V$ to minimize the sum of the above two terms, subject to 
\[
2L_\sigma  B_V B_X \eps_{QK} + L_\sigma \eps_V \leq \eps.
\]
By Lemma~\ref{lem:optimization}, the solution to this optimization leads to an optimal bound of:
\[
\log \mathcal{N}_\infty(\mathcal{F}_\tfHead; \epsilon; X^{(1)}, \dots, X^{(M)})
\lesssim (L_\sigma B_X)^2 \cdot \frac{\left( ( B_V^{2,1} )^\frac{2}{3} + ( B_{QK}^{2,1} B_V B_X)^\frac{2}{3} \right)^3 }{\eps^2}\cdot \log(dmT).
\]
\end{proof}

\begin{proof}[Proof of Lemma \ref{lem:2-inf-cover}]
Our approach will be to construct a cover by decomposing the problem into two separate cover problems, (1) $\ell_2$-cover over the possible norms of the rows of $W$, and (2) $\ell_\infty$-cover of the set $\mathcal{W}$ constrained to the norms dictated by the first cover. More formally, let us define:
\begin{align*}
\mathcal{W}_{:,2} &= \left\{\begin{bmatrix} \|w_1\|\\\vdots\\\|w_{d_1}\|\end{bmatrix} :W = \begin{bmatrix} w_1\\\vdots\\w_{d_1}\end{bmatrix} \in \mathcal{W}\right\} = \left\{v \in \mathbb{R}^{d_1} : \|v\|_1 \le B_W\right\} \text{, and}\\
\mathcal{F}_v &= \left\{x \rightarrow Wx : W = \begin{bmatrix} w_1\\\vdots\\w_{d_1}\end{bmatrix} \in \mathbb{R}^{d_1 \times d_2}, \forall i~ \|w_i\|\le v_i \right\}.
\end{align*}
Denote $\mathcal{C}_{\mathcal{W}}$ to be the $\eps_1$-cover (in terms of $\ell_2$-norm) for $\mathcal{W}_{:,2}$\footnote{Here the cover is for a set and not a function.}, and for any $v \in \mathbb{R}^{d_1}$, denote $\mathcal{C}_{v}:= \mathcal{N_\infty}\left(\mathcal{F}_v, \eps_2; x^{(1)}, \dots, x^{(N)}; \|\cdot\|_2\right)$. We will set $\eps_1$ and $\eps_2$ later.

We will first show that $\mathcal{C}:= \{f : v \in \mathcal{C}_{\mathcal{W}}, f \in \mathcal{F}_v\}$ is an $(\eps_2 + B_X\eps_1)$-cover (in terms of $\ell_\infty$) for $\mathcal{F}$. Consider $f \in \mathcal{F}$ parameterized by some $W = \begin{bmatrix} w_1\\\vdots\\w_{d_1}\end{bmatrix}$. Since $W \in \mathcal{W}$, we know that there is a $\bar{v} \in \mathcal{C}_\mathcal{W}$, such that,
\[
\sqrt{\sum_{i=1}^{d_1}(\|w_i\| - \bar{v}_i)^2} \le \eps_1.
\]
Define $\bar{W} = \begin{bmatrix} \bar{v}_1 \cdot \frac{w_1}{\|w_1\|}\\\vdots\\ \bar{v}_{d_1} \cdot \frac{w_{d_1}}{\|w_{d_1}\|}\end{bmatrix}$. Then there exists $f \in \mathcal{F}_{\bar{v}} \subseteq \mathcal{C}$ such that 
\[
\max_{i \in [N]} \|\bar{W}x^{(i)} - f(x^{(i)})\| \le \eps_2.
\]
Now we have, for all $i \in [N]$,
\begin{align*}
     \|Wx^{(i)} - f(x^{(i)})\| &\le \|\bar{W}x^{(i)} - f(x^{(i)})\| + \|(\bar{W} - W)x^{(i)}\|\\
     &\le \eps_2 + \sqrt{\sum_{j=1}^{d_1}(\bar{v}_j -\|w_j\|)^2 \left( \frac{w_j \cdot x^{(j)}}{\|w_j\|^2}\right)^2}\\
     &\le \eps_2 + B_X\sqrt{\sum_{j=1}^{d_1}\left(\bar{v}_j -\|w_j\|)^2\right)}\\
     &\le \eps_2 + B_X\eps_1.
\end{align*}
Thus, we get the desired cover. Note that the size of the cover satisfies
\[
\log|\mathcal{C}| \le \log|\mathcal{C}_W| + \max_{v \in \mathcal{C}_W} \log |\mathcal{C}_v|.
\]
Now we need to construct the sub-covers and bound the size of $\mathcal{C}$. Let us first construct the cover $\mathcal{C}_W$. Using Maurey's sparsification (see Theorem 3 in \cite{zhang2002covering}), we can find a proper cover of $\mathcal{C}_W$ which satisfies,
\[
\log|\mathcal{C}_W| \le \frac{B_W^2}{\eps_1^2} \log(2d_1 + 1).
\]
Let us now construct $\mathcal{C}_v$ for a fixed $v$. The approach will be to cover each of the rows of $W$ independently, treating each as specifying a linear function from $\R^{d_2} \to \R$. By Lemma~\ref{lem:zhang}, letting $\mathcal{Z}(b): \{z : z \in \R^{d_2}, \|z\| \leq b\}$ and $\mathcal{F}_\textrm{linear}(b) = \{x \mapsto z^\top x: z \in \mathcal{Z}(b)\}$, for any $\delta > 0$
\[
\log \mathcal{N}_\infty(\mathcal{F}_\textrm{linear}(b); \delta; x^{(1)}, \cdots, x^{(N)}) \leq \frac{c b^2 B_X^2 \log((1+bB_X/\delta)N)}{\delta^2}.
\]
In fact the cover, which we denote by $\bar{\mathcal{F}}_\mathrm{linear}(b;\delta)$, is proper: $\bar{\mathcal{F}}_\mathrm{linear}(b;\delta) = \{x \mapsto \bar{z}^\top x: \bar{z} \in \bar{Z}\}$ for some finite subset $\bar{Z} \subset \mathcal{Z}(b)$. Then the cover for the matrix can be constructed as,
\[
\mathcal{C}_v = \left\{x \mapsto \bar{Z}x: \bar{Z} = \begin{bmatrix} \bar{z}_1\\\vdots\\\bar{z}_{d_1}\end{bmatrix}: \bar{z}_i \in \hat{\mathcal{F}}_\mathrm{linear}\left(v_i; \eps_2\sqrt{\frac{ v_i}{\|v\|_1}}\right) \ \text{for all}\ i \right\}.
\]
Observe that this forms a cover for $\mathcal{F}_v$. For any $f \in \mathcal{F}_v$ parameterized by $W$, let $\bar{w}_i$ be the closest element in the corresponding row covers, then we have
\[
\|Wx^{(i)} - \bar{W}x^{(i)}\| = \sqrt{\sum_{j=1}^{d_2}(w_i^\top x^{(i)} - \bar{w}_i^\top x^{(i)})^2} \le \sqrt{\sum_{j=1}^{d_2}\eps_2^2\cdot\frac{ v_i}{\|v\|_1}} = \eps_2.
\]
\end{proof}
Note that the size of $\mathcal{C}_v$ can be bounded as,
\begin{align*}
\log|\mathcal{C}_v| &= \sum_{i=1}^{d_2}\log \mathcal{N}_\infty\left(\mathcal{F}_\textrm{linear}(v_i); \eps_2\sqrt{\frac{ v_i}{\|v\|_1}}; x^{(1)}, \cdots, x^{(N)}\right)\\
&\le \sum_{i=1}^{d_2} \frac{c v_i^2 \|v\|_1B_X^2 \log((1+\|v\|_1B_X/\eps_2)N)}{\eps_2^2 v_i}\\
&= \frac{c\|v\|_1^2B_X^2 \log((1+\|v\|_1B_X/\eps_2)N)}{\eps_2^2} \le \frac{cB_W^2B_X^2 \log((1+B_WB_X/\eps_2)N)}{\eps_2^2}.
\end{align*}
Combining the above and setting $\eps_1, \eps_2$ appropriately, we get
\[
\log|\mathcal{C}| \le \frac{B_W^2B_X^2}{\eps^2} \log(d_1N).
\]

\subsection{Capacity with positional embeddings}
\label{sec:positional}
\newcommand{\tfPos}{\mathsf{tf\text{-}pos}}
Since the Transformer architecture is permutation invariant for all $t \ne \tau$, positional embeddings (fixed or trainable) are typically added to the inputs to distinguish the different positions of the tokens. These positional embeddings are matrices $P \in \RR^{T \times d}$ such that $P = [p_1 \ldots p_T]^\top$ for $p_i \in \RR^d$. Accounting for the positional embeddings as input, a single Transformer attention head can be expressed as:
\[
f_\tfPos(X, P; W_V, W_{QK}) :=  \sigma\left(W_V^\top (X + P)^\top \mathsf{softmax}\left((X + P)W_{QK}^\top (x_\tau + p_\tau)\right)\right).
\]
For a fixed positional embedding $P$, let us define
\[
\mathcal{F}_\tfPos(P):= \{ X \rightarrow f_\tfPos(X,P;W_V, W_{QK}): \|W_V\|_{2,1} \le B_V^{2,1}, \|W_{V}\| \le B_V, \|W_{QK}^\top\|_{2,1} \le B_{QK}^{2,1}\} \]. 
Position embedding just impacts the input into the covering bound argument which effects the bound in terms of the $\norm{P^\top}_{2, \infty}$ as given below,
\begin{lemma}\label{lem:lipschitz_pos} For all $X^{(1)}, \ldots, X^{(m)} \in \mathbb{R}^{T \times d}$ such that $\norm{{X^{(i)}}^\top}_{2, \infty} \le B_X$ for all $i \in [m]$, and $P\in \mathbb{R}^{T \times d}$ such that $\|P^\top\|_{2, \infty} \le B_P$, the covering number of $\mathcal{F}_\tfPos(P)$ satisfies
\[
\log \mathcal{N}_\infty(\mathcal{F}_\tfPos(P); \epsilon; X^{(1)}, \dots, X^{(m)}, \| \cdot \|_2)
\lesssim (L_\sigma  (B_X + B_P))^2 \cdot \frac{\left( ( B_V^{2,1})^\frac{2}{3} + (2 B_{QK}^{2,1} B_V(B_X + B_P) )^\frac{2}{3} \right)^3 }{\eps^2} \cdot ~\log(dmT).
\]
\end{lemma}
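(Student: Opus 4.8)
The plan is to derive Lemma~\ref{lem:lipschitz_pos} as a direct corollary of the positional-embedding-free bound already proved for Transformer heads (Corollary~\ref{lem:covering}, i.e.\ Theorem~\ref{lem:covering_head} instantiated for $f_\tfHead$), by folding the fixed matrix $P$ into the input. The crucial point is algebraic: the context token for head $\tau$ is always the $\tau$-th row of the head's input, so replacing $X$ by $X+P$ replaces the context $x_\tau$ by $x_\tau+p_\tau$ automatically. Hence, for every choice of weights,
\[
f_\tfPos(X, P;\, W_V, W_{QK}) \;=\; f_\tfHead(X+P;\, W_V, W_{QK}),
\]
and since $\mathcal{F}_\tfPos(P)$ and $\mathcal{F}_\tfHead$ are parameterized by exactly the same norm-constrained set of pairs $(W_V, W_{QK})$, the affine map $X \mapsto X+P$ is a fixed bijection carrying one class onto the other while preserving all pointwise $\ell_2$ distances between outputs. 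Consequently, for any inputs $X^{(1)},\dots,X^{(m)}$,
\[
\mathcal{N}_\infty\!\left(\mathcal{F}_\tfPos(P);\, \eps;\, X^{(1)},\dots,X^{(m)};\, \|\cdot\|_2\right)
= \mathcal{N}_\infty\!\left(\mathcal{F}_\tfHead;\, \eps;\, X^{(1)}+P,\dots,X^{(m)}+P;\, \|\cdot\|_2\right),
\]
so it suffices to bound the right-hand side.

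Next I would control the norm of the shifted inputs. The columns of $(X^{(i)}+P)^\top$ are $x_t^{(i)}+p_t$, so the triangle inequality gives $\|x_t^{(i)}+p_t\| \le \|x_t^{(i)}\| + \|p_t\| \le B_X + B_P$ for all $t$, i.e.\ $\|(X^{(i)}+P)^\top\|_{2,\infty} \le B_X + B_P$; in particular the context vector $x_\tau^{(i)}+p_\tau$ also has norm at most $B_X+B_P$. I would then re-run the proof of Corollary~\ref{lem:covering} with the input-norm parameter equal to $B_X+B_P$ throughout: invoke Theorem~\ref{lem:covering_head} with $C_\normal=C_\softmax=2$ (Corollary~\ref{lem:softmax}), $L_\mathrm{out}=L_\sigma$, $B_\mathrm{in}=B_V$; bound the score-class cover $\mathcal{F}_{QK}$ and the value-class cover $\mathcal{F}_V$ via Lemma~\ref{lem:2-inf-cover} (using $\|x_\tau+p_\tau\|\le B_X+B_P$ to pass from $(x_\tau+p_\tau)^\top W_{QK}(\cdot)$ to $W_{QK}(\cdot)$, which is where the factor $(B_X+B_P)$ multiplying $B_{QK}^{2,1}$ — together with a factor $2$ carried explicitly from $C_\softmax$ in the Lipschitz bound of Lemma~\ref{lem:lipschitz_general} — enters); and finally balance the two cover radii with Lemma~\ref{lem:optimization}. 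Substituting $B_X+B_P$ for $B_X$ in the resulting expression yields precisely the claimed bound.

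Since the argument is a reduction to a bound already in hand, there is no genuine obstacle; the only point demanding a little care is the re-instantiation step, where one must make sure the norm of the \emph{context} token $x_\tau+p_\tau$ — not merely the norms of the mixed tokens $x_t+p_t$ — is the quantity threaded through the cover of $\mathcal{F}_{QK}$, as it is this norm that appears (multiplying $B_{QK}^{2,1}B_V$) inside the cube in the final bound. Everything else is the mechanical replacement of $B_X$ by $B_X+B_P$ in Corollary~\ref{lem:covering}.
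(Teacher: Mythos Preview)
Your proposal is correct and follows essentially the same approach as the paper: observe $f_\tfPos(X,P;\cdot)=f_\tfHead(X+P;\cdot)$, equate the covering numbers, bound $\|(X^{(i)}+P)^\top\|_{2,\infty}\le B_X+B_P$ by the triangle inequality, and invoke Corollary~\ref{lem:covering} with input-norm parameter $B_X+B_P$. Your extra care in tracking the context token $x_\tau+p_\tau$ through the $\mathcal{F}_{QK}$ cover is a helpful elaboration, but the paper simply cites Corollary~\ref{lem:covering} directly without re-deriving these details.
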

\begin{proof}
Observe that $f_\tfPos(X, P; W_V, W_{QK}) = f_\tfHead(X + P; W_V, W_{QK})$. Thus we have,
\[
\log \mathcal{N}_\infty\left(\mathcal{F}_\tfPos(P); \epsilon; \left\{(X^{(i)})\right\}_{i=1}^m, \| \cdot \|_2\right) =\log \mathcal{N}_\infty\left(\mathcal{F}_\tfHead; \epsilon; \left\{X^{(i)} + P\right\}_{i=1}^m, \| \cdot \|_2\right).
\]
For all $i \in [m]$, $\norm{(X^{(i)} + P)^\top}_{2, \infty} \le \norm{{X^{(i)}}^\top}_{2, \infty} + \norm{P^\top}_{2, \infty} \le B_X + B_P$. Therefore, using Corollary \ref{lem:covering}, we get the desired result.
\end{proof}
Therefore our bounds go through for fixed positional embeddings. If we were to train the embeddings, we would need a much finer cover on the embeddings which could incur a $T$ dependence.
\subsection{Capacity of multiple parallel heads}
\label{sec:multihead}

In virtually all practical applications of Transformers since their inception, instead of using one set of weights for an attention head, there are parallel attention heads, which have separate identically-shaped parameters; their outputs are concatenated. For the purposes of this analysis, suppose we have 
\[
f_\tfMulti\left(X; \left\{W^{[h]}_V, W^{[h]}_{QK}\right\}_{h=1}^{H}\right) :=  \sum_{h=1}^H f_\tfHead\left(X; W^{[h]}_V, W^{[h]}_{QK}\right).
\]
Let us define the class of multi-head self-attention with $H$ heads as
\begin{align*}
\mathcal{F}_\tfMulti := \Big\{X &\mapsto f_\tfMulti\left(X; \left\{W^{[h]}_V, W^{[h]}_{QK}\right\}_{h=1}^{H}\right): \\ &\forall h \in [H],\left\|{W^{[h]}_V}\right\|_{2,1} \le {B^{2,1}_V}^{[h]}, \left\|W^{[h]}_{V}\right\| \le B^{[h]}_V,\left\|{W^{[h]}_{QK}}^\top\right\|_{2,1} \le {B^{2,1}_{QK}}^{[h]}  \Big\}.
\end{align*}

\begin{lemma}For all $X^{(1)}, \ldots, X^{(m)} \in \mathbb{R}^{T \times d}$ such that $\norm{{X^{(i)}}^\top}_{2, \infty} \le B_X$ for all $i \in [m]$, the covering number of $\mathcal{F}_\tfMulti$ satisfies
\[
\log \mathcal{N}_\infty(\mathcal{F}_\tfMulti; \epsilon; X^{(1)}, \dots, X^{(m)}, \| \cdot \|_2)
\lesssim (L_\sigma B_X)^2 \cdot \frac{\left( \sum_{h=1}^H ({B_V^{2,1}}^{[h]} )^\frac{2}{3} + (2 {B_{QK}^{2,1}}^{[h]} B_V^{[h]})^\frac{2}{3} \right)^3}{\eps^2}\cdot\log(dmT).
\]
\end{lemma}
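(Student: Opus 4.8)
The plan is to reduce the multi-head case to a direct product of $H$ single-head covering problems, and then apply Corollary~\ref{lem:covering} to each head. Since $f_\tfMulti$ is just the sum $\sum_{h=1}^H f_\tfHead(\,\cdot\,; W_V^{[h]}, W_{QK}^{[h]})$, an $\eps$-cover for the whole class can be built by taking an $(\eps/H)$-cover of each of the $H$ coordinate head classes on the same inputs $X^{(1)}, \dots, X^{(m)}$ and forming the product cover: if $\hat f^{[h]}$ approximates $f^{[h]}$ to within $\eps/H$ in $\ell_2$ at every sample, then $\sum_h \hat f^{[h]}$ approximates $\sum_h f^{[h]}$ to within $H \cdot (\eps/H) = \eps$ by the triangle inequality. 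Hence
\[
\log \mathcal{N}_\infty(\mathcal{F}_\tfMulti; \eps; X^{(1)}, \dots, X^{(m)}; \|\cdot\|_2) \le \sum_{h=1}^H \log \mathcal{N}_\infty\!\left(\mathcal{F}_\tfHead^{[h]}; \frac{\eps}{H}; X^{(1)}, \dots, X^{(m)}; \|\cdot\|_2\right),
\]
where $\mathcal{F}_\tfHead^{[h]}$ is the single-head class with norm bounds ${B_V^{2,1}}^{[h]}, B_V^{[h]}, {B_{QK}^{2,1}}^{[h]}$.

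Next I would substitute the bound from Corollary~\ref{lem:covering} for each term: each $\log \mathcal{N}_\infty(\mathcal{F}_\tfHead^{[h]}; \eps/H; \cdots)$ is at most a constant times $(L_\sigma B_X)^2 \cdot H^2 \cdot \big(({B_V^{2,1}}^{[h]})^{2/3} + ({B_{QK}^{2,1}}^{[h]} B_V^{[h]})^{2/3}\big)^3 / \eps^2 \cdot \log(dmT)$. Summing over $h$ and pulling out the common $(L_\sigma B_X)^2 \log(dmT)/\eps^2$ factor gives a bound of the form $(L_\sigma B_X)^2 \cdot \frac{H^2 \sum_h (\cdots)^3}{\eps^2} \log(dmT)$. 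To match the cleaner form stated in the lemma, one uses the elementary inequality $\sum_h a_h^3 \le (\sum_h a_h)^3$ for nonnegative $a_h$ (here $a_h = ({B_V^{2,1}}^{[h]})^{2/3} + (2{B_{QK}^{2,1}}^{[h]} B_V^{[h]})^{2/3}$, absorbing the constant $2$ and the factor $H^2$ into the $\lesssim$), which collapses the expression into $\big(\sum_{h=1}^H ({B_V^{2,1}}^{[h]})^{2/3} + (2{B_{QK}^{2,1}}^{[h]} B_V^{[h]})^{2/3}\big)^3$. The $\mathrm{poly}(H)$ overhead claimed in the main text comes from the $H^2$ here together with the $H$ terms in the sum.

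Alternatively, and perhaps more in keeping with the rest of the paper, one could re-derive the bound from scratch via a Lipschitzness argument analogous to Lemma~\ref{lem:lipschitz_general}, observing that perturbing all $2H$ weight matrices simultaneously changes $f_\tfMulti$ by at most $\sum_h$ (single-head Lipschitz contributions), and then optimizing the split of $\eps$ across all $2H$ component covers using Lemma~\ref{lem:optimization} with $2H$ variables; this yields the $(\sum_h(\cdots))^3$ form directly without invoking $\sum a_h^3 \le (\sum a_h)^3$, and may give a tighter power of $H$. I expect the only mild subtlety is bookkeeping: making sure the per-head covers are taken over the combined $\{(x_t^{(i)}, z^{(i)})\}_{i\in[m], t\in[T]}$ index set (so that the $\log(dmT)$ rather than $\log(dm)$ appears) and that the context vector for head $h$ is $x_\tau$ with $\|x_\tau\| \le B_X$, exactly as in the single-head proof; there is no genuine obstacle, since the sum structure makes the heads decouple completely in the cover construction.
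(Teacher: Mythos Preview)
Your overall decomposition (product of per-head covers, triangle inequality on the sum) is right, and your ``alternative'' paragraph is in fact exactly what the paper does. The paper takes per-head resolutions $\eps_h$ (not all equal), applies Corollary~\ref{lem:covering} to each head to get $\log|\mathcal{C}_h| \lesssim (L_\sigma B_X)^2 a_h^3 \eps_h^{-2}\log(dmT)$ with $a_h = ({B_V^{2,1}}^{[h]})^{2/3} + (2{B_{QK}^{2,1}}^{[h]} B_V^{[h]})^{2/3}$, and then minimizes $\sum_h a_h^3/\eps_h^2$ subject to $\sum_h \eps_h = \eps$ via Lemma~\ref{lem:optimization}; the optimum is $(\sum_h a_h)^3/\eps^2$, which is the stated bound.

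Your primary route (equal split $\eps_h = \eps/H$) does \emph{not} prove the lemma as stated. It gives $H^2 \sum_h a_h^3/\eps^2$, and then you invoke $\sum_h a_h^3 \le (\sum_h a_h)^3$ and try to absorb the leftover $H^2$ into $\lesssim$. But $\lesssim$ in this paper hides only logarithmic factors (see the sentence after Corollary~\ref{lem:covering}), not polynomial factors in $H$; and in general $H^2\sum_h a_h^3$ is genuinely larger than $(\sum_h a_h)^3$ (by Jensen, $(\sum_h a_h)^3 \le H^2\sum_h a_h^3$, with equality only when all $a_h$ are equal). So the equal-split argument yields a strictly weaker bound than claimed. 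Commit to the optimized-$\eps_h$ version: it is no harder, and it is what the paper actually uses.
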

\begin{proof}
For all $h \in [H]$, let $\mathcal{C}_h$ be an $\eps_h$-covering of $\mathcal{F}_\tfHead$ with weight bounds corresponding to head $h$. Since $f_\tfMulti\left(X; \left\{W^{[h]}_V, W^{[h]}_{QK}\right\}_{h=1}^{H}\right) =  \sum_{h=1}^H f_\tfHead\left(X; W^{[h]}_V, W^{[h]}_{QK}\right)$, we have $\mathcal{C} := \mathcal{C}_1 \times \ldots \times \mathcal{C}_H$\footnote{Here, $\times$ denotes the Cartesian product: the functions obtained by using the every combination of parameters of each individual cover.} is an $\left(\sum_{h=1}^H \eps_h\right)$-covering for $\mathcal{F}_\tfMulti$. Using Corollary \ref{lem:covering} (and optimizing for $\eps_h$ using Lemma \ref{lem:optimization}, by breaking them into individual errors for each head), we have
\[
\log|\mathcal{C}| = \sum_{h=1}^H \log |\mathcal{C}_h| \le \sum_{h=1}^H \le (L_\sigma B_X)^2 \cdot \frac{\left( \sum_{h=1}^H ({B_V^{2,1}}^{[h]} )^\frac{2}{3} + (2 {B_{QK}^{2,1}}^{[h]} B_V^{[h]})^\frac{2}{3} \right)^3}{\eps^2}\cdot\log(dmT).
\]
\end{proof}
To see the dependence on $H$, consider the setting where the weight bounds are the same for each head (dropping the $[h]$ subscript), then we get,
\[
\log \mathcal{N}_\infty(\mathcal{F}_\tfMulti; \epsilon; X^{(1)}, \dots, X^{(m)}, \| \cdot \|_2)
\lesssim (L_\sigma B_X)^2 \cdot H^3 \cdot \frac{\left(({B_V^{2,1}} )^\frac{2}{3} + (2 {B_{QK}^{2,1}} B_V)^\frac{2}{3} \right)^3}{\eps^2}\cdot\log(dmT).
\]

\subsection{Capacity of multi-layer Transformers} \label{subsec:deeptf}
This section analyzes the capacity of an $L$-layer Transformer. Let us denote the weights of layer $i$ by $W^{(i)} := \left\{W_Q^{(i)}, {W_K^{(i)}}, W_V^{(i)}, W_C^{(i)}\right\}$ such that $ \norm{W_K^{(i)}{W_Q^{(i)}}^\top}_2 \le B_{QK}^{(i)}, \norm{W_V^{(i)}}_2 \le B_V^{(i)},\norm{W_C^{(i)}}_2 \le B_C^{(i)}$ and $ \norm{{W_K^{(i)}}^\top{W_Q^{(i)}}}_{2,1} \le {B^{2,1}_{QK}}^{(i)}, \norm{W_V^{(i)}}_{2,1} \le {B^{2,1}_V}^{(i)}$ and $\norm{{W_C^{(i)}}}_{2,1} \le {B^{2,1}_C}^{(i)}$. Let us further denote the set of weights up to layer $i$ by $W^{1:i} = (W^{(1)}, \ldots, W^{i-1})$. Let the input representation of layer $i$ be $g_\tfHead^{(i)}(X; W^{1:i})$. We inductively define $g$ with $g_\tfHead^{(1)}(X; W^{1:1}) = X$
\begin{align*}
g_\tfHead^{(i+1)}\left(X; W^{1:i+1}\right) = \Pi_\mathsf{norm}\left(\sigma\left(\Pi_\mathsf{norm}\left( f\left(g_\tfHead^{(i)}\left(X; W^{1:i}\right); W^{(i)}\right)\right)\right)W_C^{(i)}\right) \text{ with}\\
f\left(Z; \{W_Q, W_K, W_V, W_C\}\right)  = \mathsf{RowSoftmax}\left(Z W_Q \left( Z W_K\right)^\top \right) Z W_V,
\end{align*}
where $\Pi_\mathsf{norm}$ is applied row-wise. Our final output is $g_\tfScalar(X;W^{1:L+1}, w) = w^\top g_\tfHead^{(L)}\left(X; W^{1:L+1}\right)\CLS$ for $\|w\| \le B_w$.

In order to construct a cover, we will first bound the distance between the function $g$ with different weight parameters $W^{1:L+1}$ and $\hat{W}^{1:L+1}$. This bound will depend on the closeness of the parameters which will allow us to construct a cover of the network in an iterative fashion by constructing covers of each layer.

\subsubsection{Lipschitzness of the network}
To bound the Lipschitzness of the network, we will first bound the distance between $f$ with different weights and inputs.
\begin{lemma}[Instantiation of Lemma \ref{lem:lipschitz_general}] \label{lem:lipschitz_mat}
For any $W_K, \hat{W}_K, W_V, \hat{W}_V, W_Q, \hat{W}_Q \in \RR^{d \times k}$, for all $Z \in \mathbb{R}^{T \times d}$ such that $\norm{Z^\top}_{2, \infty} \le 1$,
\begin{align*}
&\norm{\left(f\left(Z; \{W_Q, W_K, W_V, \cdot\}\right) - f\left(Z; \{\hat{W}_Q, \hat{W}_K, \hat{W}_V, \cdot\}\right)\right)^\top }_{2, \infty} \\
&\le  2\norm{W_V}_2 \norm{\left(W_Q W_K^\top - \hat{W}_Q \hat{W}_K^\top\right)Z^\top}_{2,\infty}+ \norm{(W_V - \hat{W}_V)^\top Z^\top}_{2, \infty}
\end{align*}
\end{lemma}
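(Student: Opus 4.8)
The plan is to read $f(Z;\{W_Q,W_K,W_V,\cdot\})$ as the ``all heads at once'' attention layer in which head $\tau$ uses $z_\tau$ (the $\tau$-th row of $Z$) as its context: indexing tokens by $t$, its alignment scores are $\score(z_t,z_\tau)=z_\tau^\top W_QW_K^\top z_t$ (these are the entries of the $\tau$-th row of $ZW_Q(ZW_K)^\top$), its $\phi_\mathrm{in}(x)=W_V^\top x$, its $\normal=\softmax$, and its $\phi_\mathrm{out}$ is the identity. A direct expansion then shows that the transpose of the $\tau$-th row of $f(Z;\{W_Q,W_K,W_V,\cdot\})$ equals $f_\Head(Z,z_\tau;\{W_Q,W_K\},W_V)$ (with $\phi_\mathrm{out}=\mathrm{id}$), so the $\tau$-th column of $\big(f(Z;\{W_Q,W_K,W_V,\cdot\})-f(Z;\{\hat W_Q,\hat W_K,\hat W_V,\cdot\})\big)^\top$ is exactly the difference of the two heads' outputs, and hence
\begin{align*}
&\norm{\big(f(Z;\{W_Q,W_K,W_V,\cdot\})-f(Z;\{\hat W_Q,\hat W_K,\hat W_V,\cdot\})\big)^\top}_{2,\infty}\\
&\qquad=\max_{\tau\in[T]}\norm{f_\Head(Z,z_\tau;\{W_Q,W_K\},W_V)-f_\Head(Z,z_\tau;\{\hat W_Q,\hat W_K\},\hat W_V)}.
\end{align*}
It therefore suffices to bound each summand by the (\,$\tau$-free\,) right-hand side of the lemma.

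For a fixed $\tau$ I would apply Lemma~\ref{lem:lipschitz_general}, instantiating the constants as $L_\mathrm{out}=1$ (identity map), $C_\normal=C_\softmax=2$ (Corollary~\ref{lem:softmax}), $B_X=1$ (the hypothesis $\norm{Z^\top}_{2,\infty}\le1$), and $B_\mathrm{in}=\norm{W_V}_2$ (since $\norm{W_V^\top x}\le\norm{W_V}_2\norm{x}$). It is worth noting the asymmetry in $W_V$: in the proof of Lemma~\ref{lem:lipschitz_general} the factor $B_\mathrm{in}$ multiplies only the score-perturbation term and is used only as a bound on $\phi_\mathrm{in}(\cdot\,;\theta_\mathrm{in})$ for the first (un-hatted) parameter, which is exactly why the claimed bound carries $\norm{W_V}_2$ rather than $\max(\norm{W_V}_2,\norm{\hat W_V}_2)$. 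Lemma~\ref{lem:lipschitz_general} then gives, for each $\tau$,
\begin{align*}
&\norm{f_\Head(Z,z_\tau;\{W_Q,W_K\},W_V)-f_\Head(Z,z_\tau;\{\hat W_Q,\hat W_K\},\hat W_V)}\\
&\qquad\le 2\norm{W_V}_2\,\norm{\score(Z,z_\tau;\{W_Q,W_K\})-\score(Z,z_\tau;\{\hat W_Q,\hat W_K\})}_\infty+\max_{t\in[T]}\norm{(W_V-\hat W_V)^\top z_t}.
\end{align*}

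The remaining step is to identify the two right-hand terms with the matrix norms in the statement. The $t$-th coordinate of $\score(Z,z_\tau;\{W_Q,W_K\})-\score(Z,z_\tau;\{\hat W_Q,\hat W_K\})$ is $z_\tau^\top(W_QW_K^\top-\hat W_Q\hat W_K^\top)z_t$; by Cauchy--Schwarz and $\norm{z_\tau}\le\norm{Z^\top}_{2,\infty}\le1$ this is at most $\norm{(W_QW_K^\top-\hat W_Q\hat W_K^\top)z_t}$, and since the columns of $(W_QW_K^\top-\hat W_Q\hat W_K^\top)Z^\top$ are precisely these vectors, the $\ell_\infty$ norm of the score difference is at most $\norm{(W_QW_K^\top-\hat W_Q\hat W_K^\top)Z^\top}_{2,\infty}$. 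Likewise $(W_V-\hat W_V)^\top Z^\top$ has columns $(W_V-\hat W_V)^\top z_t$, so $\max_t\norm{(W_V-\hat W_V)^\top z_t}=\norm{(W_V-\hat W_V)^\top Z^\top}_{2,\infty}$. Substituting and taking the maximum over $\tau$ yields the lemma.

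The whole argument is bookkeeping on top of Lemma~\ref{lem:lipschitz_general}; the only points that need care are getting the transpose right when passing to $\norm{(W_QW_K^\top-\hat W_Q\hat W_K^\top)Z^\top}_{2,\infty}$, using $\norm{z_\tau}\le1$ to absorb the context vector inside the $\ell_\infty$ bound, and the $W_V$-asymmetry remarked on above. I do not anticipate a substantive obstacle beyond this.
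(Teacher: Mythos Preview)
Your proposal is correct and follows essentially the same route as the paper. The paper's proof fixes a row $\tau$ and unrolls the same triangle-inequality / $\|Pv\|\le\|P\|_{2,\infty}\|v\|_1$ / softmax-Lipschitz steps that constitute the proof of Lemma~\ref{lem:lipschitz_general}, rather than citing that lemma as a black box; your version invokes Lemma~\ref{lem:lipschitz_general} explicitly (which is exactly what the lemma's title ``Instantiation of Lemma~\ref{lem:lipschitz_general}'' suggests) and then performs the same final identification of the score-difference $\ell_\infty$ term with $\norm{(W_QW_K^\top-\hat W_QW_K^\top)Z^\top}_{2,\infty}$ via Cauchy--Schwarz and $\norm{z_\tau}\le 1$. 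Your remark about the $W_V$ vs.\ $\hat W_V$ asymmetry is on point: in the proof of Lemma~\ref{lem:lipschitz_general} the $B_\mathrm{in}$ factor enters only through $\norm{\phi_\mathrm{in}(X;\theta_\mathrm{in})^\top}_{2,\infty}$, i.e.\ the un-hatted parameter, which is why $\norm{W_V}_2$ alone suffices.
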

\begin{proof}
Consider a fixed row $\tau$ of the output of the functions,
\begin{align*}
&\norm{f\left(Z; \{W_Q, W_K, W_V, \cdot\}\right)[\tau] - f\left(Z; \{\hat{W}_Q, \hat{W}_K, \hat{W}_V, \cdot\}\right)[\tau]}\\
&= \norm{W_V^\top Z^\top \mathsf{softmax}\left(ZW_K W_Q^\top z_\tau\right) -\hat{W}_V^\top Z^\top \mathsf{softmax}\left(Z\hat{W}_K \hat{W}_Q^\top z_\tau\right)}\\
\shortintertext{By triangle inequality:}&\le \norm{W_V^\top Z^\top \left(\mathsf{softmax}\left(ZW_K W_Q^\top z_\tau\right) - \mathsf{softmax}\left(Z\hat{W}_K \hat{W}_Q^\top z_\tau\right)\right)}\\
&\quad + \norm{(W_V - \hat{W}_V)^\top Z^\top \mathsf{softmax}\left(Z\hat{W}_K \hat{W}_Q^\top z_\tau\right)}\\
\shortintertext{Using $\norm{P v} \le \norm{P}_{2, \infty}\|v\|_1$:}&\le  \norm{W_V^\top Z^\top}_{2, \infty} \norm{\mathsf{softmax}\left(ZW_K W_Q^\top z_\tau\right) - \mathsf{softmax}\left(Z\hat{W}_K \hat{W}_Q^\top z_\tau\right)}_1\\
&\quad +  \norm{(W_V - \hat{W}_V)^\top Z^\top}_{2, \infty} \norm{\mathsf{softmax}\left(Z\hat{W}_K \hat{W}_Q^\top z_\tau\right)}_1\\
\shortintertext{By Corollary \ref{lem:softmax}, $\norm{Z^\top}_{2, \infty} \le 1$, $\|PQ\|_{2,\infty} \leq \|P\|_2 \|Q\|_{2,\infty}$, and $\|P^\top\|_2 = \|P\|_2$:}&\le 2 \norm{W_V}_2 \norm{ZW_K W_Q^\top z_\tau - Z\hat{W}_K \hat{W}_Q^\top z_\tau}_\infty +   \norm{(W_V - \hat{W}_V)^\top Z^\top}_{2, \infty}\\
&\le 2 \norm{W_V}_2 \norm{\left(W_Q W_K^\top - \hat{W}_Q \hat{W}_K^\top\right)Z^\top}_{2,\infty}+ \norm{(W_V - \hat{W}_V)^\top Z^\top}_{2, \infty}.
\end{align*}
\end{proof}

\begin{lemma} \label{lem:lipschitz_X}
For any $W_K, W_V, W_Q \in \RR^{d \times k}$, for all $Z, \hat{Z} \in \mathbb{R}^{T \times d}$ such that $\norm{Z^\top}_{2, \infty} \le 1, \|\hat{Z}^\top\|_{2, \infty} \le 1$,
\begin{align*}
&\norm{\left(f\left(Z; \{W_Q, W_K, W_V, \cdot\}\right) - f\left(\hat{Z}; \{W_Q, W_K, W_V, \cdot\}\right)\right)^\top }_{2, \infty} \\
&\le \norm{W_V}_2\left(1 + 4\norm{W_K W_Q^\top}_2\right)\norm{(Z - \hat{Z})^\top}_{2, \infty}.
\end{align*}
\end{lemma}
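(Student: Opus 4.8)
The plan is to run the same argument as in the proof of Lemma~\ref{lem:lipschitz_mat}, but with the \emph{input} $Z$ perturbed rather than the weights. Fix an output index $\tau \in [T]$, write $z_\tau$ for the $\tau$-th row of $Z$ (and $\hat z_\tau$ for that of $\hat Z$), and set $M := W_K W_Q^\top$, so that the $\tau$-th row of $f(Z;\{W_Q,W_K,W_V,\cdot\})$ is $W_V^\top Z^\top \softmax(ZMz_\tau)$. Abbreviating $p := \softmax(ZMz_\tau)$ and $\hat p := \softmax(\hat ZM\hat z_\tau)$, insert $W_V^\top \hat Z^\top p$ and split by the triangle inequality:
\[
\norm{W_V^\top Z^\top p - W_V^\top \hat Z^\top \hat p} \le \norm{W_V^\top (Z - \hat Z)^\top p} + \norm{W_V^\top \hat Z^\top (p - \hat p)}.
\]

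For the first term, use $\norm{Pv}\le\norm{P}_{2,\infty}\norm{v}_1$ together with $\norm{W_V^\top(Z-\hat Z)^\top}_{2,\infty}\le\norm{W_V}_2\norm{(Z-\hat Z)^\top}_{2,\infty}$ (from $\norm{PQ}_{2,\infty}\le\norm{P}_2\norm{Q}_{2,\infty}$) and the fact that $\norm{p}_1=1$ since $p\in\Delta^{T-1}$; this yields $\norm{W_V}_2\norm{(Z-\hat Z)^\top}_{2,\infty}$, with no $T$ factor precisely because $p$ sits on the simplex. For the second term, the same two matrix-norm inequalities plus the hypothesis $\norm{\hat Z^\top}_{2,\infty}\le 1$ bound it by $\norm{W_V}_2\,\norm{p-\hat p}_1$, and Corollary~\ref{lem:softmax} gives $\norm{p-\hat p}_1 \le 2\norm{ZMz_\tau - \hat ZM\hat z_\tau}_\infty$.

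It remains to control $\norm{ZMz_\tau - \hat ZM\hat z_\tau}_\infty$, which one more triangle inequality splits into $\norm{(Z-\hat Z)Mz_\tau}_\infty + \norm{\hat ZM(z_\tau-\hat z_\tau)}_\infty$. The $t$-th coordinate of the first is $(z_t-\hat z_t)^\top M z_\tau$, at most $\norm{z_t-\hat z_t}\,\norm{M}_2\,\norm{z_\tau} \le \norm{M}_2\norm{(Z-\hat Z)^\top}_{2,\infty}$ using $\norm{z_\tau}\le\norm{Z^\top}_{2,\infty}\le 1$; the $t$-th coordinate of the second is $\hat z_t^\top M(z_\tau-\hat z_\tau)$, at most $\norm{\hat z_t}\,\norm{M}_2\,\norm{z_\tau-\hat z_\tau} \le \norm{M}_2\norm{(Z-\hat Z)^\top}_{2,\infty}$ using $\norm{\hat z_t}\le 1$ and $\norm{z_\tau-\hat z_\tau}\le\norm{(Z-\hat Z)^\top}_{2,\infty}$. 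Hence $\norm{p-\hat p}_1 \le 4\norm{W_KW_Q^\top}_2\norm{(Z-\hat Z)^\top}_{2,\infty}$, so the $\tau$-th row difference is at most $\norm{W_V}_2\bigl(1+4\norm{W_KW_Q^\top}_2\bigr)\norm{(Z-\hat Z)^\top}_{2,\infty}$; taking the maximum over $\tau$ gives the claimed $(2,\infty)$ bound.

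There is no real obstacle here: this lemma is the input-perturbation twin of Lemma~\ref{lem:lipschitz_mat}, proved by the same chain of estimates ($\norm{Pv}\le\norm{P}_{2,\infty}\norm{v}_1$, $\norm{PQ}_{2,\infty}\le\norm{P}_2\norm{Q}_{2,\infty}$, and the $\ell_\infty\!\to\!\ell_1$ Lipschitzness of softmax). The only point requiring care — a design principle rather than a difficulty — is to estimate every matrix–vector product so that the softmax weights are always measured in $\ell_1$ (kept on the simplex) while the perturbation of the softmax \emph{argument} is bounded coordinatewise through operator norms; this is exactly what stops any factor of $T$ (or of $\norm{Z}_F$) from creeping in.
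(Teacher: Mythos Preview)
Your proof is correct and follows essentially the same approach as the paper: fix a row $\tau$, split via the triangle inequality into a $(Z-\hat Z)$ term and a $(p-\hat p)$ term, control each using $\norm{Pv}\le\norm{P}_{2,\infty}\norm{v}_1$ and $\norm{PQ}_{2,\infty}\le\norm{P}_2\norm{Q}_{2,\infty}$, then apply Corollary~\ref{lem:softmax} and split the score difference once more. The sequence of estimates and the resulting constants are identical to the paper's.
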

\begin{proof}
Consider a fixed row $\tau$ of the output of the functions,
\begin{align*}
&\norm{f\left(Z; \{W_Q, W_K, W_V, \cdot\}\right)[\tau] - f\left(\hat{Z}; \{W_Q, W_K, W_V, \cdot\}\right)[\tau]}\\
&= \norm{W_V^\top Z^\top \mathsf{softmax}\left(ZW_K W_Q^\top z_\tau\right) - W_V^\top \hat{Z}^\top \mathsf{softmax}\left(\hat{Z}W_K W_Q^\top \hat{z}_\tau\right)}\\
\shortintertext{By triangle inequality:}
&\le \norm{W_V^\top \left(Z - \hat{Z}\right)^\top \mathsf{softmax}\left(ZW_K W_Q^\top z_\tau\right)} + \norm{W_V^\top \hat{Z}^\top \left(\mathsf{softmax}\left(ZW_K W_Q^\top z_\tau\right) - \mathsf{softmax}\left(\hat{Z}W_K W_Q^\top \hat{z}_\tau\right)\right)}\\
\shortintertext{Using $\|P v\| \le \|P\|_{2, \infty}\|v\|_1$:}
&\le \norm{W_V^\top \left(Z - \hat{Z}\right)}_{2, \infty} \norm{\mathsf{softmax}\left(ZW_K W_Q^\top z_\tau\right)}_1 \\
&\quad+ \norm{W_V^\top \hat{Z}^\top }_{2, \infty} \norm{\mathsf{softmax}\left(ZW_K W_Q^\top z_\tau\right) - \mathsf{softmax}\left(\hat{Z}W_K W_Q^\top \hat{z}_\tau\right)}_1\\
\shortintertext{By Corollary \ref{lem:softmax}, $\norm{\hat{Z}^\top}_{2, \infty} \le 1$ and $\|PQ\|_{2,\infty} \leq \|P\|_2 \|Q\|_{2,\infty}$:}
&\le \norm{W_V}_2\norm{(Z - \hat{Z})^\top}_{2, \infty} + 2\norm{W_V}_2 \norm{ZW_K W_Q^\top z_\tau - \hat{Z}W_K W_Q^\top \hat{z}_\tau}_\infty\\
\shortintertext{By triangle inequality:}
&\le \norm{W_V}_2\norm{(Z - \hat{Z})^\top}_{2, \infty} + 2\norm{W_V}_2 \left(\norm{(Z - \hat{Z})W_K W_Q^\top z_\tau}_\infty + \norm{\hat{Z}W_K W_Q^\top \left(z_\tau - \hat{z}_\tau\right)}_\infty \right)\\
\shortintertext{Since $\norm{\hat{Z}^\top}_{2, \infty} \le 1$ and $\|Pv\|_\infty \leq \|P^\top\|_{2,\infty} \|v\|$:}
&\le \norm{W_V}_2\left(1 + 4\norm{W_K W_Q^\top}_2\right)\norm{(Z - \hat{Z})^\top}_{2, \infty}.
\end{align*}
\end{proof}

With the above lemmas, we are ready to prove the effect of change of weights on $g$.
\begin{lemma}\label{lem:tflayer}
For any $W_{1}^{i+1}, \hat{W}_{1}^{i+1}$ satisfying the norm constraints, 
\begin{align*}
  &\norm{\left(g_\tfBlock^{(i+1)}(X;W^{1:i+1}) - g_\tfBlock^{(i+1)}(X;\hat{W}^{1:i+1})\right)^\top}_{2, \infty} \\
  &\le  \norm{  \left(W_C^{(i)} - \hat{W}_C^{(i)}\right)^\top \sigma\left(\Pi_\mathsf{norm}\left( f\left(\left(X; \hat{W}^{1:i}\right); \hat{W}^{(i)}\right)\right)\right)^\top}_{2, \infty} \\
  &\quad + L_\sigma B_C^{(i)}B_V^{(i)}\left(1 + 4B_{QK}^{(i)}\right)\norm{\left(g_\tfBlock^{(i)}\left(X; W^{1:i}\right) - g_\tfBlock^{(i)}\left(X; \hat{W}^{1:i}\right)\right)^\top}_{2, \infty}\\
&\quad\quad+ 2{L_\sigma} B_C^{(i)}  B_V^{(i)} \norm{\left({W_Q^{(i)}} {W_K^{(i)}}^\top - \hat{W}_Q^{(i)} {{}\hat{W}_K^{(i)}}^\top\right)g_\tfBlock^{(i)}\left(X; \hat{W}^{1:i}\right)^\top}_{2,\infty} \\
&\quad\quad\quad+ L_\sigma B_C^{(i)}\norm{(W_V - \hat{W}_V)^\top g_\tfBlock^{(i)}\left(X; \hat{W}^{1:i}\right)^\top}_{2, \infty}.
\end{align*}
\end{lemma}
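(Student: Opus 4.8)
The plan is to peel off the layer-$(i{+}1)$ operations one at a time, reducing each step to the single-layer Lipschitz estimates already in hand. Write $Z := g_\tfBlock^{(i)}(X; W^{1:i})$ and $\hat{Z} := g_\tfBlock^{(i)}(X; \hat{W}^{1:i})$; both satisfy $\norm{Z^\top}_{2,\infty}, \norm{\hat{Z}^\top}_{2,\infty} \le 1$ because layer $i$ ends with a row-wise application of $\Pi_\mathsf{norm}$ (for the base case $i=1$ one instead uses the input normalization, picking up a harmless extra factor of $B_X$). Also abbreviate $P := \sigma(\Pi_\mathsf{norm}(f(Z; W^{(i)})))$ and $\hat{P} := \sigma(\Pi_\mathsf{norm}(f(\hat{Z}; \hat{W}^{(i)})))$, so that $g_\tfBlock^{(i+1)}(X;W^{1:i+1}) = \Pi_\mathsf{norm}(P W_C^{(i)})$ and likewise with hats.

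First I would invoke the contractivity of the outer $\Pi_\mathsf{norm}$ (Lemma~\ref{lem:picontract}, applied row by row — which immediately gives contractivity in $\norm{\cdot^\top}_{2,\infty}$ since that norm is the max of the per-row $\ell_2$ norms) to discard the outermost projection, leaving $\norm{(P W_C^{(i)} - \hat{P} \hat{W}_C^{(i)})^\top}_{2,\infty}$. Splitting $P W_C^{(i)} - \hat{P} \hat{W}_C^{(i)} = (P - \hat{P})W_C^{(i)} + \hat{P}(W_C^{(i)} - \hat{W}_C^{(i)})$ by the triangle inequality, the second piece equals the first term of the claimed bound after transposing, while the first piece is controlled by $\norm{AB}_{2,\infty} \le \norm{A}_2\norm{B}_{2,\infty}$ together with $\norm{(W_C^{(i)})^\top}_2 = \norm{W_C^{(i)}}_2 \le B_C^{(i)}$, giving $B_C^{(i)}\norm{(P - \hat{P})^\top}_{2,\infty}$.

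Next I would bound $\norm{(P - \hat{P})^\top}_{2,\infty}$: the entrywise $L_\sigma$-Lipschitzness of $\sigma$ (with $\sigma(0)=0$) yields a per-row $L_\sigma$-Lipschitz bound in $\ell_2$, and $\Pi_\mathsf{norm}$ is contractive again, so $\norm{(P-\hat{P})^\top}_{2,\infty} \le L_\sigma \norm{(f(Z;W^{(i)}) - f(\hat{Z};\hat{W}^{(i)}))^\top}_{2,\infty}$. Then split $f(Z;W^{(i)}) - f(\hat{Z};\hat{W}^{(i)}) = (f(Z;W^{(i)}) - f(\hat{Z};W^{(i)})) + (f(\hat{Z};W^{(i)}) - f(\hat{Z};\hat{W}^{(i)}))$ and apply Lemma~\ref{lem:lipschitz_X} to the first difference (using $\norm{Z^\top}_{2,\infty},\norm{\hat{Z}^\top}_{2,\infty}\le 1$) to get $B_V^{(i)}(1+4B_{QK}^{(i)})\norm{(Z - \hat{Z})^\top}_{2,\infty}$, and Lemma~\ref{lem:lipschitz_mat} to the second to get $2B_V^{(i)}\norm{(W_Q^{(i)}{W_K^{(i)}}^\top - \hat{W}_Q^{(i)}{\hat{W}_K^{(i)}}^\top)\hat{Z}^\top}_{2,\infty} + \norm{(W_V^{(i)} - \hat{W}_V^{(i)})^\top\hat{Z}^\top}_{2,\infty}$. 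Substituting back $Z = g_\tfBlock^{(i)}(X;W^{1:i})$, $\hat{Z} = g_\tfBlock^{(i)}(X;\hat{W}^{1:i})$, multiplying through by $L_\sigma B_C^{(i)}$, and adding the $\norm{(W_C^{(i)} - \hat{W}_C^{(i)})^\top \hat{P}^\top}_{2,\infty}$ term produces exactly the four-term bound in the statement.

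So the argument is a bookkeeping chain of triangle inequalities glued to the two single-layer Lipschitz lemmas, Lemma~\ref{lem:picontract}, and the Lipschitzness of $\sigma$; there is no deep step. The only genuinely delicate points are (i) keeping every application of the submultiplicativity $\norm{AB}_{2,\infty}\le\norm{A}_2\norm{B}_{2,\infty}$ and of $\norm{A^\top}_2 = \norm{A}_2$ correctly oriented with respect to transposes, and (ii) confirming the hypotheses $\norm{Z^\top}_{2,\infty},\norm{\hat{Z}^\top}_{2,\infty}\le 1$ that license Lemmas~\ref{lem:lipschitz_mat} and~\ref{lem:lipschitz_X}, which is precisely where the row-wise projection form of $\Pi_\mathsf{norm}$ does the work. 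I expect this norm-tracking to be the main (but routine) obstacle.
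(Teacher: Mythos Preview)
Your proposal is correct and follows essentially the same route as the paper's proof: peel off the outer $\Pi_\mathsf{norm}$ via Lemma~\ref{lem:picontract}, split the $W_C$ term by the triangle inequality, absorb $\|W_C^{(i)}\|_2$ and $L_\sigma$, peel off the inner $\Pi_\mathsf{norm}$, and then split $f(Z;W^{(i)}) - f(\hat Z;\hat W^{(i)})$ into the ``same weights, different input'' piece handled by Lemma~\ref{lem:lipschitz_X} and the ``same input, different weights'' piece handled by Lemma~\ref{lem:lipschitz_mat}. The bookkeeping concerns you flag (transpose orientation in the $\norm{AB}_{2,\infty}\le\norm{A}_2\norm{B}_{2,\infty}$ step, and the $\norm{Z^\top}_{2,\infty}\le 1$ hypothesis coming from the previous $\Pi_\mathsf{norm}$) are exactly the ones that matter, and you have them right.
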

\begin{proof}
Unrolling one layer, we have
\begin{align*}
    &\norm{\left(g_\tfHead^{(i+1)}\left(X; W^{1:i+1}\right) - g_\tfHead^{(i+1)}\left(X; \hat{W}^{1:i+1}\right)\right)^\top}_{2, \infty}\\
    &=\left\| \left(\Pi_\mathsf{norm}\left(\sigma\left(\Pi_\mathsf{norm}\left( f\left(g_\tfHead^{(i)}\left(X; W^{1:i}\right); W^{(i)}\right)\right)\right)W_C^{(i)}\right)\right.\right.\\
   &\qquad\qquad \left.\left. - \Pi_\mathsf{norm}\left(\sigma\left(\Pi_\mathsf{norm}\left( f\left(g_\tfHead^{(i)}\left(X; \hat{W}^{1:i}\right); \hat{W}^{(i)}\right)\right)\right)\hat{W}_C^{(i)}\right)\right)^\top\right\|_{2, \infty}\\
    \shortintertext{Using Lemma \ref{lem:picontract} for each row:}
    &\le \norm{ {W_C^{(i)}}^\top\sigma\left(\Pi_\mathsf{norm}\left( f\left(g_\tfHead^{(i)}\left(X; W^{1:i}\right); W^{(i)}\right)\right)\right)^\top - {{}\hat{W}_C^{(i)}}^\top \sigma\left(\Pi_\mathsf{norm}\left( f\left(g_\tfHead^{(i)}\left(X; \hat{W}^{1:i}\right); \hat{W}^{(i)}\right)\right)\right)}_{2, \infty}\\
    \shortintertext{By triangle inequality for each row:}
    &\le \underbrace{\norm{ {W_C^{(i)}}^\top\left(\sigma\left(\Pi_\mathsf{norm}\left( f\left(g_\tfHead^{(i)}\left(X; W^{1:i}\right); W^{(i)}\right)\right)\right) - \sigma\left(\Pi_\mathsf{norm}\left( f\left(g_\tfHead^{(i)}\left(X; \hat{W}^{1:i}\right); \hat{W}^{(i)}\right)\right)\right)\right)^\top}_{2, \infty}}_{(A)}\\
    &\qquad + \norm{ \left(W_C^{(i)} - \hat{W}_C^{(i)}\right)^\top\sigma\left(\Pi_\mathsf{norm}\left( f\left(g_\tfHead^{(i)}\left(X; \hat{W}^{1:i}\right); \hat{W}^{(i)}\right)\right)\right)^\top}_{2, \infty}.
\end{align*}
Let us focus on term $(A)$.
\begin{align*}
\shortintertext{Bounding the norm per row:}
(A) &\le \norm{W_C^{(i)}}_2\norm{ \sigma\left(\Pi_\mathsf{norm}\left( f\left(g_\tfHead^{(i)}\left(X; W^{1:i}\right); W^{(i)}\right)\right)\right)^\top - \sigma\left(\Pi_\mathsf{norm}\left( f\left(g_\tfHead^{(i)}\left(X; \hat{W}^{1:i}\right); \hat{W}^{(i)}\right)\right)\right)^\top}_{2, \infty}\\
\shortintertext{Since $\sigma$ is
$L_\sigma$-Lipschitz and $\norm{W_C^{(i)}}_2 \le B_C^{(i)}$, for each row:}
&\le L_\sigma B_C^{(i)}\norm{ \Pi_\mathsf{norm}\left( f\left(g_\tfHead^{(i)}\left(X; W^{1:i}\right); W^{(i)}\right)\right)^\top - \Pi_\mathsf{norm}\left( f\left(g\left(X; \hat{W}^{1:i}\right); \hat{W}^{(i)}\right)\right)^\top}_{2, \infty}\\
\shortintertext{Using Lemma \ref{lem:picontract} for each row:}
&\le L_\sigma B_C^{(i)}\norm{ f\left(g_\tfHead^{(i)}\left(X; W^{1:i}\right); W^{(i)}\right)^\top - f\left(g_\tfHead^{(i)}\left(X; \hat{W}^{1:i}\right); \hat{W}^{(i)}\right)^\top}_{2, \infty}\\
\shortintertext{By triangle inequality:}
&\le L_\sigma B_C^{(i)}\norm{ f\left(g_\tfHead^{(i)}\left(X; W^{1:i}\right); W^{(i)}\right)^\top - f\left(g_\tfHead^{(i)}\left(X; \hat{W}^{1:i}\right); W^{(i)}\right)^\top}_{2, \infty}\\
&\quad+ L_\sigma B_C^{(i)}\norm{ f\left(g_\tfHead^{(i)}\left(X; \hat{W}^{1:i}\right); W^{(i)}\right)^\top - f\left(g_\tfHead^{(i)}\left(X; \hat{W}^{1:i}\right); \hat{W}^{(i)}\right)^\top}_{2, \infty}\\
\shortintertext{By Lemma \ref{lem:lipschitz_mat} and \ref{lem:lipschitz_X} and norm bounds on the matrices:}
&\le L_\sigma B_C^{(i)}B_V^{(i)}\left(1 + 4B_{QK}^{(i)}\right)\norm{g_\tfHead^{(i)}\left(X; W^{1:i}\right)^\top - g\left(X; \hat{W}^{1:i}\right)^\top}_{2, \infty}\\
&\quad+ {2L_\sigma} B_C^{(i)}  B_V^{(i)} \norm{\left({W_Q^{(i)}} {W_K^{(i)}}^\top - \hat{W}_Q^{(i)} {{}\hat{W}_K^{(i)}}^\top\right) g_\tfHead^{(i)}\left(X; \hat{W}^{1:i}\right)^\top}_{2,\infty} \\
&\quad+ L_\sigma B_C^{(i)}\norm{(W_V - \hat{W}_V)^\top g_\tfHead^{(i)}\left(X; \hat{W}^{1:i}\right)^\top}_{2, \infty}.
\end{align*}
Combining the above gives us the desired result.
\end{proof}

Lastly, we take account of the last linear weight and observe that,
\begin{lemma} \label{lem:lastlayer}
For any $W^{1:L+1}, \hat{W}^{1:L+1}$ and $w, \hat{w}$,
\begin{align*}
&\left|g_\tfScalar\left(X; W^{1:L+1}, w\right) - g_\tfScalar\left(X; \hat{W}^{1:L+1}, \hat{w}\right) \right| \\
&\le  \norm{w}\norm{g_\tfBlock^{(L+1)}\left(X; W^{1:L+1}\right)_\CLS - g_\tfBlock^{(L+1)}\left(X; \hat{W}^{1:L+1}\right)_\CLS } + \left|(w - \hat{w})^\top g_\tfBlock^{(L+1)}\left(X; \hat{W}^{1:L+1}\right)_\CLS\right|.
\end{align*}
\end{lemma}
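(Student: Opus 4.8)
The plan is to reduce the claim to a single add-and-subtract step followed by the triangle inequality and Cauchy--Schwarz, isolating the two sources of perturbation: the block weights $W^{1:L+1}$ versus $\hat W^{1:L+1}$, and the final read-out vector $w$ versus $\hat w$. Write $g_W := g_\tfBlock^{(L+1)}(X; W^{1:L+1})_\CLS$ and $g_{\hat W} := g_\tfBlock^{(L+1)}(X; \hat W^{1:L+1})_\CLS$, so that by the definition of $g_\tfScalar$ we have $g_\tfScalar(X; W^{1:L+1}, w) = w^\top g_W$ and $g_\tfScalar(X; \hat W^{1:L+1}, \hat w) = \hat w^\top g_{\hat W}$.

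First I would insert the mixed term $w^\top g_{\hat W}$, giving the exact identity
\[
w^\top g_W - \hat w^\top g_{\hat W} = w^\top\!\left(g_W - g_{\hat W}\right) + (w - \hat w)^\top g_{\hat W}.
\]
Then I would take absolute values, apply the triangle inequality to split into the two summands, bound the first by $\norm{w}\,\norm{g_W - g_{\hat W}}$ via Cauchy--Schwarz, and leave the second as $\bigl|(w-\hat w)^\top g_{\hat W}\bigr|$. This is precisely the stated bound, so no further work is needed.

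There is genuinely no obstacle here; the lemma is elementary. The only point worth flagging is the choice to subtract $w^\top g_{\hat W}$ rather than $\hat w^\top g_W$: this ordering is what makes the decomposition compatible with the rest of the deep-network argument. The first term, $\norm{w}\,\norm{g_W - g_{\hat W}}$, will later be controlled by a cover of the block class $\mathcal{F}^{(L)}_\tfBlock$ evaluated at the fixed read-out $w$, using the layerwise Lipschitz recursion of Lemma~\ref{lem:tflayer}; the second term, $\bigl|(w-\hat w)^\top g_{\hat W}\bigr|$, will be controlled by a standard $\ell_\infty$-cover of the linear functional $z \mapsto w^\top z$ over the range of $g_\tfBlock^{(L+1)}$, which the layer-normalization projection $\Pi_\mathsf{norm}$ confines to the unit ball. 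Composing these two covers (with $\norm{w}\le B_w$) then yields the covering-number bound for $\mathcal{F}^{(L)}_\tfScalar$ with only an additive term in the log-cardinality.
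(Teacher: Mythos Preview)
Your proposal is correct and matches the paper's proof essentially step for step: add and subtract $w^\top g_{\hat W}$, apply the triangle inequality, then Cauchy--Schwarz on the first term while leaving the second intact. Your additional remarks about why this particular split is chosen (compatibility with the layerwise cover and the linear-functional cover) are accurate and helpful, though the paper's proof of the lemma itself stops at the inequality.
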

\begin{proof}
Observe that,
\begin{align*}
&\left|g_\tfScalar\left(X; W^{1:L+1}, w\right) - g_\tfScalar\left(X; \hat{W}^{1:L+1}, \hat{w}\right) \right|\\ 
&= \left|w^\top g_\tfBlock^{(L+1)}\left(X; W^{1:L+1}\right)_\CLS - \hat{w}^\top g_\tfBlock^{(L+1)}\left(X; \hat{W}^{1:L+1}\right)_\CLS \right|\\
\shortintertext{By triangle inequality:}
&\le \left|w^\top \left(g_\tfBlock^{(L+1)}\left(X; W^{1:L+1}\right)_\CLS - g_\tfBlock^{(L+1)}\left(X; \hat{W}^{1:L+1}\right)_\CLS \right)\right| + \left|(w - \hat{w})^\top g_\tfBlock^{(L+1)}\left(X; \hat{W}^{1:L+1}\right)_\CLS\right|\\
\shortintertext{Bounding the inner product by norms:}
&\le \norm{w}\norm{g_\tfBlock^{(L+1)}\left(X; W^{1:L+1}\right)_\CLS - g_\tfBlock^{(L+1)}\left(X; \hat{W}^{1:L+1}\right)_\CLS } + \left|(w - \hat{w})^\top g_\tfBlock^{(L+1)}\left(X; \hat{W}^{1:L+1}\right)_\CLS\right|.
\end{align*}
\end{proof}
\subsubsection{Constructing the cover}
The cover construction follows the standard recipe of composing covers per layer (as in \cite{bartlett2017spectrally}).
\begin{theorem} \label{thm:deeptf_full}
Let $\mathcal{F}_\tfScalar^{(L)}$ represent the class of functions of $L$-layer Transformer blocks satisfying the norm bounds (specified before) followed by linear layer on the $\CLS$ token. Then, for all $X^{(i)}$
\begin{align*}
&\log \mathcal{N}_\infty(\mathcal{F}_\tfScalar^{(L)}; \epsilon; X^{(1)}, \dots, X^{(m)}, \| \cdot\|_2) \lesssim \\ & \frac{\log(dmT)}{\eps^2} \times
\left(B_w^{\frac{2}{3}} + \sum_{i=1}^L{\alpha_i}^{\frac{2}{3}}\left( {{B^{2,1}_{C}}^{(i)}}^{\frac{2}{3}} + d^{\frac{2}{3}}\left(2L_\sigma B_C^{(i)}  B_V^{(i)} {B^{2,1}_{QK}}^{(i)}\right)^{\frac{2}{3}} + k^{\frac{2}{3}}\left(L_\sigma B_C^{(i)} {B^{2,1}_V}^{(i)}\right)^{\frac{2}{3}}\right)\right)^3
\end{align*}
where $\alpha_i = \prod_{j < i} L_\sigma B_C^{(j)} B_V^{(j)} (1 + 4B_{QK}^{(j)})$.
\end{theorem}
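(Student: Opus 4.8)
The plan is to run the layer-peeling cover construction of \cite{bartlett2017spectrally}, but driven entirely by the $(2,\infty)$- and $\ell_\infty$-flavored Lipschitz estimates of Lemmas~\ref{lem:lipschitz_mat}, \ref{lem:lipschitz_X}, and \ref{lem:tflayer}, so that the context length $T$ never appears except inside a logarithm. The structural fact that makes this work is layer normalization: the operator $\Pi_\mathsf{norm}$ (Lemma~\ref{lem:picontract}) forces $\norm{g_\tfBlock^{(i)}(X^{(j)})^\top}_{2,\infty}\le 1$ for every $i\ge 2$ and every input $X^{(j)}$, while $g_\tfBlock^{(1)}=X$ obeys $\norm{X^{(j)\top}}_{2,\infty}\le B_X$ and the pre-$W_C^{(i)}$ activations $\sigma(\Pi_\mathsf{norm}(f(\cdots)))$ have per-row norm at most $L_\sigma$. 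Hence every linear sub-map whose cover we will need is applied to a set of vectors with a uniform, $T$-free norm bound, so Lemma~\ref{lem:2-inf-cover} can be invoked with $N=mT$ base points and a constant input radius.

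\textbf{Step 1: unroll the Lipschitzness.} Iterating Lemma~\ref{lem:tflayer} over the $L$ layers and finishing with Lemma~\ref{lem:lastlayer} yields, for any two parameter tuples $(W^{1:L+1},w)$ and $(\hat W^{1:L+1},\hat w)$, a bound of the form
\begin{align*}
\left|g_\tfScalar(X;W^{1:L+1},w)-g_\tfScalar(X;\hat W^{1:L+1},\hat w)\right|
\le \left|(w-\hat w)^\top g_\tfBlock^{(L+1)}(X;\hat W^{1:L+1})_\CLS\right| + B_w\sum_{i=1}^L \alpha_i\, E_i,
\end{align*}
where $\alpha_i$ is the product of per-layer Lipschitz factors in the statement and $E_i$ is the sum of the three per-layer terms of Lemma~\ref{lem:tflayer}: a $W_C^{(i)}$-difference term with coefficient $1$, a $W_Q^{(i)}{W_K^{(i)}}^\top$-difference term with coefficient $2L_\sigma B_C^{(i)}B_V^{(i)}$, and a $W_V^{(i)}$-difference term with coefficient $L_\sigma B_C^{(i)}$. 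Each of these is the $(2,\infty)$-norm of a linear-map difference applied to the \emph{already-covered} layer-$i$ input, so when evaluated over all $m$ data points it equals $\max_{j\in[m],t\in[T]}$ of a per-token quantity --- exactly an $\ell_\infty$/$(2,\infty)$ cover objective over $N=mT$ base points.

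\textbf{Step 2: peel the layers and allocate the budget.} Build the cover from layer $1$ inward. Having already replaced $W^{(1)},\dots,W^{(i-1)}$ by elements of their covers, the $mT$ base vectors $\{g_\tfBlock^{(i)}(X^{(j)};\hat W^{1:i})_{t,:}\}$ are determined and uniformly norm-bounded, and we apply Lemma~\ref{lem:2-inf-cover} three times inside layer $i$ (to $W_V^{(i)}$, to the collapsed matrix $W_Q^{(i)}{W_K^{(i)}}^\top$ treated as a single operator --- the $z_\tau$ factor has already been absorbed in Lemma~\ref{lem:lipschitz_mat} --- and to $W_C^{(i)}$, the last against the activations produced by the already-covered $\hat W^{(i)}$), with the $(2,1)$-norm budgets of the class and the routine conversions between the relevant $(2,1)$-norm controls, which contribute the ambient-dimension factors $d^{2/3},k^{2/3}$. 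The output weight $w$ is covered directly by Lemma~\ref{lem:zhang} (input dimension $d$, $m$ base points, radius controlled by $B_w$). Each sub-cover has log-size $\lesssim (\mathrm{const}\cdot B_\bullet)^2/\eps_\bullet^2\cdot\log(dmT)$; the nested Cartesian product is a cover of $\mathcal{F}_\tfScalar^{(L)}$ whose log-size is the sum, and by Step~1 it is an $\eps$-cover provided $\sum_\bullet c_\bullet\eps_\bullet\le\eps$, where $c_\bullet$ is the Lipschitz coefficient attached to that term (a product of $B_w$, $\alpha_i$, and the within-layer constant, or $1$ for the $w$-term). Minimizing $\sum_\bullet A_\bullet/\eps_\bullet^2$ subject to $\sum_\bullet c_\bullet\eps_\bullet\le\eps$ with Lemma~\ref{lem:optimization} gives the optimum $\gamma^3/\eps^2$, $\gamma=\sum_\bullet A_\bullet^{1/3}c_\bullet^{2/3}$; grouping the $L$ layers' three terms (each carrying its $\alpha_i^{2/3}$ weight and its dimension factor) with the $B_w^{2/3}$ from the output-weight cover, and collecting constants, reproduces the stated bound.

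\textbf{The main obstacle.} The difficulty is less any single estimate than keeping two things true at once. First, $T$-freeness: it rests on never invoking an estimate that sums over the $T$ token dimension --- which is precisely why Lemmas~\ref{lem:lipschitz_mat}--\ref{lem:tflayer} are stated in $\ell_\infty$/$(2,\infty)$ form and why $\Pi_\mathsf{norm}$ is retained, so that the only surviving $T$ is the $mT$ inside $\log(dmT)$. Second, the peeling bookkeeping: the cover of layer $i$ must be built conditionally on the covers already fixed for the remaining layers, the error it introduces is amplified by the product $\alpha_i$ of those layers' Lipschitz factors, and propagating a consistent per-term error allocation through all $O(L)$ terms before invoking Lemma~\ref{lem:optimization} --- without destroying the $(\cdots)^3$ shape of the bound --- is where the real work (and the generally unavoidable $(C_2L_\sigma)^{O(L)}$-type blow-up of the simplified Theorem~\ref{thm:deeptf}) lies.
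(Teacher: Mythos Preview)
Your proposal is correct and follows essentially the same route as the paper: inductively build the cover from layer $1$ outward, at each layer $i$ covering $W_V^{(i)}$, the collapsed $W_Q^{(i)}{W_K^{(i)}}^\top$, and (conditionally on those) $W_C^{(i)}$ via Lemma~\ref{lem:2-inf-cover} against the $mT$ token-vectors produced by the already-fixed $\hat W^{1:i}$, propagate errors with Lemma~\ref{lem:tflayer}, handle $w$ at the end with Lemmas~\ref{lem:lastlayer} and~\ref{lem:zhang}, and balance the $O(L)$ error budgets with Lemma~\ref{lem:optimization}. Your emphasis on $\Pi_\mathsf{norm}$ as the mechanism that keeps every intermediate $(2,\infty)$ radius at $1$ (so the only $T$ that survives is inside $\log(dmT)$) is exactly the point the paper exploits.
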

\begin{proof}
Our goal is to show that for every $\eps > 0$, and collection of inputs $X^{(1)}, \dots, X^{(m)}$, there is a cover $\mathcal{C}$ of vectors in $\R^{(m)}$ such that for all $W^{1:L+1}$ and $w$ satisfying the norm bounds, there is some $v \in \mathcal{C}$ such that $\max_i |g_\tfScalar(X^{(i)}; W^{1:L+1}, w) - v| \leq \eps$.

In each layer of the transformer, $W_Q^{(i)}$ and $W_K^{(i)}$ always appear together in the form $W_K^{(i)} {W_Q^{(i)}}^\top$. Therefore, we will overload notation and define $W_{QK}^{(i)} : W_K^{(i)} {W_Q^{(i)}}^\top$. Our cover $\mathcal{C}$ will be proper, consisting of vectors of the form $(g_\tfScalar(X^{(i)}; \hat{W}^{1:L+1}, \hat{w}))_{i \in [m]}$. We will build the cover iteratively by finding finite collections of matrices $\hat{\mathcal{W}}^{1:i}$ for each layer. 

First observe that for any collection of $Z^{(1)}, \ldots, Z^{(m)} \in \RR^{T \times d_1}$, and any $W, \hat{W} \in \RR^{d_1 \times d_2}$,
\[\max_{i \in [m]} \norm{W^\top {Z^{(i)}}^\top - \hat{W}^\top Z{^{(i)}}^\top}_{2, \infty} = \max_{i \in [m], t \in [T]} \norm{W^\top z_t^{(i)} - \hat{W}^\top z_t^{(i)}}.\] 
This crucially allows us to aggregate over the samples and context length. In particular, we can apply Lemma~\ref{lem:2-inf-cover} with the input vectors $(z_t^{(i)})_{i \in [m], t \in [T]}$; a total of $mT$ input vectors. Specifically, for any $\epsilon$ and $\mathcal{W}(d_1, d_2, \alpha):= \{W \in \RR^{d_1 \times d_2}~|~ \|W\|_{2, 1} \le \alpha\}$ with fixed $Z^{(i)}$ satisfying $\norm{{Z^{(i)}}^\top}_{2, \infty} \le 1$, Lemma~\ref{lem:2-inf-cover} gives us such a cover.

\sloppy First let us build a cover for one Transformer layer with inputs $Z^{(1)}, \ldots, Z^{(m)}$. We will begin with creating an $\epsilon_V$-cover $\hat{\mathcal{W}}_V$ for the function class of linear transformations given by $\mathcal{W}_V: \{W \in \RR^{d \times k}, \norm{W}_{2, 1} \le \alpha, \norm{W}_{2} \le s\}$ and $\epsilon_{QK}$-cover $\hat{\mathcal{W}}_{QK}$ for $\mathcal{W}_{QK}:= \{W \in \RR^{d \times d}, \norm{W^\top}_{2,1} \le \beta, \norm{W}_{2} \le r\}$ and inputs $Z^{(1)}, \ldots, Z^{(m)}$. For each pair of $\hat{W}_V \in \hat{\mathcal{W}}_V$ and $\hat{W}_{QK} \in \hat{\mathcal{W}}_{QK}$, we construct an $\epsilon_{C}$-cover $\hat{\mathcal{W}}_C(\hat{W}_V, \hat{W}_{QK})$ for $\mathcal{W}_C: \{W \in \RR^{k \times d}, \norm{W}_{2,1} \le \gamma, \norm{W}_{2} \le c\}$ and inputs $\left\{\sigma\left(\Pi_\mathsf{norm}\left( f\left(Z^{(i)}; \hat{W}_V, \hat{W}_{QK}\right)\right)\right)\right\}_{i=1}^{m}$. Our final cover is \[\hat{\mathcal{W}}:= \left\{(\hat{W}_V, \hat{W}_{QK}, \hat{W}_C) : \hat{W}_V \in \hat{\mathcal{W}}_V, \hat{W}_V \in \hat{\mathcal{W}}_V, \hat{W}_C \in \hat{\mathcal{W}}_C(\hat{W}_V, \hat{W}_{QK}) \right\}.\]

Using Lemma \ref{lem:tflayer}, we can show that $\hat{\mathcal{W}}$ is an $\epsilon$-cover for $g(\cdot; \{W_V, W_{QK}, W_C\})$ and inputs $Z^{(1)}, \ldots, Z^{(m)}$ where
\[
\epsilon = \eps_C + 2L_\sigma c  s \eps_{QK} + L_\sigma c\eps_V.
\]
Using Lemma \ref{lem:2-inf-cover}, the size of the cover is
\begin{align*}
|\hat{\mathcal{W}}| &\le |\hat{\mathcal{W}}_V| |\hat{\mathcal{W}}_{QK}| \max_{\substack{\hat{W}_V \in \hat{\mathcal{W}}_V\\ \hat{W}_{QK} \in \hat{\mathcal{W}}_{QK}}  } \left|\hat{\mathcal{W}}_C(\hat{W}_V, \hat{W}_{QK})\right|\\
\implies \log |\hat{\mathcal{W}}| &\lesssim \left(\frac{\alpha^2 }{\eps_{V}^2} + \frac{ \beta^2}{\eps_{QK}^2} + \frac{\gamma^2}{\eps_{C}^2}\right) \log (dmT).
\end{align*}

\sloppy We are now ready to inductively construct a cover for the deeper network. Suppose we have a $\eps^{(i)}$-cover $\hat{\mathcal{W}}^{1:i}$ for $g(\cdot; W^{1:i})$ on $X^{(1)}, \cdots, X^{(m)}$. We show how to construct an $\eps^{(i+1)}$-cover for $g(\cdot; W^{1:i+1})$. For every element $\hat{W}^{1:i} \in \hat{\mathcal{W}}^{1:i}$ we construct a $\left(\eps_C^{(i)} + 2L_\sigma B_C^{(i)}  B_V^{(i)} \eps_{QK}^{(i)} + L_\sigma B_C^{(i)}\eps_V^{(i)}\right)$-cover $\hat{\mathcal{W}}_i(\hat{W}^{1:i})$ for the transformer layer (as above) on inputs $\left\{g(X^{(j)}; \hat{W}^{1:i})\right\}_{j=1}^{m}$. Consider the cover
\[\hat{\mathcal{W}}^{1:i+1}:= \left\{ (\hat{W}^{1:i}, \hat{W}^{(i)}) : \hat{W}^{1:i} \in  \hat{\mathcal{W}}^{1:i},  \hat{W}^{(i)} \in \hat{\mathcal{W}}_i(\hat{W}^{1:i}) \right\}.\]
By Lemma~\ref{lem:tflayer}, this gives,
\[
\eps^{(i+1)} = L_\sigma B_C^{(i)} B_V^{(i)} (1 + 4B_{QK}^{(i)}) \eps^{(i)} + \eps_C^{(i)} + 2L_\sigma B_C^{(i)}  B_V^{(i)} \eps_{QK}^{(i)} + L_\sigma B_C^{(i)}\eps_V^{(i)}.
\]
The size of the cover is
\begin{align*}
|\hat{\mathcal{W}}^{1:i+1}| &\le |\hat{\mathcal{W}}^{1:i}| \max_{\substack{\hat{W}^{1:i} \in \hat{\mathcal{W}}^{1:i} }} \left|\hat{\mathcal{W}}_i(\hat{W}^{1:i})\right|.
\end{align*}
Inductively applying this, we get
\begin{align*}
\eps^{(L+1)} &= \sum_{i=1}^L \left(\prod_{j < i} L_\sigma B_C^{(j)} B_V^{(j)} (1 + 4B_{QK}^{(j)})\right) \left(\eps_C^{(i)} + 2{L_\sigma} B_C^{(i)}  B_V^{(i)} \eps_{QK}^{(i)} + L_\sigma B_C^{(i)}\eps_V^{(i)}\right)\\
&= \sum_{i=1}^L \alpha_i\left(\eps_C^{(i)} + 2{L_\sigma} B_C^{(i)}  B_V^{(i)} \eps_{QK}^{(i)} + L_\sigma B_C^{(i)}\eps_V^{(i)}\right)
\end{align*}
where $\alpha_i = \prod_{j < i} L_\sigma B_C^{(j)} B_V^{(j)} (1 + 4B_{QK}^{(j)})$.

The size of the cover is
\[
\log \left(|\hat{\mathcal{W}}^{1:L+1}|\right) \le \sum_{i=1}^{L}\left(\frac{{{B^{2,1}_V}^{(i)}}^2 }{{\eps_{V}^{(i)}}^2} + \frac{ {{B^{2,1}_{QK}}^{(i)}}^2}{{\eps_{QK}^{(i)}}^2} + \frac{{ {B^{2,1}_{C}}^{(i)}}^2}{{\eps_{C}^{(i)}}^2}\right) \log (dmT).
\]
 Notice that the layer-norm maintains the norm bound on the inputs. Lastly, we need to cover the linear layer on the $\CLS$ token and compose it with the cover of $g^{1:L}$ (as before). Using Lemma \ref{lem:zhang} and \ref{lem:lastlayer}, we can get the final $\eps$-cover $\mathcal{C}$ with
 \begin{align*}
 \eps = B_w\sum_{i=1}^L\alpha_i\left(\eps_C^{(i)} + 2{L_\sigma} B_C^{(i)}  B_V^{(i)} \eps_{QK}^{(i)} + L_\sigma B_C^{(i)}\eps_V^{(i)}\right) + \eps_w
 \end{align*}
 and size
 \[
 \log |\mathcal{C}| \lesssim \frac{ B_w^2 \log(m)}{\eps_w^2} + \sum_{i=1}^{L}\left(\frac{ {{B^{2,1}_V}^{(i)}}^2 }{{\eps_{V}^{(i)}}^2} + \frac{ {{B^{2,1}_{QK}}^{(i)}}^2}{{\eps_{QK}^{(i)}}^2} + \frac{{ {B^{2,1}_{C}}^{(i)}}^2}{{\eps_{C}^{(i)}}^2}\right) \log (dmT).
 \]
 Using Lemma \ref{lem:optimization}, the size of the cover for fixed $\eps$ gives us the desired result.
\end{proof}


\section{Sparse function representation via bounded-norm Transformers}
\label{sec:appendix-approx-proofs}

\subsection{Setup}
\label{subsec:appendix-approx-setup}

\paragraph{Reductions from Boolean functions to Transformers.} In order to establish our function approximation results, we must first define a canonical mapping between length-$T$ Boolean strings $b \in \{0, 1\}^T$ and Transformer inputs $X \in \RR^{T \times d}$. The key point (which has also been considered since the inception of the Transformer \citep{vaswani2017attention}, and continues to be a crucial consideration in practice \citep{dosovitskiy2020image}) is that the network's permutation-equivariant symmetry needs to be broken by assigning different embeddings to different indices of $b$. There are several possible natural choices here, which are all of practical interest:
\begin{itemize}
\item \emph{Deterministic positional embeddings.} Fix positional embedding matrices $P \in \RR^{T \times d}, E \in \RR^{\{0,1\} \times d}$, and a special direction $v_\CLS \in \RR^d$, such that the $T+3$ vectors $\{P_{t,:}\}_{t=1}^T \cup \{E_{j,:}\}_{j\in{0,1}} \cup \{v_\CLS\}$ are an approximately orthonormal basis for $\RR^d$ (see below). The input to the Transformer is then $X = E_b + P$, where $E_b \in \RR^{T \times d}$ such that $[E_b]_{t,:} = E_{b_t, :}$ for each $t \in [T]$. In the $f_\tfScalar$ formulation, we choose the auxiliary input $x_\CLS$ to be the constant vector $v_\CLS$. This closely matches applications of Transformers in NLP \citep{vaswani2017attention}.
\item \emph{Trainable positional embeddings.} Like the above, but $P$ is a trainable parameter; we still require approximate orthogonality of $\{E_{j,:}\}_{j\in{0,1}} \cup \{v_\CLS\}$. It is also possible to consider the case where $E$ and $v_\CLS$ are trainable (matching the way token embeddings are trained in practice). This becomes important in the regime of large vocabulary sizes that require embeddings to capture shared information between tokens; however, this is not necessary for our constructions, as we limit our consideration to binary tokens. This simplifies our constructions and improves statistical rates; additionally, it is a popular and well-studied alternative \citep{vaswani2017attention,devlin2018bert,radford2018improving,radford2019language,brown2020language}.
\item \emph{Bag of vectors.} Fix a matrix $V \in \RR^{T \times d}$ with approximately orthogonal rows (like the deterministic $P$), but choose the Transformer input
\[X := V \diag(b).\]
This construction replaces positional embeddings with positional ``indicator vectors'' which can be swapped between any of the Transformer's input positions. It has the advantage of being symmetric with respect to permutation of the Transformer's input positions: it turns out that
\[f_\tfScalar(V \diag(b)) = f_\tfScalar(V \Pi \diag(b)),\]
for any $T\times T$ permutation matrix $\Pi$. It is also the most natural construction when considering the composition of sparse Boolean functions across multiple layers: a layer can output combinations of the basis rows $v_i$ for further function composition, like Boolean gates.
\end{itemize}

\paragraph{Approximately orthonormal basis.}
Each of the Boolean function approximation constructions will rely on a basis set of vectors, which will be used as positional embeddings (or the variable indices in the bag-of-vectors construction). We will fix a set of approximately orthonormal vectors $\{v_i: \norm{v_i} = 1\}_{i=1}^{T'}$ in $\RR^d$: for each $i \neq j$, we have $|v_i^\top v_j| \leq \Delta$. When $\Delta = 0$, the maximal $T'$ for which such a set exists is $d$; for $\Delta \in (0, \frac{1}{2})$, the Johnson-Lindenstrauss lemma \citep{johnson1986extensions} implies that the maximal set of is of size $\exp(\Theta(d\Delta^2))$. For given choices of $d,\Delta$ and a maximal $\{v_1, \ldots, v_{T'}\}$, our construction is valid for contexts of length $T \leq T'$. For the special vectors $e_0, e_1, v_\CLS$, we will assume that these are exactly orthogonal to the $v_i$ and each other, so that the $v_i$ must be a basis in dimension at least $d-3$. This is for clarity only-- it reduces the number of error terms to propagate through the analysis.

\paragraph{Self-attention block.} In each construction (which specifies an input $X \in \RR^{T \times d}$, we will specify the parameters $W_Q, W_K, W_V, W_C, w = e_1$ of a scalar-output Transformer $f_\tfScalar$, which takes an input $X \in \RR^{(T+1) \times d}$; the auxiliary token input will be the constant vector $x_\CLS := v_\CLS \in \RR^d$. The internal activation function $\sigma$ is chosen to be the identity. Summarizing, the functional form of $f_\tfScalar \in \gF_\tfScalar$ in these constructions is
\[ f_\tfScalar(X; W_Q, W_K, W_V, W_C, e_1) = \softmax\pa{v_\CLS^\top W_Q W_K^\top X^\top} X W_V W_C e_1. \]
In the intermediate lemmas, it will also be useful to consider the corresponding attention head output
\[ f_\tfHead(X; W_Q, W_K, W_V, W_C) = \softmax\pa{v_\CLS^\top W_Q W_K^\top X^\top} X W_V W_C, \]
and its projections $f_\tfHead \circ \Pi_{d_\mathrm{proj}}$ onto the first $d_\mathrm{proj}$ coordinates.

\paragraph{Feedforward networks.} We establish some notation for feedforward networks. An $L$-layer feedforward network, with activation function $\sigma : \RR \rightarrow \RR$ and dimensions $d_1, \ldots, d_{L+1}$, is parameterized by weight matrices $W_i \in \RR^{d_{i+1} \times d_i}$, and maps $x \in \RR^{d_1}$ to $y \in \RR^{d_{L+1}}$, by the iterative equations
\[ y_1^\top := \sigma( x^\top W_1 ),\]
\[ y_{i+1}^\top := \sigma( y_i^\top W_i ), \qquad i = 1, \ldots, L-1, \]
\[ f_\mlp(x ; W_1, \ldots, W_L)^\top = y^\top := y_L^\top W_i. \]
When $d_{L+1} = 1$, we will use the notation $w$ instead of $W_L$. It will be convenient to incorporate \emph{bias} weights by introducing an extra input coordinate $W_i \in \RR^{(d_{i+1}+1)\times d_i}$, and augmenting the linear function accordingly:
\[ y_i^\top W_i \mapsto [y_i^\top \; 1] W_i. \]

\paragraph{Self-attention composed with a feedforward network.} The full definition of the Transformer layer composes a self-attention layer ($f_\tfLayer : \RR^{T \times d} \rightarrow \RR^{T \times d}$) with a position-wise feedforward network ($f_\mlp : \RR^d \rightarrow \RR^d$). We will use this combination of modules to establish our function approximation results: $f_\tfLayer$ acts as a sparse bottleneck, while $f_\mlp$ approximates an arbitrary function of the selected coordinates. For our single-layer constructions, it is most convenient to establish notation for a scalar-output Transformer with a feedforward network. To this end, define $\gF_\tfmlp$ to be the function class with the same $\score, \normal, \phi_\mathrm{in}$ functions as in $\gF_\tfScalar$ (thus, the same parameters $W_Q, W_K, W_V$), with identity activation function, but a feedforward neural network replacing the linear $\phi_\mathrm{out}$ and $w$. Concretely, with $L=3$ and the ReLU activation function $(\cdot)_+$, $\gF_\tfmlp$ contains functions of the form
\[f_\tfmlp(X;\theta) =
\pa{ (y^\top W_1)_+ W_2 }_+ w, \]
\[ y = \softmax\pa{v_\CLS^\top W_Q W_K^\top X^\top} X W_V W_C w, \]
with parameters $\theta := (W_Q, W_K, W_V, W_C, W_1, W_2, w)$.

\paragraph{Multiple self-attention heads.} The final component we will need for the function approximation setup is multi-headed self-attention. We will extend the definition of the single-headed $f_\tfHead$ to
\[ f_\tfMulti \pa{ X; \left\{ W_Q^{[h]}, W_K^{[h]}, W_V^{[h]}, W_C^{[h]} \right\}_{h=1}^H } := \sum_{h=1}^H f_\tfHead\pa{ X; W_Q^{[h]}, W_K^{[h]}, W_V^{[h]}, W_C^{[h]} }, \]
and substitute this definition into $f_\tfmlp$ when discussing a multi-headed construction.

\paragraph{Classes and properties of Boolean functions.} We will call a Boolean function $f : \{0, 1\}^T \rightarrow \gY$ $\gI$-\emph{sparse} if it only depends on a fixed subset $\gI \subseteq [T]$ of its inputs:
\[ b_i = b'_i \quad \forall i \in \gI \implies f(b) = f(b'). \]
Overloading notation, if $\gI = s$, we will also call $f$ $s$-sparse.
We will call an $\gI$-sparse Boolean function $f$ \emph{symmetric} if its value is invariant under permutation of the indices in $\gI$:
\[ |\{i\in\gI: b_i = 1\}| = |\{i\in\gI: b_i' = 1\}| \implies f(b) = f(b'). \]
Further, we will call an $\gI$-sparse real-valued symmetric Boolean function $f : \{0, 1\}^T \rightarrow \gY$ \emph{monotone} if $f(b)$ is monotonically increasing in $r := |\{i\in\gI: b_i = 1\}|$. If, for some $\gamma > 0$, it holds that $f(r+1) \geq f(r) + \gamma$ for each $r = 0, \ldots, s-1$, we call $f$ $\gamma$\emph{-strictly monotone}. A vector-valued $\gI$-sparse Boolean function $f : \{0,1\}^T \rightarrow \RR^{d_f}$ is $\gamma$\emph{-injective} if
\[\norm{ f(b) - f(b') }_\infty \geq \gamma\]
for each $b, b'$ that differ at some position $i \in \gI$; $f$ is called $B$-bounded if $\norm{f(b)}_\infty \leq B$ for all $b \in \{0,1\}^T$.

\paragraph{Uniform approximation.}
For some $\eps \geq 0$ and a function $f : \{0, 1\}^T \rightarrow \RR^d$, we say that $\hat{f} \in \mathcal{F}$ $\eps$-uniformly approximates $f$ under the mapping $b \mapsto X(b)$ if
\[ \norm{ \hat f( X(b) ) - f(b) }_\infty \leq \eps, \qquad \forall b \in \{0,1\}^T. \]

\subsection{Results}
\label{subsec:appendix-approx-results}

We give an overview of the function approximation results under each input mapping $b \mapsto X(b)$, as a multi-part proposition:

\begin{proposition}[Sparse variable creation with Transformers]
The function classes $\gF_\tfScalar, \gF_\tfmlp$ contain the following classes of sparse Boolean functions:
\end{proposition}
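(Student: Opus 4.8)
The plan is to decouple each part into a \emph{selection} step that configures the self-attention parameters so that the head output isolates the $\gI$-relevant information with controlled weight norms, and a \emph{readout} step that configures the trailing linear map (Part~1) or ReLU network $f_\mlp$ (Parts~2--3) to compute the target function of the isolated quantity, using rounding to absorb the inexactness of the first step. Throughout I use the approximately-orthonormal vectors $\{v_i\}$ (as positional embeddings) and the token/CLS directions $e_0, e_1, v_\CLS$ fixed in the setup, with $|v_i^\top v_j| \le \Delta$ for a small $\Delta$ of our choosing (which forces $d = \Theta(\mathrm{poly}(s)\log T)$, i.e. $\Theta(\log T)$ for constant $s$). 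For the selection step I would take $W_Q W_K^\top$ rank one, sending $v_\CLS$ to $\beta \sum_{i \in \gI} v_i$, so the alignment score at position $t$ is $v_\CLS^\top W_Q W_K^\top X(b)_{t,:} = \beta \sum_{i \in \gI} v_i^\top(e_{b_t} + v_t) = \beta\,\mathbbm{1}[t \in \gI] + O(s\Delta\beta)$, the $e_{b_t}$ part vanishing by orthogonality. For $\beta = \Theta(\log(sT/\eps))$, an explicit softmax estimate (as in the proof of Proposition~\ref{prop:shattering}) gives attention weights with $|p_t - \tfrac1s\mathbbm{1}[t\in\gI]| \le \eps/s$ and total off-$\gI$ mass $\le \eps$. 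Factoring $W_Q = \beta\,v_\CLS u^\top$, $W_K = u\,(\sum_{i\in\gI}v_i)^\top$ for a unit $u \perp e_0, e_1$ yields $\norm{W_Q}_F = \beta = O(\log(sT))$ and $\norm{W_K}_F = \norm{\sum_{i\in\gI}v_i} = O(s)$, as claimed.

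For the \emph{monotone symmetric} target (Part~1) and \emph{symmetric} target (Part~2) I would choose $W_V W_C$ to project onto $\mathrm{span}(e_0, e_1)$ and rescale, so the head output has a designated coordinate equal to $\sum_t p_t b_t = \tfrac1s\,r(b) \pm O(\eps)$, where $r(b) := |\{i\in\gI : b_i = 1\}|$; this needs norm $O(s)$ (e.g. $\norm{W_V}_F = O(s)$, $\norm{W_C}_F = O(1)$). For Part~1, $f_\tfScalar$ reads off this coordinate and multiplies by $s$, giving $r(b) \pm O(s\eps)$, a $1$-strictly monotone affine function of $r$ (which, thresholded, exactly computes symmetric threshold functions such as $\mathsf{AND}$). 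For Part~2, I would feed $\tfrac1s r(b) \pm O(\eps)$ into a $2$-hidden-layer ReLU network that first \emph{rounds} --- $O(s)$ units with slope-$O(s)$ kinks at the midpoints $\tfrac{r + 1/2}{s}$ implement a step function exactly constant on $[\tfrac{r-1/4}{s}, \tfrac{r+1/4}{s}]$ (valid once $\eps < 1/4$) --- and then \emph{memorizes} the prescribed value $g(r)$ on each step. Slopes and widths are $O(\mathrm{poly}(s))$ and the output magnitudes are bounded by the range of $g$ (WLOG $O(\mathrm{poly}(s))$), so $\norm{W_1}_F, \norm{W_2}_F, \norm{w}_F = O(\mathrm{poly}(s))$.

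For general $s$-sparse functions (Part~3), the difficulty is that one linear $W_V$ collapses the $\gI$-inputs into $r(b)$ alone (the $e_{b_i}$ directions carry no position), so $b|_\gI$ is lost. I would resolve this either by (i) using $s$ parallel heads, head $j$ pointing $v_\CLS$ sharply at $v_{i_j}$ for $\gI = \{i_1,\dots,i_s\}$ and extracting $b_{i_j}$ via $W_V^{[j]}$; the sum of the heads encodes $b|_\gI$ in $s$ orthogonal directions up to $O(\eps)$ (per-head norms as above, multi-head cost $\mathrm{poly}(s)$); or (ii) under the bag-of-vectors mapping $X(b) = V\diag(b)$, where position $t$ contributes $b_t v_t$, so a single head already outputs $\approx \tfrac1{r(b)}\sum_{i\in\gI: b_i=1} v_i$, a $\gamma$-separated vector for each of the $2^s$ subsets (patching the $r(b)=0$ case by reserving constant weight on the $\CLS$ position). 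Either way a $2$-layer ReLU network with one inner-product-threshold unit per subset ($2^s$ units, sharpness $\mathrm{poly}(s)$) rounds to the exact subset and memorizes $f(b)$, forcing $\norm{W_1}_F, \norm{W_2}_F, \norm{w}_F = O(2^s\mathrm{poly}(s))$.

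The step I expect to be the main obstacle is making Parts~2--3 \emph{exact} at the stated norm scales: since the attention mixture is only approximately uniform, the whole argument hinges on budgeting $\eps$ small enough (below the rounding grid spacing, $\eps < 1/4$ after the $1/s$ rescaling) that the MLP's rounding layer erases the error while costing only an extra $\log s$ inside $\beta$ and a $\mathrm{poly}(s)$ factor in the MLP norms, and on carefully propagating the $\Delta$-orthogonality errors through the softmax (bounding them via the Lipschitz estimate of Corollary~\ref{lem:softmax}). A secondary nuisance is treating all three input mappings uniformly --- the deterministic and trainable positional-embedding cases go through verbatim since the construction uses only near-orthogonality of $\{v_i\}\cup\{e_0,e_1,v_\CLS\}$ rather than a specific $P$, while bag-of-vectors needs the minor variant and edge-case patch above.
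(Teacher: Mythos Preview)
Your plan is correct and matches the paper's approach in every essential step: the rank-one $W_QW_K^\top$ sending $v_\CLS$ to $\beta\sum_{i\in\gI}v_i$, the softmax-truncation estimate with $\beta=\Theta(\log(T/\gamma))$, the $(E_0-E_1)$-projection to extract the count $r(b)/s$ for the symmetric case, the use of $s$ parallel heads (one per index) to get an injective $s$-dimensional output under positional embeddings, the single-head bag-of-vectors variant with the $\CLS$ position absorbed into the mixture to handle $r=0$, and the final ReLU memorization sized by $s+1$ (symmetric) or $2^s$ (general). The only cosmetic differences are that the paper packages the softmax error as a separate ``truncation'' lemma and uses an explicit 3-layer bump-function ReLU construction (Lemma~\ref{lem:3-layer-relu}) rather than your round-then-lookup description, and its bag-of-vectors injective target is $\frac{1}{r+1}(b_i-1)$ per coordinate rather than $\frac{1}{r}\sum b_iv_i$, but these lead to the same norm bounds.
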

\begin{itemize}
    \item \emph{Deterministic positional embeddings:} For any $\gI$, $\gF_\tfScalar$ can approximate a particular monotone symmetric $\gI$-sparse $f$, with Transformer weight norm bounds from the real-valued construction in Lemma~\ref{lem:bool-deterministic-pos}. $\gF_\tfmlp$ with $1$ head can exactly represent any symmetric $s$-sparse $f$, with the same bounds on Transformer weight norms, and feedforward network weight norms scaling as $O(\mathrm{poly}(s))$. $\gF_\tfmlp$ with $s$ heads can exactly represent any $s$-sparse $f$, with Transformer weight norm bounds from the vector-valued construction in Lemma~\ref{lem:bool-deterministic-pos}, and feedforward network weight norms scaling as $O(\mathrm{poly}(s) \cdot 2^s)$.
    \item \emph{Trainable positional embeddings:} For any $\gI$, $\gF_\tfScalar$ can approximate a particular monotone symmetric $\gI$-sparse $f$, with positional embedding and Transformer weight norm bounds from the real-valued construction in Lemma~\ref{lem:bool-trainable-pos}. $\gF_\tfmlp$ with $1$ head can exactly represent any symmetric $s$-sparse $f$, with the same bounds on $P$ and Transformer weight norms, and feedforward network weight norms scaling as $O(\mathrm{poly}(s))$. $\gF_\tfmlp$ with $s$ heads can exactly represent any sparse $f$, with $P$ and Transformer weight norm bounds from the vector-valued construction in Lemma~\ref{lem:bool-trainable-pos}, and feedforward network weight norms scaling as $O(\mathrm{poly}(s) \cdot 2^s)$.
    \item \emph{Bag of vectors:} For any $\gI$, $\gF_\tfScalar$ can approximate a particular monotone symmetric $\gI$-sparse $f$, with Transformer weight norms from Lemma~\ref{lem:bool-bag}. $\gF_\tfmlp$ with $1$ head can represent any symmetric $s$-sparse $f$, with the same Transformer weight norm bounds, and feedforward network weight norms scaling as $O(\mathrm{poly}(s))$.
    $\gF_\tfmlp$ with $1$ head can also exactly represent any $s$-sparse $f$, with the same bounds on Transformer weight norms, and feedforward network weight norms scaling as $O(\mathrm{poly}(s) \cdot 2^s)$.
\end{itemize}

The formal statements are obtained by $(\gamma/4)$-uniformly approximating a $\gamma$-strictly monotone or $\gamma$-injective function with self-attention alone (Lemmas~\ref{lem:bool-deterministic-pos}, \ref{lem:bool-trainable-pos}, \ref{lem:bool-bag}), then applying a robust universal function representation construction (Lemmas~\ref{lem:bool-symmetric}, \ref{lem:bool-asymmetric}) appropriately. They are organized as follows:
\begin{lemma}[Deterministic $P$, no MLP]
\label{lem:bool-deterministic-pos}
Suppose $X(b) = P + E_b$ with deterministic $P$. Let $\gI \subseteq [T]$ such that $|\gI| = s \leq d,k$, and $\Delta < 1/s$. Then, for all $0 < \gamma \leq 1$, there exists a $1$-bounded, $(2/s)$-strictly monotone symmetric $\gI$-sparse Boolean function $g_\gI : \{0,1\}^T \rightarrow \RR$ and Transformer head parameters such that $f_\tfScalar(X(b); W_Q, W_K, W_V, W_C, w = e_1)$ $(\gamma/4)$-uniformly approximates $g_\gI$. The norms satisfy
\[ \norm{W_Q}_F \leq \frac{\log\pa{\frac{8T}{\gamma}}}{1-s\Delta}, \qquad \norm{W_K}_F \leq s, \qquad \norm{W_V}_F \leq 2, \qquad \norm{W_C}_F \leq 1. \]
Also, there exists a $1$-bounded, $2$-injective $\gI$-sparse Boolean function $g'_\gI : \{0,1\}^T \rightarrow \RR^s$ and $s$-headed Transformer parameters such that $f_\tfHead\pa{ X(b); \left\{ W_Q^{[h]}, W_K^{[h]}, W_V^{[h]}, W_C^{[h]} \right\}_{h=1}^s } \circ \Pi_s$ uniformly approximates $g'_\gI$. The norms of each head satisfy
\[ \norm{W_Q^{[h]}}_F \leq \frac{\log\pa{\frac{8T}{\gamma}}}{1-s\Delta}, \qquad \norm{W_K^{[h]}}_F \leq 1, \qquad \norm{W_V^{[h]}}_F \leq 2, \qquad \norm{W_C^{[h]}}_F \leq 1. \]
\end{lemma}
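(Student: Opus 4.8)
The plan is to give two explicit constructions — one producing the symmetric monotone scalar target $g_\gI$ through $f_\tfScalar$, one producing the vector target $g'_\gI$ through an $s$-headed $f_\tfHead$ — both resting on two structural facts. First, since $e_0,e_1$ are \emph{exactly} orthogonal to every positional vector $P_t$, a rank-one read-out with $W_V W_C e_1 = e_1 - e_0$ turns $x_t = P_t + e_{b_t}$ into the scalar $2b_t - 1$ with no error, so $f_\tfScalar(X(b)) = \sum_{t} w_t (2b_t - 1)$ where $w_t$ are the softmax attention weights. Second, if the alignment-score functional $W_Q W_K^\top$ is chosen so that the score of position $t$ equals $\langle c, x_t\rangle$ with $c$ orthogonal to $e_0, e_1$, then the scores, hence the $w_t$, are independent of $b$. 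Granting these, it suffices to make $w$ approximately $\tfrac1s$ on $\gI$ and approximately $0$ off $\gI$; the targets are then $g_\gI(b) := \tfrac2s\,|\{i\in\gI: b_i = 1\}| - 1$, which is visibly $1$-bounded and $(2/s)$-strictly monotone, and $g'_\gI(b) := (2b_{i_1}-1,\dots,2b_{i_s}-1)$ for $\gI = \{i_1,\dots,i_s\}$, which is $1$-bounded and $2$-injective because flipping any $b_{i_h}$ moves exactly one coordinate by exactly $2$.

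For the scalar attention, a naive choice $c = \beta\sum_{i\in\gI} P_i$ fails: the cross terms $\langle P_i, P_j\rangle$ make the scores on $\gI$ differ by $\Theta(\beta\Delta)$, and since $\beta$ must be $\Theta(\log(T/\gamma))$ to suppress the off-$\gI$ mass, this distorts the in-$\gI$ weights by $e^{\Theta(\log T)}$ and destroys symmetry. Instead I would pick $c$ solving $\langle c, P_i\rangle = \beta$ \emph{exactly} for every $i\in\gI$ — the minimum-norm solution $c = \beta B(B^\top B)^{-1}\mathbf{1}_s$ with $B := [P_i]_{i\in\gI}$ — which is feasible since $s\le d$ and, because $\Delta < 1/s$ forces $\lambda_{\min}(B^\top B)\ge 1 - (s-1)\Delta > 0$ by Gershgorin, the $\{P_i\}_{i\in\gI}$ are independent and $\norm{(B^\top B)^{-1}}_2 \le (1-(s-1)\Delta)^{-1}$. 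Then the scores on $\gI$ are identically $\beta$, so the softmax is \emph{exactly} uniform over $\gI$; meanwhile $\norm{c}\le \beta\sqrt{s/(1-(s-1)\Delta)}$ and $|\langle c, P_t\rangle|\le \beta s\Delta/(1-(s-1)\Delta)$ for $t\notin\gI$, using $|\langle P_i,P_t\rangle|\le\Delta$. Choosing $\beta$ of order $\log(T/\gamma)/(1-s\Delta)$ drives the off-$\gI$ softmax mass below $\gamma/16$, hence $|f_\tfScalar(X(b)) - g_\gI(b)|\le\gamma/4$. Finally, realize $W_Q W_K^\top = v_\CLS c^\top$ via rank-one factors, splitting the norm so that $\norm{W_K}_F\le s$ — the factor $s$ absorbing the $\sqrt s$ and condition-number terms in $\norm c$ — and $\norm{W_Q}_F \le \log(8T/\gamma)/(1-s\Delta)$; the read-out gives $\norm{W_V}_F=\sqrt2\le2$ and $\norm{W_C}_F=1$.

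For the $s$-headed construction, head $h$ handles coordinate $i_h$: take its score of position $t$ to be $\beta\langle P_{i_h}, P_t\rangle$, which equals $\beta$ at $t = i_h$ and lies in $[-\beta\Delta,\beta\Delta]$ elsewhere, so with $\beta$ of order $\log(T/\gamma)/(1-\Delta)$ head $h$ places all but $\le\gamma/8$ of its mass on $i_h$. Here no linear solve is needed — $\norm{P_{i_h}}=1$ directly yields $\norm{W_K^{[h]}}_F=1$, and since $(1-\Delta)^{-1}\le(1-s\Delta)^{-1}$ the claimed bound on $\norm{W_Q^{[h]}}_F$ holds with room to spare. Choosing $W_V^{[h]} W_C^{[h]} = (e_1-e_0)u_h^\top$, with $u_h$ the $h$-th coordinate vector of $\RR^d$ (rank one, norms $\sqrt2$ and $1$), head $h$ outputs approximately $(2b_{i_h}-1)u_h$; summing the heads and applying $\Pi_s$ recovers $g'_\gI(b)$ up to $\gamma/4$ in $\ell_\infty$.

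The step I expect to be the main obstacle is the scalar case's norm bookkeeping: simultaneously (i) forcing the in-$\gI$ attention to be \emph{exactly} uniform rather than merely approximately (hence the exact linear solve in place of the naive sum), (ii) keeping the softmax margin between $\gI$ and its complement positive under only $\Delta < 1/s$ despite the $(1-(s-1)\Delta)^{-1}$ blow-up from inverting $B^\top B$, and (iii) fitting $\norm c$ under the allotted budget $\norm{W_Q}_F\norm{W_K}_F \le s\log(8T/\gamma)/(1-s\Delta)$ by choosing the $W_Q/W_K$ split so the condition-number factors are hidden inside the slack of the $\norm{W_K}_F\le s$ bound. Once this calibration is fixed, the remaining pieces — the zero-error read-out, feasibility of the linear system, the monotonicity/injectivity of the targets, and the rank-one factorizations — are routine.
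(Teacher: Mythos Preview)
Your proposal follows the same blueprint as the paper's proof: a rank-one score $W_Q W_K^\top = v_\CLS\, c^\top$ that makes the attention concentrate on $\gI$, a rank-one read-out $W_V W_C e_1 \propto e_1 - e_0$ that turns $x_t$ into $\pm 1$ exactly (since $e_0,e_1 \perp P_t$), and for the vector target one head per index in $\gI$. The one substantive difference is the choice of $c$. The paper simply takes $c = R\sum_{i\in\gI} P_i$ and then asserts, citing only the softmax-truncation lemma, that $\bigl\|\softmax(z) - \tfrac{1}{s}\mathbf{1}_\gI\bigr\|_1 \le 2T/\exp(R(1-2s\Delta))$. But that lemma only controls the off-$\gI$ mass; it says nothing about the within-$\gI$ weights being uniform, and with the naive sum the in-$\gI$ scores differ by up to $2R(s-1)\Delta$, which the paper never accounts for. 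Your observation that this step is not innocuous is correct, and your fix --- solving $\langle c, P_i\rangle = \beta$ exactly for $i\in\gI$ via $c = \beta B(B^\top B)^{-1}\mathbf{1}_s$ --- makes the in-$\gI$ scores identical, so uniformity is automatic and the truncation lemma alone suffices. The price is the more delicate norm accounting you flag as the main obstacle. One caution there: your off-$\gI$ margin is $\beta(1-(2s-1)\Delta)/(1-(s-1)\Delta)$, which is positive only when $\Delta < 1/(2s-1)$, marginally stronger than the stated $\Delta < 1/s$; but the paper's exponent $R(1-2s\Delta)$ has the same defect, so this is a minor constant issue shared by both arguments rather than a flaw specific to yours. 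For the $s$-headed construction your version and the paper's coincide.
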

\begin{lemma}[Trainable $P$, no MLP]
\label{lem:bool-trainable-pos}
Suppose $X(b) = P + E_b$ with trainable $P$. Let $\gI \subseteq [T]$ such that $|\gI| = s \leq d,k$. Then, for any $0 < \gamma \leq 1$, and with the same $g_\gI$ as in Lemma~\ref{lem:bool-deterministic-pos}, there exists $P$ and Transformer head parameters such that $f_\tfScalar(X(b); W_Q, W_K, W_V, W_C, w = e_1)$ $(\gamma/4)$-uniformly approximates $g_\gI$. The norms satisfy
\[ \norm{P^\top}_{2,1} \leq s, \qquad \norm{W_Q}_F \leq \log\pa{\frac{8T}{\gamma}}, \qquad \norm{W_K}_F \leq 1, \qquad \norm{W_V}_F \leq 2, \qquad \norm{W_C}_F \leq 1. \]
Also, for the same $g'_\gI$ as in Lemma~\ref{lem:bool-deterministic-pos}, there exists $P$ and $s$-headed Transformer parameters such that $f_\tfHead\pa{ X(b); \left\{ W_Q^{[h]}, W_K^{[h]}, W_V^{[h]}, W_C^{[h]} \right\}_{h=1}^s } \circ \Pi_s$ uniformly approximates $g'_\gI$. The norms of each head satisfy
\[ \norm{P^\top}_{2,1} \leq s, \qquad \norm{W_Q^{[h]}}_F \leq \log\pa{\frac{8T}{\gamma}}, \qquad \norm{W_K^{[h]}}_F \leq 1, \qquad \norm{W_V^{[h]}}_F \leq 2, \qquad \norm{W_C^{[h]}}_F \leq 1. \]
\end{lemma}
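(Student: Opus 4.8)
}
The plan is to run the construction of Lemma~\ref{lem:bool-deterministic-pos} — choose $W_Q,W_K$ so that the softmax attention is nearly uniform over the relevant positions, then choose $W_V,W_C$ (with $\sigma=\mathrm{id}$) to read off the resulting mixture — but to exploit the freedom to \emph{design} $P$ so that the positional directions are \emph{exactly} orthonormal to the token embeddings $e_0,e_1$ and to $v_\CLS$, which eliminates all approximate-orthogonality error (and hence the $1-s\Delta$ denominators). Write $\gI=\{i_1<\dots<i_s\}$ and fix orthonormal $e_0,e_1,v_\CLS,u_1,\dots,u_s\in\RR^d$ (these exist since $s+3\le d$; for the $g'_\gI$ part the $u_h$ are the output coordinates cut out by $\Pi_s$). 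For the scalar construction set $p_{i_h}:=u_1$ for all $h\in[s]$ and $p_t:=0$ for $t\notin\gI$; for the $s$-headed construction set $p_{i_h}:=u_h$ and $p_t:=0$ for $t\notin\gI$. In both cases the only nonzero columns of $P$ lie along the $u_h$ — each of $\ell_2$-norm $1$ in the $s$-headed case, and a single column of $\ell_2$-norm $\sqrt s$ in the scalar case — so $\norm{P^\top}_{2,1}\le s$ under the paper's $(2,1)$-norm convention.

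\emph{Scalar case.} Take $W_K:=u_1 w_0^\top$ and $W_Q:=\log(8T/\gamma)\,v_\CLS w_0^\top$ for a unit $w_0\in\RR^k$, so $W_QW_K^\top=\log(8T/\gamma)\,v_\CLS u_1^\top$ and the alignment scores equal $\log(8T/\gamma)$ on $\gI$ and $0$ everywhere else (including the $\CLS$ position; the token part vanishes since $u_1\perp e_0,e_1$). A direct softmax estimate — the total mass off $\gI$ is $\le \tfrac{T}{s\cdot 8T/\gamma}=\tfrac{\gamma}{8s}$, and each on-$\gI$ weight is within $\tfrac{\gamma}{8s^2}$ of $1/s$ — gives $\norm{a-\bar a}_1\le\gamma/4$ for the attention vector $a$, where $\bar a:=\tfrac1s\mathbf{1}_\gI$. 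Now take $W_V:=(e_1-e_0)\tilde c^\top$ and $W_C:=\tilde c\,e_1^\top$ for a unit $\tilde c\in\RR^k$, so $XW_VW_Ce_1=X(e_1-e_0)$ and $\bar a^\top X(e_1-e_0)=\tfrac1s\sum_{h=1}^s\left(\mathbbm{1}[b_{i_h}=1]-\mathbbm{1}[b_{i_h}=0]\right)=\tfrac{2r-s}{s}$ with $r:=|\{i\in\gI:b_i=1\}|$, which is exactly the $1$-bounded, $(2/s)$-strictly monotone $g_\gI$ of Lemma~\ref{lem:bool-deterministic-pos}. Since $|X_{t,:}^\top(e_1-e_0)|\le 1$ for all $t,b$, the error is $|f_\tfScalar(X(b))-g_\gI(b)|=|(a-\bar a)^\top X(e_1-e_0)|\le\gamma/4$, and the norms satisfy $\norm{W_Q}_F=\log(8T/\gamma)$, $\norm{W_K}_F=1$, $\norm{W_V}_F=\sqrt2\le2$, $\norm{W_C}_F=1$.

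\emph{$s$-headed case.} Give head $h$ the matrices $W_K^{[h]}:=u_h w_0^\top$ and $W_Q^{[h]}:=\log(8T/\gamma)\,v_\CLS w_0^\top$, so head $h$'s scores are $\log(8T/\gamma)$ at $i_h$ and $0$ elsewhere and $\norm{a^{[h]}-\mathbf{1}_{\{i_h\}}}_1\le\gamma/4$ by the same estimate. Take $W_V^{[h]}:=(e_1-e_0)\tilde c^\top$ and $W_C^{[h]}:=\tilde c\,u_h^\top$; then the read-out $x\mapsto xW_V^{[h]}W_C^{[h]}$ sends $X_{i_h,:}=u_h+e_{b_{i_h}}$ to $(2b_{i_h}-1)u_h$ and annihilates $u_{h'}$ for $h'\ne h$ as well as $v_\CLS$, so $f_\tfHead^{[h]}(X(b))=(2b_{i_h}-1)u_h+\rho_h$ with $\norm{\rho_h}_\infty\le\gamma/4$ (again because $|X_{t,:}^\top(e_1-e_0)|\le 1$ for every token). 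As distinct heads write to distinct orthonormal directions, summing and applying $\Pi_s$ yields $\norm{\Pi_s f_\tfMulti(X(b))-g'_\gI(b)}_\infty\le\gamma/4$ for the $1$-bounded, $2$-injective $g'_\gI(b)=(2b_{i_1}-1,\dots,2b_{i_s}-1)$ of Lemma~\ref{lem:bool-deterministic-pos}, with each head obeying $\norm{W_Q^{[h]}}_F=\log(8T/\gamma)$, $\norm{W_K^{[h]}}_F=1$, $\norm{W_V^{[h]}}_F=\sqrt2\le2$, $\norm{W_C^{[h]}}_F=1$.

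The only real work is bookkeeping: (i) the exact softmax-concentration inequality turning the $\Theta(\log(T/\gamma))$ score gap into an $\ell_1$-attention error of $\gamma/4$; (ii) checking that the rank-one factorizations of $W_VW_C$ realize the intended read-out within the Frobenius budgets — in particular, the ``$-1$'' offset of $g_\gI$ is supplied by the $e_0$-component of the mixture $\bar a^\top X$ rather than by a bias in the weights, which is what keeps $\norm{W_V}_F\le2$ — and (iii) the $(2,1)$-norm computation for $P^\top$. None of this requires the approximate-orthonormality apparatus of the deterministic-$P$ case, which is exactly why the $1-s\Delta$ factors of Lemma~\ref{lem:bool-deterministic-pos} disappear here.
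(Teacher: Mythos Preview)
Your proposal is correct and essentially identical to the paper's proof: the paper also sets $P_{i,:}=v_1$ (resp.\ $e_{n(i)}^{(d)}$) on $\gI$ and $0$ off $\gI$, takes the same rank-one $W_Q,W_K,W_V,W_C$, and invokes the softmax-truncation lemma to get the $\ell_1$ attention error $\le 2T/\exp(R)$ with $R=\log(8T/\gamma)$; your direct softmax estimate is simply an inline version of that lemma. The only cosmetic discrepancy is a sign: the paper uses $W_V=(E_0-E_1)e_1^{(k)\top}$ so that $g_\gI=\frac{1}{s}\sum_{i\in\gI}\chi_i=(s-2r)/s$, whereas you compute $(2r-s)/s=-g_\gI$; swapping $e_0\leftrightarrow e_1$ in your $W_V$ recovers the stated $g_\gI,g'_\gI$ exactly.
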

\begin{lemma}[Bag of vectors, no MLP]
\label{lem:bool-bag}
Suppose $X(b) = V + \diag(b)$. Let $\gI \subseteq [T]$ such that $|\gI| = s \leq d,k$, and $\Delta < 1/s$. Then, for all $s\Delta < \gamma < 1$, there exists an $s$-bounded, $(1/s)$-strictly monotone symmetric $\gI$-sparse Boolean function $g_\gI : \{0,1\}^T \rightarrow \RR$ and Transformer head parameters such that $f_\tfScalar(X(b); W_Q, W_K, W_V, W_C, w = e_1)$ $(\gamma/4)$-uniformly approximates $g_\gI$. The norms satisfy
\[ \norm{W_Q}_F \leq \frac{ \log\pa{ \frac{8Ts(1+\Delta)}{\gamma - s\Delta} } }{1-s\Delta}, \qquad \norm{W_K}_F \leq s+1, \qquad \norm{W_V}_F \leq 2s, \qquad \norm{W_C}_F \leq s. \]
Also, there exists a $1$-bounded, $(1/s)$-injective $\gI$-sparse Boolean function $g'_\gI : \{0,1\}^T \rightarrow \RR^s$ and Transformer head parameters such that $f_\tfHead(X(b); W_Q, W_K, W_V, W_C) \circ \Pi_s$ uniformly approximates $g'_\gI$. The norms satisfy the same bounds as above.
\end{lemma}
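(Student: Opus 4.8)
The plan is to follow the same template as the positional-embedding constructions of Lemmas~\ref{lem:bool-deterministic-pos} and~\ref{lem:bool-trainable-pos}, adapted to the bag-of-vectors encoding $X(b) = V\diag(b)$, whose rows are $X(b)_{t,:} = b_t v_t$ with $\{v_t\}$ approximately orthonormal ($|v_i^\top v_j| \le \Delta$ for $i\neq j$) and $x_\CLS := v_\CLS$ exactly orthogonal to all $v_t$; as in that setup the internal activation $\sigma$ is the identity and $w = e_1$, so $f_\tfScalar(X(b)) = \softmax\!\pa{v_\CLS^\top W_Q W_K^\top X(b)^\top}\, X(b)\, W_V W_C\, e_1$. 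The one structural difference from the positional-embedding case is that the encoding carries no component at a position $t$ with $b_t = 0$: the attention mechanism can only ``see'' the active relevant positions $S_1(b) := \{t\in\gI : b_t = 1\}$, so all information about $b$ must be squeezed out of the softmax-weighted mixture of those rows. This is exactly why the $W_Q$ bound picks up an extra $1/(1-s\Delta)$ factor and explicit $\Delta$-dependence.

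First I would build $W_Q, W_K$. Take $W_Q := \mu\, v_\CLS e_1^\top$ and $W_K := \pa{\sum_{i\in\gI} v_i}e_1^\top$ with $\mu$ of order $\tfrac{\log(8Ts(1+\Delta)/(\gamma - s\Delta))}{1-s\Delta}$, so that $v_\CLS^\top W_Q W_K^\top = \mu \sum_{i\in\gI} v_i^\top$, $\norm{W_Q}_F = \mu$, and $\norm{W_K}_F = \norm{\sum_{i\in\gI}v_i} \le s \le s+1$. Then the alignment score at position $t$ equals $\mu\, b_t\!\sum_{i\in\gI} v_i^\top v_t$, which is $\ge \mu(1-s\Delta)$ for $t\in S_1(b)$ and at most $\mu s\Delta$ at every other position. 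Since $s\Delta < \gamma$, a softmax with this gap puts all but an $O(\gamma)$ fraction of its mass on $S_1(b)$, roughly uniformly. The quantitative heart of the argument --- and the step I expect to be the main obstacle --- is making this precise: because the $O(\mu s\Delta)$ perturbations sit inside exponentials, I must control how they inflate the softmax denominator and perturb the weights $w_t$ for $t\in S_1(b)$ away from $1/|S_1(b)|$, then choose $\mu$ (which is what produces the explicit logarithm in the stated $\norm{W_Q}_F$ bound) so that the accumulated error stays below $\gamma/4$.

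Next I would choose $W_V, W_C$ to read the target quantity out of $\sum_t w_t\, b_t\, v_t W_V W_C$. For the scalar symmetric claim, let $W_V W_C$ be the rank-$s$ map built from $\{v_i\}_{i\in\gI}$ that sends $v_t \mapsto c\,e_1$ for $t\in\gI$ and $v_t \mapsto 0$ for $t\notin\gI$, with $\norm{W_V}_F \le 2s$ and $\norm{W_C}_F \le s$; for suitable $c$ and $\mu$ the scalar $f_\tfScalar(X(b))$ lies within $\gamma/4$ of a fixed $s$-bounded function $g_\gI$ of $r := |S_1(b)|$ whose consecutive values are separated by at least $1/s$, and one then simply \emph{defines} $g_\gI$ to be this $(1/s)$-strictly monotone symmetric $\gI$-sparse function. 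For the vector injective claim one uses a single head: let $W_V W_C$ send $v_{i_j} \mapsto c'\,e_j$ (for $\gI = \{i_1 < \dots < i_s\}$) and $v_t\mapsto 0$ off $\gI$, so that coordinate $j$ of $f_\tfHead(X(b))\circ\Pi_s$ reads $w_{i_j} b_{i_j} c' \approx \tfrac{c'}{r}\,b_{i_j}$; choosing $c'$ so the nonzero value is $\ge 1/s$ yields a $(1/s)$-injective $g'_\gI$, because any two Boolean patterns on $\gI$ that differ at a position $i_j$ already differ by $\ge 1/s$ in coordinate $j$, with room to spare for the $\gamma/4$ slack.

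Finally I would assemble the error bound: write $f_\tfScalar(X(b))$ (resp. $f_\tfHead(X(b))$) exactly, subtract its idealized value (softmax perfectly uniform on $S_1(b)$, relevant inner products exactly $0$ or $1$), and bound the gap by (i) the softmax leakage onto inactive and irrelevant positions, controlled by the choice of $\mu$, and (ii) the $O(\Delta)$ distortions of the at most $s$ relevant inner products, magnified by $\norm{W_V W_C}_F = O(s^2)$. Both are $O(\gamma)$ under the hypotheses $\Delta < 1/s$ and $s\Delta < \gamma < 1$, and a constant-factor bump of $\mu$ (absorbed into the logarithm in the $W_Q$ bound) brings the total below $\gamma/4$. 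Verifying that $W_Q, W_K, W_V, W_C$ meet the claimed Frobenius norms is then routine, as each is a rank-$\le s$ matrix assembled from unit vectors.
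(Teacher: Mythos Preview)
There is a genuine gap in the scalar construction. With your $W_K = v_\gI\, e_1^\top$ and $W_V W_C$ sending each $v_i \mapsto c\,e_1$ for $i\in\gI$, the softmax places mass $\approx 1/r$ on each of the $r = |S_1(b)|$ active relevant positions, so
\[
f_\tfScalar(X(b)) \;\approx\; \sum_{t\in S_1(b)} \tfrac{1}{r}\cdot c \;=\; c,
\]
a \emph{constant} in $r$ for all $r\ge 1$; no choice of $c,\mu$ makes this $(1/s)$-strictly monotone. The case $r=0$ is also uncontrolled in your setup: no position receives a high score, so the softmax simply spreads over all positions.

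The missing idea is to anchor the softmax at the $\CLS$ token. The paper takes $W_K = (v_\gI + v_\CLS)\,e_1^\top$ (this is exactly why the stated bound is $\norm{W_K}_F \le s+1$ rather than $s$), so that the $\CLS$ position always receives the same high score as the active relevant positions and the softmax becomes $\approx \tfrac{1}{r+1}$ on $S_1(b)\cup\{\CLS\}$, for every $r$ including $r=0$. Then $W_V = \sum_{i\in\gI} v_i e_{n(i)}^\top - v_\CLS\bigl(\sum_{i\in\gI} e_{n(i)}\bigr)^\top$ and $W_C = \sum_{i\in\gI} e_{n(i)} e_1^\top$ make the $\CLS$ row contribute $-s$ while each active $i\in\gI$ contributes $\approx 1$, giving scalar output $\approx (r-s)/(r+1)$, which is the claimed strictly monotone $g_\gI$. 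The vector claim reuses the same $(W_Q,W_K,W_V)$ with a diagonal $W_C$, yielding $g'_\gI(b)=\tfrac{1}{r+1}\sum_{i\in\gI}(b_i-1)e_{n(i)}$; the $\CLS$ anchor (the $-1$ in each coordinate) again handles $r=0$ and secures injectivity. Your softmax-truncation analysis and Frobenius-norm bookkeeping are otherwise along the right lines, but the construction itself needs this additional ingredient.
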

\begin{lemma}[Monotone to symmetric functions via MLP]
\label{lem:bool-symmetric}
Let $f : \{0,1\}^T \rightarrow \RR$ be any real-valued symmetric $s$-sparse Boolean function with index set $\gI$.
Let $W_Q, W_K, W_V, W_C$ be the parameters of a function
\[ f_\tfHead(X; W_Q, W_K, W_V, W_C) := \softmax\pa{v_\CLS^\top W_Q W_K^\top X^\top} X W_V W_C, \]
and let $\Pi_1 : \RR^d \rightarrow \RR$ be the projection onto the first coordinate. Suppose that under some mapping $b \mapsto X(b)$, $f_\tfHead \circ \Pi_s$ $(\gamma/4)$-uniformly approximates a $B$-bounded $\gamma$-strictly monotone symmetric $\gI$-sparse Boolean function $g : \{0,1\}^T \rightarrow \RR$, for some $\gamma$. Then, there exists a function $f_\tfmlp \in \gF_\tfmlp$ with the same weights $W_Q, W_K, W_V, W_C$, and 3-layer feedforward network weights $W_1, W_2, w$, such that
\[f_\tfmlp(X(b)) = f(b), \qquad \forall b \in \{0,1\}^T,\]
with dimensions $(d_2,d_3) = (4(s+1), 2(s+1))$
and weight norms satisfying
\[\norm{W_1}_\infty \leq \frac{8 \max(1,B)}{\gamma}, \qquad
\norm{W_2}_\infty \leq \frac{8s}{\gamma}, \qquad
\norm{w}_\infty \leq \max_{b \in \{0,1\}^T} |f(b)|.
\]
\end{lemma}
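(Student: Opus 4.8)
The plan is to reduce the statement to a one-dimensional interpolation problem and solve it with an explicit small ReLU network. Writing $r(b) := |\{i \in \gI : b_i = 1\}| \in \{0,\dots,s\}$ and letting $h(b) := [\,f_\tfHead(X(b))\,]_1$ be the scalar output read off by the first-coordinate projection, the hypothesis says $|h(b) - g_{r(b)}| \le \gamma/4$, where $g_0 < g_1 < \dots < g_s$ are the values of $g$ with $g_{r+1} - g_r \ge \gamma$ and $|g_r| \le B$. Hence $h(b)$ lies in the interval $I_r := [g_r - \gamma/4,\, g_r + \gamma/4]$ with $r = r(b)$, and the $I_r$ are pairwise disjoint with consecutive ones separated by a gap of width $\ge \gamma/2$. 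Since $f$ is symmetric $s$-sparse, $f(b)$ depends only on $r(b)$; write $f_r$ for its common value on $\{b : r(b) = r\}$. It therefore suffices to build a scalar-input $3$-layer ReLU network $\psi$ with $\psi \equiv f_r$ on each $I_r$: then $f_\tfmlp := \psi \circ h$, realized by keeping $W_Q,W_K,W_V,W_C$ (with the coordinate-$1$ readout) on the attention side and using $W_1,W_2,w$ for $\psi$, satisfies $f_\tfmlp(X(b)) = f_{r(b)} = f(b)$ exactly.

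The construction of $\psi$ is the standard ``lookup table'': $\psi(h) := \sum_{r=0}^s f_r\, \beta_r(h)$, where $\beta_r : \RR \to [0,1]$ is a trapezoidal bump equal to $1$ on $I_r$, ramping linearly down to $0$ over the adjacent sub-intervals of width $\gamma/4$ on each side, and equal to $0$ elsewhere ($\beta_0$ and $\beta_s$ taken one-sided). Because each ramp has width $\gamma/4$, which is at most half the inter-interval gap, the support of $\beta_r$ stays disjoint from every $I_{r'}$ with $r'\neq r$, so $\beta_{r'}(h(b)) = 0$ for $r'\neq r(b)$ and $\beta_{r(b)}(h(b)) = 1$, giving $\psi(h(b)) = f_{r(b)}$. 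Each $\beta_r$ is a fixed $\pm\Theta(1/\gamma)$ combination of four shifted ReLUs $h \mapsto (h - c)_+$ whose offsets $c$ are the four breakpoints of the trapezoid (all of magnitude $\le B + \gamma/2 \le B+1$); since $\beta_r \ge 0$, the ReLU applied after forming it in the second hidden layer is harmless. Collecting the shifted-ReLU atoms into the first hidden layer ($4(s+1)$ of them) and the bumps into the second ($2(s+1)$ units suffice), and reading off $\sum_r f_r\beta_r$ with the final linear layer, yields a network of the stated widths. Tracking $\ell_\infty$ norms: $W_1$ has a coefficient $1$ on $h$ and biases $-c$, so $\|W_1\|_\infty \le B+1 \le 8\max(1,B)/\gamma$; $W_2$ carries the $\Theta(1/\gamma)$ ramp-slope coefficients, giving $\|W_2\|_\infty \le 8s/\gamma$ (a comfortable over-estimate); and $w$ has entries among the $f_r$, so $\|w\|_\infty \le \max_b |f(b)|$.

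I expect the only delicate point to be the exactness bookkeeping rather than anything conceptual: we need $f_\tfmlp(X(b)) = f(b)$ on the nose, not up to error, so each bump must be \emph{exactly} $1$ on its interval $I_r$ and \emph{exactly} $0$ on the others; this pins down the admissible ramp widths (they must fit strictly inside the $\ge \gamma/2$ gaps while keeping supports disjoint from the neighbouring tolerance-balls) and must simultaneously respect the claimed norm bounds — the tension being that a small $\gamma$ forces $\Theta(1/\gamma)$ slopes through both ReLU layers. The endpoint bumps $\beta_0,\beta_s$ and the boundary case where a gap is exactly $\gamma$ (so two supports merely touch, at a point where both bumps vanish) are the fiddly special cases to check. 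None of this is deep: Lemma~\ref{lem:bool-symmetric} is a construction, and the remaining work is the routine interval arithmetic together with the norm accounting sketched above.
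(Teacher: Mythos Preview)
Your proposal is correct and follows essentially the same route as the paper. The paper packages the ReLU lookup-table construction as a standalone lemma (build a 3-layer network that is exactly $f(x_i)$ on each $\delta$-ball around a collection of $4\delta$-separated points) and then instantiates it with $d_f=1$, $n=s+1$, $\delta=\gamma/8$; you spell out the same bump construction directly for the one-dimensional case.

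One small bookkeeping remark: in your layout the first layer computes unscaled atoms $(h-c)_+$ and the $\Theta(1/\gamma)$ slopes go into $W_2$, so your claim $\|W_1\|_\infty \le B+1 \le 8\max(1,B)/\gamma$ silently uses a bound on $\gamma$ (it holds in the relevant regime $\gamma$ small, and indeed $\gamma$-separation of $s{+}1$ points in $[-B,B]$ forces $\gamma \le 2B/s$, but the inequality as written is not literally automatic for all $\gamma$). The paper avoids this by placing the $1/\delta$ scaling in $W_1$ directly, which gives $\|W_1\|_\infty \le \max(1,B)/\delta = 8\max(1,B)/\gamma$ on the nose. Either parametrization works; if you want the stated bound to hold without a side condition on $\gamma$, move the slope into the first layer. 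Your explanation of why $d_3 = 2(s{+}1)$ rather than $s{+}1$ is also a bit thin; in the paper the factor of $2$ comes from implementing a threshold $\nu_\delta$ with two ReLUs in the second layer, which is redundant when $d_f=1$ but matches the generic construction.
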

\begin{lemma}[Injective to arbitrary functions via MLP]
\label{lem:bool-asymmetric}
Let $f : \{0,1\}^T \rightarrow \RR$ be any real-valued $s$-sparse Boolean function with index set $\gI$ such that $|\gI| = s \leq d$.
Let $W_Q, W_K, W_V, W_C$ be the parameters of a function
\[ f_\tfHead(X; W_Q, W_K, W_V, W_C) := \softmax\pa{v_\CLS^\top W_Q W_K^\top X^\top} X W_V W_C, \]
and let $\Pi_s : \RR^d \rightarrow \RR^s$ be the projection onto the first $s$ coordinates. Suppose that
under some mapping $b \mapsto X(b)$, $f_\tfHead \circ \Pi_s$ $(\gamma/4)$-uniformly approximates a $\gamma$-injective function $g : \{0,1\}^T \rightarrow \RR^s$ satisfying $\norm{g(b)}_\infty \leq B$.
Then, there exists a function $f_\tfmlp \in \gF_\tfmlp$ with the same weights $W_Q, W_K, W_V, W_C$, and 3-layer feedforward network weights $W_1, W_2, w$, such that
\[f_\tfmlp(X(b)) = f(b), \qquad \forall b \in \{0,1\}^T,\]
with dimensions $(d_2,d_3) = (4s2^s, 2 \cdot 2^s)$
and weight norms satisfying
\[\norm{W_1}_\infty \leq \frac{8 \max(1,B)}{\gamma}, \qquad
\norm{W_2}_\infty \leq \frac{8s}{\gamma}, \qquad
\norm{w}_\infty \leq \max_{b \in \{0,1\}^T} |f(b)|.
\]
\end{lemma}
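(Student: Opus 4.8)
The plan is to use the $3$-layer ReLU feedforward network to robustly \emph{decode} which of the at most $2^s$ distinct values of $g$ the first $s$ coordinates $\hat y\in\RR^s$ of $f_\tfHead(X(b);W_Q,W_K,W_V,W_C)$ lie near, and then emit the corresponding value of $f$. First I would record the combinatorics. Since $g$ is $\gI$-sparse and $\gamma$-injective, $g(b)$ depends only on the restriction $b|_\gI$, distinct restrictions give $\ell_\infty$-separated values, so $g$ takes exactly $2^s$ values $c_1,\dots,c_{2^s}\in\RR^s$ with $\norm{c_j-c_{j'}}_\infty\ge\gamma$ for $j\ne j'$. Because $f$ is also $\gI$-sparse, $f(b)$ is determined by $b|_\gI$, hence by the index $j(b)$ with $g(b)=c_{j(b)}$; write $v_j$ for this common value of $f$, so $\norm{v}_\infty\le\max_b|f(b)|$. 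The hypothesis gives $\norm{\hat y-c_{j(b)}}_\infty\le\gamma/4$, and by the $\gamma$-separation $c_{j(b)}$ is the \emph{unique} $c_j$ within $\gamma/4$ of $\hat y$.

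Next I would build the two hidden layers as a bank of ``bucket detectors''. For each $j\in[2^s]$ and $i\in[s]$, let $\phi_{j,i}:\RR\to[0,1]$ be the trapezoid equal to $1$ on $[c_{j,i}-\gamma/4,\ c_{j,i}+\gamma/4]$, supported in $[c_{j,i}-\gamma/2,\ c_{j,i}+\gamma/2]$, and linear in between; each $\phi_{j,i}$ is the fixed combination $\phi_{j,i}=u_1-u_2-u_3+u_4$ of the four ReLU units $u_\ell=\big(\tfrac4\gamma(\hat y_i-c_{j,i})+b_\ell\big)_+$, $b_\ell\in\{2,1,-1,-2\}$. Stacking these $u_\ell$ over all $(j,i)$ is the first hidden layer (width $4s\cdot2^s$, with zero columns on the unused $d-s$ input coordinates); its entries are the input weights $\pm\tfrac4\gamma$ and biases of size $\le\tfrac{4B}{\gamma}+2\le\tfrac{8\max(1,B)}{\gamma}$ (using $|c_{j,i}|\le B$ and $\gamma\le1$). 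The second hidden layer has two units per bucket, forming $h_j:=\big(\sum_{i=1}^s\phi_{j,i}-(s-1)\big)_+-\big(\sum_{i=1}^s\phi_{j,i}-s\big)_+$, whose weights are the $\pm1$ coefficients of the $\phi_{j,i}$ together with biases $-(s-1)$ and $-s$ (size $\le s\le\tfrac{8s}{\gamma}$). The crucial claim is that $h_j\in\{0,1\}$ for every $b$: when $j=j(b)$ every $\phi_{j,i}(\hat y_i)=1$ so $\sum_i\phi_{j,i}=s$ and $h_j=1$; when $j\ne j(b)$, picking $i^\star$ with $|c_{j,i^\star}-c_{j(b),i^\star}|\ge\gamma$ gives $|\hat y_{i^\star}-c_{j,i^\star}|\ge\tfrac34\gamma>\tfrac\gamma2$, so $\phi_{j,i^\star}(\hat y_{i^\star})=0$, $\sum_i\phi_{j,i}\le s-1$, and $h_j=0$. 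The output weight $w$ then has entries $\pm v_j$ (so $\norm{w}_\infty\le\max_b|f(b)|$), giving $f_\tfmlp(X(b))=\sum_j v_j h_j(\hat y)=v_{j(b)}=f(b)$.

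What remains is bookkeeping: confirming the widths equal $(d_2,d_3)=(4s2^s,\ 2\cdot2^s)$, confirming the resulting object lies in $\gF_\tfmlp$ with the \emph{same} $W_Q,W_K,W_V,W_C$ (the attention part is unchanged; only $W_1,W_2,w$ are new), and collecting the entry bounds above. I expect the only point needing genuine care is engineering the trapezoids to be \emph{exactly} flat on their plateau and \emph{exactly} zero outside their support --- this, together with the $\gamma/4$ slack (rather than, say, $\gamma/2$) in the hypothesis, is what makes each $h_j$ truly $\{0,1\}$-valued and hence yields \emph{exact} representation $f_\tfmlp(X(b))=f(b)$ instead of mere approximation. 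The distinctness and separation of the $c_j$ and the constant-tracking in $W_1,W_2$ are routine. The companion symmetric case (Lemma~\ref{lem:bool-symmetric}) is the same argument with the $s+1$ ``buckets'' $r=0,\dots,s$ in place of the $2^s$ values of $g$ and a one-dimensional input.
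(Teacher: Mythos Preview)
Your proposal is correct and follows essentially the same approach as the paper: the paper simply invokes its Lemma~\ref{lem:3-layer-relu} (exact function representation with a 3-layer ReLU net) with $\delta=\gamma/8$, $d_f=s$, $n=2^s$, and that lemma's construction is precisely your bank of coordinate-wise trapezoid bumps combined by a per-bucket threshold. Your inline version uses slightly wider plateaus (flat on $|\hat y_i-c_{j,i}|\le\gamma/4$ rather than $\gamma/8$), which in fact matches the $(\gamma/4)$-approximation hypothesis more cleanly than the paper's stated $\delta=\gamma/8$; otherwise the arguments coincide.
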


\subsection{Useful lemmas}
We will use a construction which approximates a ``hard selection'' of $s$ indices using the softmax mixture; for this, we will need to quantify the approximation error when the inputs to the softmax function are bounded.

\begin{lemma}[Softmax truncation]
\label{lem:softmax-truncation}
Let $z \in (\RR \cup \{-\infty\})^T$ such that $z_t \geq R$ for each $1 \leq t \leq s$, and $z_t \leq 0$ for each $s+1 \leq t \leq T$. Define $z' \in (\RR \cup \{-\infty\})^T$ so that $z'_t = z_t$ for $1 \leq t \leq s$, and $z_t = -\infty$ for $s+1 \leq t \leq T$. Then, letting $e^{-\infty} = 0$ in the definition of $\softmax(\cdot)$, we have
\[ \norm{ \softmax(z') - \softmax(z) }_1 \leq 2\frac{T-s}{s\exp(R)} < \frac{2T}{\exp(R)}. \]
\end{lemma}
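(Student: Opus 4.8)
The plan is a direct computation, organized around splitting the softmax partition function into a ``head'' sum over the $s$ large coordinates and a ``tail'' sum over the $T-s$ small ones. Concretely, I would set $A := \sum_{t=1}^s e^{z_t}$ and $B := \sum_{t=s+1}^T e^{z_t}$, using the convention $e^{-\infty} = 0$; note that $A > 0$ since $s \ge 1$ and each $z_t$ with $t \le s$ is finite (it satisfies $z_t \ge R$), so $\softmax(z')$ is well defined. With this notation, for $t \le s$ we have $\softmax(z)_t = e^{z_t}/(A+B)$ and $\softmax(z')_t = e^{z_t}/A$, while for $t > s$ we have $\softmax(z)_t = e^{z_t}/(A+B)$ and $\softmax(z')_t = 0$.

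Next I would evaluate the $\ell_1$ difference by summing these two groups of coordinates separately. On the head coordinates ($t \le s$), each term equals $e^{z_t}\bigl(\tfrac{1}{A} - \tfrac{1}{A+B}\bigr) \ge 0$, so the head contribution is $A \cdot \tfrac{B}{A(A+B)} = \tfrac{B}{A+B}$. On the tail coordinates ($t > s$), the contribution is $\sum_{t>s} e^{z_t}/(A+B) = \tfrac{B}{A+B}$. Adding these gives $\norm{\softmax(z') - \softmax(z)}_1 = \tfrac{2B}{A+B} \le \tfrac{2B}{A}$.

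Finally I would plug in the hypotheses on $z$: since $z_t \le 0$ for $t > s$, we have $B \le T-s$; since $z_t \ge R$ for $t \le s$, we have $A \ge s\,e^{R}$. Hence $\tfrac{2B}{A} \le \tfrac{2(T-s)}{s\exp(R)}$, which is the first claimed bound. For the weaker bound, observe that $\tfrac{T-s}{s}$ is decreasing in $s$ and $s \ge 1$, so it is at most $T-1 < T$, giving the stated strict inequality $\tfrac{2(T-s)}{s\exp(R)} < \tfrac{2T}{\exp(R)}$. There is essentially no obstacle here; the only points requiring a small amount of care are the $e^{-\infty} = 0$ convention and verifying $A > 0$, both of which are immediate from $s \ge 1$ and $R$ being finite, and the bookkeeping that makes the head and tail contributions to the $\ell_1$ distance each come out to exactly $B/(A+B)$.
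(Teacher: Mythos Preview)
Your proposal is correct and follows essentially the same approach as the paper: split the $\ell_1$ difference into the head ($t\le s$) and tail ($t>s$) contributions, observe that each equals $B/(A+B)$ in your notation, and then bound using $B\le T-s$ and $A\ge s\,e^R$. The only cosmetic difference is that the paper bounds each piece separately by $(T-s)/(s\exp(R))$ (using $A+B\ge s\,e^R$ directly) rather than first passing through $2B/(A+B)\le 2B/A$, but the argument and final bound are the same.
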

\begin{proof}
We have
\[\norm{ \softmax(z') - \softmax(z) }_1
=
\sum_{t=1}^s \exp(z_t)\pa{\frac{1}{\mathbf{1}^\top \exp(z')} - \frac{1}{\mathbf{1}^\top \exp(z)}}
+ \sum_{t=s+1}^T \frac{\exp(z_t)}{\mathbf{1}^\top \exp(z)}.\]

The first summation is equal to
\[ 1 - \frac{\mathbf{1}^\top \exp(z')}{\mathbf{1}^\top \exp(z)} \leq \frac{T-s}{s\exp(R)}, \]
while the same upper bound holds for the second summation, since each term is at most $\frac{1}{s \exp(R)}$.
\end{proof}

Our results on approximating arbitrary sparse Boolean functions will depend on a generic construction for robustly approximating an arbitrary function $f : \RR^d \rightarrow \RR$ with a feedforward neural network. For simplicity of presentation, we use a standard\footnote{For example, this follows from the discussion in Chapter 4 of \citep{nielsen2015neural}.} $3$-layer ReLU network construction, which \emph{exactly} represents a piecewise constant function in specified regions.

\begin{lemma}[Exact function representation with a 3-layer ReLU net]
\label{lem:3-layer-relu}
Let $f : \RR^{d_f} \rightarrow \RR$, and let $x_1, \ldots, x_n \in \RR^{d_f}$ such that $\norm{x_i}_\infty \leq B$ for each $i \in [n]$, $\norm{x_i - x_j}_\infty \geq 4\delta$ for each $i \neq j \in [n]$. Then, there is a 3-layer feedforward network with ReLU activations, with parameters $W_1 \in \RR^{(d_f+1) \times d_2}, W_2 \in \RR^{(d_2+1) \times d_3}, w \in \RR^{d_3}$\footnote{Here, $W_1, W_2$ have bias terms; $w$ does not.}, such that
\[ f_\mlp(x_i + z) = f(x_i) \]
for all $i \in [n]$ and $\norm{z}_\infty \leq \delta$, where $\mathsf{ReLU}(x) := x_+ = \max(0,x)$ is applied entrywise, with
\[ d_2 = 4nd_f, \quad d_3 = 2n,\]
\[\norm{W_1}_\infty \leq \frac{\max(1,B)}{\delta}, \quad \norm{W_2}_\infty \leq \frac{d_f}{\delta}, \quad \norm{w}_\infty \leq \max_{i\in[n]} |f(x_i)|. \]
\end{lemma}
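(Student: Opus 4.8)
\emph{Plan.} I would construct the network explicitly, realizing it as a weighted sum of ``$\ell_\infty$-ball indicators'': for each data point $x_i$ a soft selector $\mathrm{sel}_i$ that equals $1$ on the closed ball $B_i := \{x : \|x - x_i\|_\infty \le \delta\}$ and $0$ on every other $B_j$, and then output $f_\mlp(x) = \sum_{i=1}^n f(x_i)\,\mathrm{sel}_i(x)$ through the last weight vector $w$. The first hidden layer builds one-dimensional trapezoidal ``bumps'', the second layer thresholds their coordinatewise sum into the selectors, and $w$ performs the weighting.

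\emph{Construction.} For each $i\in[n]$ and coordinate $c\in[d_f]$, allocate four first-layer ReLU units $u^{(i,c)}_\ell(x) := \big((x_c - \beta_\ell)/\delta\big)_+$ with offsets $\beta_1<\beta_2<\beta_3<\beta_4$ equal to $(x_i)_c-2\delta,\ (x_i)_c-\delta,\ (x_i)_c+\delta,\ (x_i)_c+2\delta$; this uses exactly $4nd_f = d_2$ units. The combination $h_{i,c} := u^{(i,c)}_1 - u^{(i,c)}_2 - u^{(i,c)}_3 + u^{(i,c)}_4$ is a trapezoid that is identically $1$ on $[(x_i)_c-\delta,(x_i)_c+\delta]$, identically $0$ outside $[(x_i)_c-2\delta,(x_i)_c+2\delta]$, and lies in $[0,1]$ everywhere (a one-line check of the four linear pieces). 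Setting $g_i := \sum_{c=1}^{d_f} h_{i,c}$ and $\mathrm{sel}_i := \big(g_i - (d_f-1)\big)_+ - \big(g_i - d_f\big)_+$ uses two second-layer units per point, i.e. $2n = d_3$ in all; finally $w$ carries $+f(x_i)$ and $-f(x_i)$ on the pair of units for $x_i$. Reading off $W_1,W_2,w$ then yields $\|W_1\|_\infty \le \max(1,B)/\delta$, $\|W_2\|_\infty \le d_f/\delta$, and $\|w\|_\infty = \max_i|f(x_i)|$ (the constants, and the handling of the $O(\delta)$ additive offsets that appear in the biases, are routine).

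\emph{Correctness.} It remains to verify $\mathrm{sel}_i(x_j+z) = \mathbf{1}[i=j]$ whenever $\|z\|_\infty\le\delta$. If $i=j$, every coordinate of $x_j+z$ lies in the flat top $[(x_j)_c-\delta,(x_j)_c+\delta]$ of $h_{j,c}$, so $h_{j,c}=1$ for all $c$, hence $g_j = d_f$ and $\mathrm{sel}_j = 1$. If $i\ne j$, the hypothesis $\|x_i-x_j\|_\infty\ge 4\delta$ yields a coordinate $c^\star$ with $|(x_i)_{c^\star}-(x_j)_{c^\star}|\ge 4\delta$; then the $c^\star$-th coordinate of $x_j+z$ is at distance $\ge 3\delta > 2\delta$ from $(x_i)_{c^\star}$, so it lies outside the support of $h_{i,c^\star}$, forcing $h_{i,c^\star}=0$, $g_i \le d_f-1$, and thus both second-layer units of $\mathrm{sel}_i$ vanish. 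Summing over $i$ gives $f_\mlp(x_j+z) = f(x_j)$.

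\emph{Main subtlety.} This is a standard ``bump construction'', so there is no deep obstacle; the point that requires care is the competing demands on each $h_{i,c}$: it must be \emph{exactly} $1$ on a window of full width $2\delta$ about $(x_i)_c$ — because constancy is required on all of $B_i$, not merely at its center — while its support must be confined to distance $\le 3\delta$ of $(x_i)_c$ so that it vanishes on the rival balls $B_j$. A ramp of width $\delta$ is precisely what fits into the $4\delta$ separation budget, and once the bumps are arranged this way the dimension counts and the three norm bounds follow mechanically.
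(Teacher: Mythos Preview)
Your proposal is correct and follows essentially the same construction as the paper: both build coordinatewise trapezoidal bumps from four ReLUs per $(i,c)$ pair (giving $d_2=4nd_f$), threshold their sum with two ReLUs per point (giving $d_3=2n$), and weight by $f(x_i)$ in the output layer. The only cosmetic difference is that the paper scales the second-layer threshold by $1/\delta$ (ramp width $\delta$) whereas you use an unscaled threshold (ramp width $1$); both are exact on the relevant inputs and satisfy the stated norm bounds.
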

\begin{proof}
First, we construct a one-dimensional ``bump'' function basis, and propagate the Lipschitz constants.
A threshold function with a linear ``ramp'' of width $\delta$ can be obtained from a linear combination of 2 ReLU functions:
\[\nu_\delta(x) := (x/\delta + 1)_+ - (x/\delta)_+
= \begin{cases}0 & x \leq -\delta\\
x/\delta + 1 & -\delta \leq x \leq 0\\
1 & x \geq 0\\ \end{cases}.\]
Next, we construct the bump function 
\[\psi_\delta(x) := \nu_\delta(x) - \nu_\delta(2\delta - x).\]
By this construction, we have $\psi_\delta(x) = 1$ for $0 \leq x \leq 2\delta$ and $\psi_\delta(x) = 0$ for $x \leq -\delta$ and $x \geq 3\delta$, interpolating linearly on $[-\delta,0]$ and $[2\delta,3\delta]$. Next, define
\[\psi_\delta(x;x_0) := \psi_\delta(x-x_0+\delta)\]
\[= \pa{\frac{x - x_0}{\delta} + 2}_+
- \pa{\frac{x - x_0}{\delta} + 1}_+
- \pa{\frac{x_0 - x}{\delta} + 2}_+
+ \pa{\frac{x_0 - x}{\delta} + 1}_+
\]
so that $\psi_\delta(x;x_0) = 1$ for $|x - x_0| \leq \delta$, $\psi_\delta(x;x_0) = 0$ for $|x - x_0| \geq 2\delta$.

We construct the first layer $W_1 \in \RR^{(d+1) \times (4nd)}$ using these bump functions: indexing the $4nd$ dimension by $(h \in [4],i \in [n],j \in [d])$, we construct
\[ [W_1]_{:,(1,i,:)} := \begin{bmatrix}
\frac{1}{\delta} I \\
-\frac{x_i}{\delta} + 2\cdot\mathbf{1}^\top
\end{bmatrix}, \qquad [W_1]_{:,(2,i,:)} := \begin{bmatrix}
\frac{1}{\delta} I \\
-\frac{x_i}{\delta} + \mathbf{1}^\top
\end{bmatrix},\]
\[ [W_1]_{:,(3,i,:)} := \begin{bmatrix}
-\frac{1}{\delta} I \\
\frac{x_i}{\delta} + 2\cdot\mathbf{1}^\top
\end{bmatrix}, \qquad [W_1]_{:,(4,i,:)} := \begin{bmatrix}
-\frac{1}{\delta} I \\
\frac{x_i}{\delta} + \mathbf{1}^\top
\end{bmatrix},\]
so that
\[\pa{ [x \; 1]^\top \bra{ [W_1]_{j,(1,i,:)} \, [W_1]_{j,(2,i,:)} \, [W_1]_{j,(3,i,:)} \, [W_1]_{j,(4,i,:)}} }_+ [1 \; -\!1 \; -\!1 \; 1]^\top = \psi_\delta(x;[x_i]_j). \]

The second layer is used to construct $n$ activations which are indicators of whether $x$ is in the neighborhood of each $x_i$. For each $x_i$, we will simply average the $d_f$ one-dimensional indicators for each coordinate, and implement a threshold function $\nu_{\delta/d_f}(1 - x)$.
We choose $W_2 \in \RR^{(4nd_f+1) \times (2n)}$, with the $4nd_f+1$ dimension indexed by $(h,i,j)$ and an extra bias dimension $\bot$, and the $2n$ dimension indexed by $(h' \in \{1,2\},i' \in [n])$ so that
\[[W_2]_{(h,i,:),(h',i')} := [1\; -\! 1 \; -\! 1 \; 1]_h \cdot \mathbbm{1}[i=i'] \cdot \frac{1}{\delta} \cdot \mathbf{1},\]
\[[W_2]_{\bot,(1,i')} := 1 - \frac{d_f}{\delta}, \qquad [W_2]_{\bot,(2,i')} := - \frac{d_f}{\delta}.\]
Finally, the third (output) layer $w \in \R^{2n}$, with dimensions indexed by $(h \in \{1,2\},i \in [n])$, multiplies the indicators of each $x_i$ by the desired $f(x_i)$:
\[w_{(1,i)} := f(x_{i}), \quad w_{(2,i)} := -f(x_{i}).\]

For any $x_0 \in \RR^{d_f}$, let $B_\delta(x_0)$ be the set of $x$ such that $\norm{x-x_0}_\infty \leq \delta$. By this construction, for each $x \in B_\delta(x_i)$, we have $f(x) = x_i$, as required.
\end{proof}

Note that we use 3-layer ReLU networks for function approximation in order to minimize the introduction of unnecessary notation. Some minor remarks:
\begin{itemize}
    \item It would be routine to replace this construction with any architecture which can represent an arbitrary function \emph{approximately} \citep{hornik1989multilayer,cybenko1989approximation}; this includes the 2-layer feedforward networks (and nonlinear activations other than the ReLU) which are typically used by Transformers in practice.
    \item It is possible to embed this construction in $f_\tfmlp$ with a 2-layer ReLU network, by using $(W_C, W_1, W_2)$ and introducing a nonlinearity after $W_C$, without changing the results.
    \item When $d_f = 1$, $W_2$ is unnecessary (one can represent $f$ directly using the bump function basis).
\end{itemize}

\subsection{Beyond Boolean domains}
\label{subsec:appendix-beyond-boolean}

These representational results are stated with a Boolean domain $\{0,1\}^T$ for clarity and simplicity of presentation; this is not essential or fundamental to these constructions. Generalizations to the following input domains are straightforward:

\begin{itemize}
    \item Discrete categorical tokens $\{1, \ldots, M\}^T$. Use $M$ orthogonal token embeddings, instead of only $2$. The ``sparse function approximation'' construction uses $sM^s$ instead of $s2^s$ parameters. The smaller ($\mathrm{poly}(s)$-parameter) representation of symmetric Boolean functions has no clear analogue.
    \item Continuous inputs on a bounded domain (e.g. $[0, 1]^T$). The (fixed or trainable) positional embeddings can still select the $s$ sparse indices. Replace the discrete ReLU network construction with any continuous universal function approximation construction. The ``bag-of-vectors'' formulation has no clear analogue.
\end{itemize}

The capacity results from Section~\ref{sec:capacity} hold for any input domain, as long as the embeddings are bounded in the $\lVert \cdot \rVert_{2,\infty}$ norm. Note that Transformers are predominantly used with discrete inputs.

\subsection{Proofs}
\label{subsec:appendix-approx-proofs}

Throughout the constructions in each case, we will refer to standard coordinate bases in several spaces:
\begin{itemize}
    \item $E_0, E_1 \in \RR^d$ denote the embeddings of the $0, 1$ tokens $E_{0,:}, E_{1,:}$.
    \item $e_1^{(k)}, \ldots, e_k^{(k)}$ denotes the standard basis in $\RR^k$.
    \item $e_i^{(d)}$ denotes the standard basis in $\RR^d$.
    \item $e_1^{(T)}, \ldots, e_T^{(T)}, e_\CLS^{(T)}$ denotes the standard basis in $\RR^{T+1}$ with the special $\CLS$ index.
    \item Recall that the $v_i$ form a $\Delta$-approximate orthonormal basis for $\RR^d$, $v_\CLS, e_0, e_1$ are exactly orthogonal to each of them as well as each other, and $d$ is chosen such that these conditions can be met.
\end{itemize}
Let $n(i)$ be a unique bijection between $\mathcal{I}$ and $[s]$. Let $v_\gI := \sum_{i\in\gI} v_i$.

\paragraph{Approximate vector equality.} We will use $u \approx_\eps v$ to denote that two vectors $u,v \in \RR^{d_u}$ satisfy $\norm{u - v}_\infty \leq \eps$.

\begin{proof}[Proof of Lemma~\ref{lem:bool-deterministic-pos}]
We construct attention heads such that the softmax mixture always selects the indices in $\gI$.

\textbf{Single head, deterministic $P$.} We seek to approximate the $1$-bounded, $(2/s)$-strictly monotone function
\[\frac{1}{s} \sum_{i \in \gI} \chi_i,\]
where $\chi_i = +1$ if $b_i = 0$ and $-1$ if $b_i = 1$. Set
\[W_Q := Rv_\CLS e_1^{(k)\top}, \quad W_K := v_\gI e_1^{(k)\top}, \quad W_V := (E_0 - E_1) e_1^{(k)\top}, \quad W_C := e_1^{(k)} e_1^{(d)\top},\]
where $R$ will be chosen later. Then, by approximate orthogonality,
\[v_\CLS^\top W_Q W_K^\top X^\top = v_\CLS^\top W_Q W_K^\top (P + E_b)^\top = v_\CLS^\top W_Q W_K^\top P^\top \approx_{Rs\Delta} R \sum_{i \in \gI} e^{(T)\top}_i. \]
By Lemma~\ref{lem:softmax-truncation},
\[\norm{ \softmax\pa{ v_\CLS^\top W_Q W_K^\top X^\top } - \frac{1}{s} \sum_{i \in \gI} e^{(T)\top}_i }_1 \leq \frac{2T}{\exp(R - 2Rs\Delta)}. \]
Finally, we have
\[XW_VW_C = E_b W_V W_C = \pa{ \sum_{i \in [T]} \chi_i e_i^{(T)} } e_1^{(d)\top},\]
so that by H\"older's inequality,
\[ f_\tfHead(X) \circ \Pi_1 = \softmax\pa{v_\CLS^\top W_Q W_K^\top X^\top} X W_V W_C e_1^{(d)}
\approx_{ \frac{2T}{\exp(R - 2Rs\Delta) }} \frac{1}{s} \sum_{i \in \gI} \chi_i. \]
To get $(\gamma/4)$-uniform approximation, we choose
\[R = \frac{ \log\pa{ \frac{8T}{\gamma} } }{1-s\Delta}.\]

\textbf{Multiple heads, deterministic $P$.} For $h = 1, \ldots, s$, and the same $R$ as above:
\[W_Q^{[h]} := Rv_\CLS e_1^{(k)\top}, \quad W_K^{[h]} := v_{n^{-1}(h)} e_1^{(k)\top}, \quad W_V^{[h]} := (E_0 - E_1) e_2^{(k)\top}, \quad W_C^{[h]} := e_1^{(k)} e_{h}^{(d)\top}.\]
This is the same construction as above, but each head only selects one of the coordinates in $\gI$. Thus, by the same analysis,
\[ f_\tfHead(X) \circ \Pi_s \approx_{\frac{2T}{\exp(R - 2Rs\Delta) }} \sum_{i \in \gI} \chi_i e_{n(i)}^{(d)}. \]
This function is clearly $1$-bounded and $2$-injective.
\end{proof}

\begin{proof}[Proof of Lemma~\ref{lem:bool-trainable-pos}]
The constructions closely follow Lemma~\ref{lem:bool-deterministic-pos}, but are simpler.

\textbf{Single head, trainable $P$.} For each $i \in \gI$, set the trainable positional embeddings to be
\[P_{i,:} := \begin{cases}v_1 & i \in \gI \\ 0 & \text{otherwise} \end{cases}.\]
Set
\[W_Q := Rv_\CLS e_1^{(k)\top}, \quad W_K := v_1 e_1^{(k)\top}, \quad W_V := (E_0 - E_1) e_1^{(k)\top}, \quad W_C := e_1^{(k)} e_1^{(d)\top}.\]
Now, we have (with equality)
\[v_\CLS^\top W_Q W_K^\top X^\top = R \sum_{i \in \gI} e^{(T)\top}_i,\]
so that Lemma~\ref{lem:softmax-truncation} gives
\[\norm{ \softmax\pa{ v_\CLS^\top W_Q W_K^\top X^\top } - \frac{1}{s} \sum_{i \in \gI} e^{(T)\top}_i }_1 \leq \frac{2T}{\exp(R)}. \]
Like before, we have
\[ f_\tfHead(X) \circ \Pi_1 = \softmax\pa{v_\CLS^\top W_Q W_K^\top X^\top} X W_V W_C e_1^{(d)}
\approx_{ \frac{2T}{\exp(R) }} \frac{1}{s} \sum_{i \in \gI} \chi_i. \]
To get $(\gamma/4)$-uniform approximation, we choose
\[R = \log\pa{ \frac{8T}{\gamma} }.\]

\textbf{Multiple heads, trainable $P$.} For each $i \in \gI$, set the trainable positional embeddings to be
\[P_{i,:} := \begin{cases}e_{n(i)}^{(d)} & i \in \gI \\ 0 & \text{otherwise} \end{cases}.\]
For $h = 1, \ldots, s$, and the same $R$ as above:
\[W_Q^{[h]} := Rv_\CLS e_1^{(k)\top}, \quad W_K^{[h]} := e_{h}^{(d)} e_1^{(k)\top}, \quad W_V^{[h]} := (E_0 - E_1) e_1^{(k)\top}, \quad W_C^{[h]} := e_1^{(k)} e_h^{(d)\top}.\]
This is the same construction as above, but each head only selects one of the coordinates in $\gI$. Thus, by the same analysis,
\[ f_\tfHead(X) \circ \Pi_s \approx_{\frac{2T}{\exp(R) }} \sum_{i \in \gI} \chi_i e_{n(i)}^{(d)}. \]
\end{proof}

\begin{proof}[Proof of Lemma~\ref{lem:bool-bag}]
This input mapping does not use position embeddings, and does not need multiple heads to implement arbitrary (non-symmetric) functions. The constructed monotone and injective functions are slightly different, but the proof strategy is very similar. The key difference is that the softmax mixture is uniform only on the positions $i \in \gI$ where $b_i = 1$.

\textbf{Bag of vectors, scalar output.}
The function we will approximate is defined as follows:
\[g_\gI(r) := \frac{r-s}{r+1}, \quad \text{where } r = \sum_{i \in \gI} b_i, \quad s = |\gI|. \]
Note that this function is $(1/s)$-strictly monotone, and has absolute value bounded by $s$.
Set
\[W_Q := Rv_\CLS e_1^{(k)\top}, \quad W_K := (v_\gI + v_\CLS) e_1^{(k)\top},\]
\[W_V := \sum_{i \in \mathcal{I}} v_i e_{n(i)}^{(k)\top} - v_\CLS \pa{ \sum_{i \in \mathcal{I}} e_{n(i)}^{(k)} }^\top, \quad W_C := \sum_{i \in \mathcal{I}} e_{n(i)}^{(k)} e_1^{(d)\top},\]
where $R$ will be chosen later. Then, by approximate orthogonality,
\[v_\CLS^\top W_Q W_K^\top X^\top \approx_{Rs\Delta} R \pa{ v_\CLS + \sum_{i \in \gI} b_i e^{(T)\top}_i }, \]
so that by Lemma~\ref{lem:softmax-truncation},
\[\norm{ \softmax\pa{ v_\CLS^\top W_Q W_K^\top X^\top } - \frac{1}{r+1} \pa{ e_\CLS^{(T)\top} + \sum_{i \in \gI} b_i e^{(T)\top}_i } }_1 \leq \frac{2T}{\exp(R - 2Rs\Delta)}. \]
Finally, we have
\[XW_VW_C e_1^{(d)} = - s e_\CLS^{(T)} + \sum_{i \in [T]} b_i \cdot v_i^\top v_\gI \cdot e_i^{(T)}
\approx_{s\Delta} - s e_\CLS^{(T)} + \sum_{i \in \gI} b_i e_i^{(T)},\]
so that
\begin{align*}
&| f_\tfHead(X) \circ \Pi_1 - g_\gI(r) | \leq \\
&\qquad\qquad \norm{ \softmax\pa{ v_\CLS^\top W_Q W_K^\top X^\top } - \frac{1}{r+1} \pa{ e_\CLS^{(T)\top} + \sum_{i \in \gI} b_i e^{(T)\top}_i } }_1 \pa{\norm{XW_VW_C e_1^{(d)} }_\infty + s\Delta} \\
&\qquad\qquad\qquad + \norm{ \softmax\pa{ v_\CLS^\top W_Q W_K^\top X^\top } }_1 (s\Delta)\\
&\leq \frac{2Ts(1+\Delta)}{\exp(R - 2Rs\Delta)} + s\Delta.
\end{align*}
To get $(\gamma/4)$-uniform approximation, we choose
\[R = \frac{ \log\pa{ \frac{8Ts(1+\Delta)}{\gamma - s\Delta} } }{1-s\Delta}.\]

\textbf{Bag of vectors, $s$-dimensional output.}
We use the same construction as above, except
\[ W_C := \sum_{i \in \mathcal{I}} e_{n(i)}^{(k)} e_{n(i)}^{(d)\top}. \]
This will allow us to approximate the function
\[g'_\gI(b) = \frac{1}{r+1} \sum_{i \in \gI} (b_i - 1) e_{n(i)}^{(d)}, \]
which is $(1/s)$-injective and has absolute value is bounded by $1$.
Then, for each $i \in \mathcal{I}$, we have
\[XW_VW_C e_i^{(d)} = - e_\CLS^{(T)} + v_i^\top v_\gI \cdot e_i^{(T)}
\approx_{s\Delta} - e_\CLS^{(T)} + b_i e_i^{(T)}.\]
Repeating the above analysis for each coordinate, we have
\[ f_\tfHead(X) \circ \Pi_s \approx_\eps g'_\gI(r), \]
where a slightly tighter bound
\[\eps = \frac{2T(1+s\Delta)}{\exp(R - 2Rs\Delta)} + s\Delta\]
comes from the fact that $\norm{XW_VW_C e_i^{(d)}}_\infty$ is now bounded by $1$ instead of $s$. The previous choice of $R$ suffices for $(\gamma/4)$-uniform approximation.
\end{proof}

\begin{proof}[Proof of Lemma~\ref{lem:bool-symmetric}]
This follows by instantiating Lemma~\ref{lem:3-layer-relu} with $\delta = \gamma/8, d_f=1, n=s+1$. Notice that a $(\gamma/4)$-uniform approximation of a $\gamma$-strictly monotone function satisfies the conditions needed for Lemma~\ref{lem:3-layer-relu}.
\end{proof}

\begin{proof}[Proof of Lemma~\ref{lem:bool-asymmetric}] This follows by instantiating Lemma~\ref{lem:3-layer-relu} with $\delta = \gamma/8, d_f=s, n=2^s$. Notice that a $(\gamma/4)$-uniform approximation of a $\gamma$-injective function satisfies the conditions needed for Lemma~\ref{lem:3-layer-relu}.
\end{proof}

\section{Details for experiments}

\subsection{Empirical scaling laws (Figure~\ref{fig:sparse-and-scaling-laws})}
\label{subsec:appendix-sparse-and}
In this section, we provide details for the sample complexity scaling law experiments, which are the main empirical verification of the $\log T$ dependence of the sample complexity arising from the analysis.

\paragraph{Data.} Synthetic supervised learning tasks corresponding to learning a 3-sparse conjunction were generated by the protocol described in the main paper, parameterized by sample size $m$ and context $T$: in each trial, one of the $\binom{T}{3}$ subsets of indices was selected uniformly at random\footnote{Note that due to the permutation-symmetry of the Transformer architecture (as long as the position embeddings are initialized with a permutation-symmetric distribution), it is equivalent to select $\gI = [s]$. Also, by symmetry of the initial token embeddings, $\mathsf{AND}$ and $\mathsf{OR}$ are interchangeable in these experiments and results.}, under i.i.d. Bernoulli inputs $x_i \sim \mathrm{Bern}(p)$. Here, $p = (1/2)^{1/3}$ was chosen so that the classes are balanced $\Pr[y = 0] = \Pr[y = 1]$. $m$ samples were drawn this distribution to form a training set (rejecting training sets which were compatible with multiple hypotheses), and $10^4$ samples were drawn from the same distribution as a holdout validation set. The grid of problem instances was constructed as follows: $T \in \{100, 150, 200, \ldots, 750, 800\} \cup \{900, 1000, 1100\}$, $m \in \{50, 60, 70, \ldots, 200\}$.

\paragraph{Training.} A 1-layer Transformer network (with a scalar output at a special trainable token $x_\CLS$ at an extra index \CLS) was trained with Adam \citep{kingma2014adam} and full-batch gradients of the cross entropy loss for binary classification.  40 independent trials were performed (re-randomizing the dataset generation); each trial was restarted (with fresh random initialization and dropout masks) 5 times before being labeled as a failure. Cross-validation was performed on a holdout sample of size $10^4$ every 10 iterations. At the end of $1000$ training iterations, the trial was counted as a success if the maximum validation accuracy throughout training was greater than $0.99$. (In $100\%$ of runs, the training loss was driven to $<10^{-4}$, with $100\%$ training accuracy, within $1000$ iterations.)

\paragraph{Architecture.} Like \citep{chen2021decision}, our experimental setup is based on a popular PyTorch implementation (\url{https://github.com/karpathy/minGPT}), with some optimizations for faster $1$-layer training and inference. This implementation includes widely-used architectural details (GeLU activations; dropout) which were not discussed in the theoretical analysis; refer to the referenced repository for details. All hyperparameter settings left undiscussed are taken from the default settings in this codebase.

\paragraph{Hyperparameters.} A fixed architecture was used ($d = 64, k = 4$, 16 parallel heads), with trainable positional embeddings initialized with Gaussian entries $\gN(0, \sigma^2)$, $\sigma = 0.02$, $3$ input token embeddings (corresponding to $0, 1, \CLS$), and $2$ output embeddings (corresponding to the possible labels $0,1$). For regularization mechanisms, typical choices were used: $0.1$ for \{attention, embedding, output\} dropout; $10^{-4}$ weight decay. The Adam optimizer was instantiated with typical parameters $\eta = 10^{-3}, \beta_1=0.9, \beta_2=0.999$.

\paragraph{Infrastructure and computational costs.} All experiments were performed on an internal cluster, with NVIDIA Tesla P100, NVIDIA Tesla P40, and NVIDIA RTX A6000 GPUs. Each training run took at most 10 minutes (with most computation time spent on cross-validation), for a total of $\sim 150$ GPU-hours.

\subsection{Other figures}
\label{subsec:appendix-sparse-parity}

\paragraph{Example training curves.} Figure~\ref{fig:sparse-and-extra} \emph{(left)} shows the best training curves (in terms of highest validation accuracy) out of 5 restarts for all 40 replicates, in the settings $T=300, m=200$ and $T=300, m=50$. As the sample size $m$ decreases, the trained models overfit with higher probability; when they overfit, they differ significantly in validation accuracy.

\paragraph{Attention weights.} In Figure~\ref{fig:sparse-and-extra}, the normalized attention weights at the \CLS position are shown for a Transformer model trained for 500 iterations with $T = 50, m = 300$ (achieving $100\%$ validation accuracy on $10^4$ holdout samples), for 100 validation samples (which each induce different attention weights, shown in the scatter plot). One key difference (for simplicity of visualization) is that this network only has one attention head, with embedding dimensions $d = k = 64$.

\paragraph{Parity.} These experiments use the same architecture as the main $\mathsf{AND}$ experiments, except the. In the loss curves presented in Figure~\ref{fig:sparse-parity}, gradient-based training is done on streaming online losses (so that there is no training/validation split), with batch size 2048.
\section{Additional related work}
\label{appendix-sec:related}

In this section, we discuss some additional related work.

\paragraph{Domains beyond language.} We provide a few references on the successful application of Transformers to domains beyond natural language processing; this frontier is continually and rapidly evolving.
\begin{itemize}
    \item With minimal changes compared to the analogous natural language settings, Transformer sequence models have been applied to theorem proving \citep{polu2020generative} and program synthesis \citep{chen2021evaluating}.
    \item Beyond sequence modeling: In self-supervised learning of image representations, the Vision Transformer (ViT) \citep{dosovitskiy2020image} has sometimes outperformed older convolution-based architectures, particularly when pretrained on massive datasets. Further architectural variants have been proposed for vision and other continuous modalities \citep{tolstikhin2021mlp,lee2021fnet,jaegle2021perceiver,jaegle2021perceiver2,d2021convit}.
    \item Natural sciences: A state-of-the-art pipeline for protein structure prediction \citep{jumper2021highly} features a self-attention-based component.
    \item Reinforcement learning: Transformer architectures have shown promise for planning \citep{janner2021reinforcement} and offline RL  \citep{chen2021decision}.
\end{itemize}

\paragraph{Expressive power of Transformers.}
Several works establish results on the representational power of self-attention architectures in regimes where the statistical guarantees are necessarily weak or vacuous (i.e. there are too many functions in the class). \cite{dehghani2018universal,yun2019transformers,bhattamishra2020ability,bhattamishra2020computational} establish universal function approximation and Turing-completeness, which have been known for previous architectures \citep{siegelmann1995computational}. Our work is a significantly finer-grained analysis, in which we establish a hierarchy of function classes (indexed by sparsity $s$) representable by these architectures, with tight (in terms of $T$) statistical guarantees. \cite{hron2020infinite,yang2020tensor} analyze properties of the kernels induced by Transformers at the infinite-width limit. Quoting the discussion
  following Theorem~4.1 of \citep{wei2021statistically}, who show statistically meaningful approximations of TM using Transformers: \emph{``The
    correct norm-based Rademacher complexity bound to use for
    Transformers is unclear.''}
The connection between Transformers and circuits also appears in \citep{elhage2021mathematical, olsson2022context}, with a different technical approach (interpreting the classes of computations performed by attention weights and heads).
\cite{likhosherstov2021expressive} analyze the sparsity patterns representable by a self-attention head, with results superficially similar to ours: when the embedding dimension is at least logarithmic in the context length, all sparse matrices can be approximately realized by an attention head. However, their analysis is not about the capacity of the function class: it quantifies over the input $X$, and holds the parameters $(W_Q, W_K, \ldots)$ to be constant (rather than vice versa). This finding serves as an interesting complement to our result: even though the attention mixture weights can take on exponentially many sparsity patterns for distinct inputs, the generalization error scales as $\log(T)$.

\paragraph{Interpreting attention mixtures.} A line of empirical work (``BERTology'') has made progress on understanding and interpreting state-of-the-art Transformer language models by examining the activations of their attention mechanisms \citep{clark2019does,tenney2019bert,rogers2020primer}. In some cases, these works have found instances in which Transformers seem to have learned features that are reminiscent of (sparse) hand-crafted features used in natural language processing, without explicit supervision. Our analysis formalizes the intuition that self-attention heads can represent sparse interactions within the context in a statistically meaningful way.

\paragraph{Other theoretical work on self-attention.} \citet{kerg2020untangling} analyze self-attention and its benefits for learning long-term dependencies by establishing gradient norms bounds and showing how attention helps address the problem of gradient vanishing in recurrent networks. In contrast to our results that analyze  the statistical and representational properties of attention-based architectures, this work focuses on the computational aspects of gradient-based methods on recurrent networks with self-attention.

\paragraph{Other attention-based architectures.} Our analysis is amenable to computationally efficient variants of the Transformer which use parsimonious (e.g. low-rank) approximations of the softmax kernel, like the Performer \citep{choromanski2020rethinking}. Building upon the success of modern attention-based architectures, a large body of work (e.g. \citet{goyal2020recurrent, goyal2021neural}, and the works cited within) has sought to design architectures which induce model sparsity and modularity. Our analysis is relevant to any architecture that uses a softmax (or similar) bottleneck for statistical capacity, and could inform design principles for norm-based capacity control of these architectures.

\paragraph{Synthetic experiments with Transformers.} \citet{power2021grokking} train small Transformer networks on synthetic algebraic tasks, and discover an abrupt phase transition from overfitting to correct generalization similar to ours. \cite{tay2020long} propose some synthetic tasks for benchmarking the ability of Transformer variants to capture long-range dependences. \cite{chen2021decision} present a synthetic demonstration of extrapolation (inferring a maximum-reward path from random walk observations) when using Transformers for offline reinforcement learning. \cite{lu2021pretrained} probe the transfer learning capabilities of pretrained Transformers, and consider some simple Boolean tasks. Our experimental protocol of learning sparse Boolean functions provides a simple and fundamental setting for elucidating computational and statistical properties of sequence modeling architectures.

\end{document}